\def\1{\bm{1}}
\def\beps{\boldsymbol{\epsilon}}
\def\vw{{\bm{w}}}
\DeclareMathAlphabet{\mathsfit}{\encodingdefault}{\sfdefault}{m}{sl}
\SetMathAlphabet{\mathsfit}{bold}{\encodingdefault}{\sfdefault}{bx}{n}
\newcommand{\be}{\boldsymbol{e}}
\newcommand{\bk}{\boldsymbol{k}}
\newcommand{\bu}{\boldsymbol{u}}
\newcommand{\bv}{\boldsymbol{v}}
\newcommand{\bw}{\boldsymbol{w}}
\newcommand{\bx}{\boldsymbol{x}}
\newcommand{\bz}{\boldsymbol{z}}
\newcommand{\rank}{{\rm{rank}}}
\newcommand{\brho}{\bm{\rho}}
\newcommand{\btheta}{\boldsymbol{\theta}}
\newcommand{\cI}{\mathcal{I}}
\newcommand{\cL}{\mathcal{L}}
\newcommand{\cO}{\mathcal{O}}
\newcommand{\cP}{\mathcal{P}}
\newcommand{\cS}{\mathcal{S}}
\newcommand{\bbA}{\mathbb{A}}
\newcommand{\bbB}{\mathbb{B}}
\newcommand{\bbC}{\mathbb{C}}
\newcommand{\bbR}{\mathbb{R}}
\newcommand{\bbS}{\mathbb{S}}
\newcommand{\bzero}{\mathbf{0}}
\newcommand{\pll}{\kern 0.56em/\kern -0.8em /\kern 0.56em}
\newcommand{\norm}[1]{\ensuremath{\left\| #1 \right\|}}
\newcommand{\bracket}[1]{\ensuremath{\left( #1 \right)}}
\newcommand{\Proj}{{\rm Proj}}
\newcommand{\sgn}{{\rm sgn}}
\newcommand{\<}{\left\langle}
\renewcommand{\>}{\right\rangle}
\theoremstyle{plain}
\newtheorem{theorem}{Theorem}[section]
\newtheorem{proposition}[theorem]{Proposition}
\newtheorem{lemma}[theorem]{Lemma}
\theoremstyle{definition}
\newtheorem{definition}[theorem]{Definition}
\newtheorem{assumption}[theorem]{Assumption}
\theoremstyle{remark}
\newtheorem{remark}[theorem]{Remark}
\newtheorem{dataset}{Dataset}
\icmltitlerunning{Achieving Margin Maximization Exponentially Fast via Progressive Norm Rescaling}
\begin{document}

\twocolumn[
\icmltitle{Achieving Margin Maximization Exponentially Fast \\ via Progressive Norm Rescaling}




\begin{icmlauthorlist}
\icmlauthor{Mingze Wang}{math}
\icmlauthor{Zeping Min}{math}
\icmlauthor{Lei Wu}{math,cmlr}
\end{icmlauthorlist}

\icmlaffiliation{math}{School of Mathematical Sciences, Peking University, Beijing, China}
\icmlaffiliation{cmlr}{Center for Machine Learning Research, Peking University, Beijing, China}

\icmlcorrespondingauthor{Mingze Wang}{mingzewang@stu.pku.edu.cn}
\icmlcorrespondingauthor{Lei Wu}{leiwu@math.pku.edu.cn}

\icmlkeywords{Machine Learning, ICML}

\vskip 0.3in
]



\printAffiliationsAndNotice{}  

\begin{abstract}
In this work, we investigate the margin-maximization bias exhibited by gradient-based algorithms in classifying linearly separable data. We present an in-depth analysis of the specific properties of the velocity field associated with (normalized) gradients, focusing on their role in margin maximization. Inspired by this analysis, we propose a novel algorithm called Progressive Rescaling Gradient Descent (PRGD) and show that PRGD can maximize the margin at an {\em exponential rate}. This stands in stark contrast to all existing algorithms, which maximize the margin at a slow {\em polynomial rate}. Specifically, we identify mild conditions on data distribution under which existing algorithms such as gradient descent (GD) and normalized gradient descent (NGD) {\em provably fail} in maximizing the margin efficiently. To validate our theoretical findings, we present both synthetic and real-world experiments. Notably, PRGD also shows promise in enhancing the generalization performance when applied to linearly non-separable datasets and deep neural networks.
\end{abstract}

\section{Introduction}

In modern machine learning, models are often over-parameterized in the sense that they can easily interpolate all training data, giving rise to a loss landscape with many global minima. Although all these minima yield zero training loss, their generalization ability can vary significantly. Intriguingly, it is often observed that Stochastic Gradient Descent (SGD) and its variants consistently converge to solutions with favorable generalization properties even without needing any explicit regularization~\citep{neyshabur2014search,zhang2017understanding}. This phenomenon implies that the ``implicit bias'' of SGD plays a crucial role in ensuring the efficacy of deep learning; therefore, revealing the underlying mechanism is of paramount importance.

\citet{soudry2018implicit} investigated implicit bias of GD for classifying linearly separable data with linear models. They showed that GD trained with exponentially-tailed loss functions can implicitly maximize the $\ell_2$-margin during its convergence process, ultimately locating a max-margin solution.
This discovery offers valuable insights into the superior generalization performance often observed with GD, as larger margins are generally associated with better generalization~\citep{boser1992training,bartlett2017spectrally}.
However, the  rate at which GD maximizes the margin has been shown to be merely  $\mathcal{O}(1/\log t)$ \citep{soudry2018implicit}. This naturally leads to the question: can we design a better gradient-based algorithm to  accelerate the margin maximization.
In the pursuit of this, \citet{nacson2019convergence,ji2021characterizing} has demonstrated that employing  GD with aggressively loss-scaled step sizes can achieve  polynomial rates in  margin maximization. Notably,  \citet{ji2021characterizing} specifically established that the rate of NGD is $\mathcal{O}(1/t)$. 
Building on this, \citet{ji2021fast} further introduced a momentum-based gradient method by applying Nesterov acceleration to the dual formulation of this problem, which achieves a remarkable margin-maximization rate of $\mathcal{O}(\log t/t^2)$ and  \citet{wang2022accelerated} further improved it to $\cO(1/t^2)$, currently standing as the state-of-the-art algorithm for this problem.

\textbf{Our Contributions.} 
In this paper, we begin by introducing a toy dataset to elucidate the causes of inefficiency in GD/NGD and to clarify the underlying intuition for accelerating margin maximization. Subsequently, we demonstrate that these insights are applicable to a broader range of scenarios. 
\begin{itemize}[leftmargin=2em]
    \item We reveal that the rate of directional convergence and  margin maximization is governed by  the centripetal velocity--the component orthogonal to the max-margin direction. We show that under mild conditions on data distribution,  NGD and GD will inevitably be trapped in a region where the centripetal velocity is diminished, thereby explaining the inefficiency of GD/NGD. Specifically, we establish that the aforementioned margin-maximization rates: $\mathcal{O}(1/\log t)$ for GD and  $\mathcal{O}(1/t)$ for NGD also serve as {\em lower bounds}.

    \item Based on the above observations, we propose to speed up the margin maximization by maintaining a non-degenerate centripetal velocity. We show that there exists a favorable region, where the centripetal velocity is uniformly lower-bounded and moreover, we can  reposition parameters into this region via a simple {\em norm rescaling}. Leveraging these properties,  we introduce an algorithm called Progressive Rescaling Gradient Descent (PRGD). Notably, we prove that PRGD can achieve both directional convergence and margin maximization at an {\em exponential} rate $\mathcal{O}(e^{-\Omega(t)})$. This stands in stark contrast to all existing algorithms, which maximize the margin at a slow polynomial rate. 
    \item Lastly, we validate our theoretical findings through both synthetic and real-world experiments.
    In particular, when applying PRGD to linearly non-separable datasets and homogenized deep neural networks—beyond the scope of our theory—we still observe consistent test performance improvements.
\end{itemize}
In Table~\ref{table: comparison tight}, we summarize our main theoretical results and compare them with existing ones. 

\begin{table*}[!bp]
\caption{ Comparison of the directional convergence and margin maximization rates of different algorithms under Assumption~\ref{ass: linearly separable},~\ref{ass: non-degenerate data}, and $\gamma^\star\bw^\star\ne\frac{1}{|\mathcal{I}|}\sum_{i\in\mathcal{I}}\bx_i y_i$. In this table,  $\bw^\star$ and $\gamma^\star$ denote the $\ell_2$ max-margin solution and the corresponding margin, respectively and $\bw(t)$ denotes the solution at the $t$-th step. 
}
\begin{center}
\begin{tabular}{c|c}
    \hline\hline
      Algorithm & 
      Error of Direction ${\rm e}(t)=\norm{\hat{\bw}(t)-\bw^\star}$
     \\\hline   
 GD & ${\rm e}(t)=\mathcal{O}(1/\log t)$ \citep{soudry2018implicit}; ${\color{red!70!black} {\rm e}(t)=\Theta\left(1/\log t\right)}$ ({\bf Thm}~\ref{thm: GD NGD main result}) \\ 
      NGD& ${\rm e}(t)=\mathcal{O}(1/ t)$ \citep{ji2021characterizing}; ${\color{red!70!black} {\rm e}(t_k)= \Theta(1/t_k)}$ ({\bf Thm}~\ref{thm: GD NGD main result})
      \\\hline 
      {\bf\color{red!70!black} PRGD} (Ours) & ${\color{red!70!black} {\rm e}(t)=e^{-\Omega(t)}}$ ({\bf Thm}~\ref{thm: PRGD main thm})
     \\ \hline\hline
\end{tabular}
\begin{tabular}{c|c}
 \multicolumn{2}{c}{}
    \\\hline\hline
    Algorithm & 
        Error of Margin ${\rm r}(t)=\gamma^\star-\gamma(\bw(t))$ 
         \\\hline   
     GD & ${\rm r}(t)=\mathcal{O}(1/\log t)$ \citep{soudry2018implicit}; ${\color{red!70!black} {\rm r}(t)=\Omega\left(1/\log^2 t\right)}$ ({\bf Thm}~\ref{thm: GD NGD main result}) \\ 
          NGD& ${\rm r}(t)=\mathcal{O}(1/ t)$ \citep{ji2021characterizing}; ${\color{red!70!black} {\rm r}(t_k)=\Omega(1/t_k^2)}$ ({\bf Thm}~\ref{thm: GD NGD main result})\\ 
         Nesterov Acceleration & ${\rm r}(t)=\tilde{\cO}(1/t^2)$ \citep{ji2021fast,wang2022accelerated} \\\hline 
          {\bf\color{red!70!black} PRGD} (Ours) & ${\color{red!70!black}{\rm r}(t)=e^{-\Omega(t)}}$ ({\bf Thm}~\ref{thm: PRGD main thm})
         \\ \hline\hline
    \end{tabular}
\end{center}
\label{table: comparison tight}
\end{table*}

\section{Related Work}


Unraveling the implicit bias of optimization algorithms has become a fundamental problem in theoretical deep learning and has garnered extensive attention recently.



\paragraph*{Margin Maximization in Gradient-based Algorithms.} 
The tendency of GD to favor max-margin solutions when trained with exponentially-tailed loss functions was first identified in the seminal work by \citet{soudry2018implicit}. Beyond aforementioned studies, 
  \citet{nacson2019stochastic} explored this bias for SGD and  \citet{gunasekar2018characterizing,wang2021momentum,sun2022mirror,wang2023faster} extended the analysis to various other optimization algorithms. Notably, \citet{ji2018risk} considered the  situation where dataset are linearly non-separable, providing  insights into the robustness of these findings in more complex settings. More recently, \citet{ji2020gradient} investigated the impact of the tail behavior of loss functions and 
\citet{wu2023implicit} analyzed the impact of edge of stability, a phenomenon previously observed by \citet{wu2018sgd,Jastrzebski2020The,cohen2021gradient}. 

Additionally, the margin-maximization analysis  has also been extended to nonlinear models. This includes studies by \citet{ji2018gradient,gunasekar2018implicit} on deep linear networks, as well as research by \citet{chizat2020implicit} on wide two-layer ReLU networks. Notably, \citet{nacson2019lexicographic,lyu2019gradient,ji2020directional} demonstrated that for general homogeneous networks, Gradient Flow (GF) and GD converge to solutions corresponding the KKT point of the max-margin problem. \citet{kunin2022asymmetric} has recently extended this analysis to quasi-homogeneous networks. Moreover, for two-layer (leaky-)ReLU neural networks, \citet{lyu2021gradient,vardi2022margin,wang2023understanding} studied whether the convergent KKT point of GF is a global optimum of the max-margin problem.

\textbf{Other Implicit Biases.} There are many other attempts to explain the implicit bias of deep learning algorithms \citep{vardi2023implicit}. Among them, the most popular one is the {\em flat minima hypothesis}: SGD favors flat minima \citep{keskar2016large} and flat minima generalize well \citep{hochreiter1997flat,jiang2019fantastic}. Recent studies \citep{wu2018sgd,ma2021linear,wu2022does} provided explanations for why SGD tends to select flat minima from a dynamical stability perspective. 
Moreover, \citet{blanc2020implicit,li2021happens,lyu2022understanding,ma2022beyond} offered in-depth characterizations of the dynamical process of  SGD in reducing the sharpness near the global minima manifold. Additionally, beyond empirical observations,  recent studies \citep{ma2021linear,mulayoff2021implicit,gatmiry2023inductive,wu2023implicitstability} provided theoretical evidence for the superior generalization performance of flat minima.
Besides,~\citet{woodworth2020kernel,pesme2021implicit,nacson2022implicit,pesme2023saddle,even2023s} investigated the implicit bias on linear diagonal networks, such as how the initialization scale and the step size affect the selection bias of GF, GD, and SGD in different regimes.
Additionally, various studies explored how  other training components impact implicit bias, such as normalization~\citep{wu2020implicit,li2020reconciling,lyu2022understanding,dai2023crucial}, re-parametrization~\citep{li2022implicit}, weight decay~\citep{andriushchenko2023we}, cyclic learning rate~\citep{wang2023noise}, and sharpness-aware minimization~\citep{wen2023how,wen2023sharpness,long2023sharpness}.

\section{Preliminaries}
\label{section: Preliminaries}

\textbf{Notation.} We use bold letters for vectors and lowercase letters for scalars, e.g. $\bx=(x_1,\cdots,x_d)^\top\in\mathbb{R}^d$. 
We use $\left<\cdot,\cdot\right>$ for the standard Euclidean inner product between
two vectors, and $\left\|\cdot\right\|$ for the $\ell_2$ norm of a vector or the spectral norm of a matrix.  Let $\hat{\bw}=\bw/\|\bw\|$ the normalized vector.
We use standard big-O notations $\cO,\Omega,\Theta$ to hide absolute positive constants, and use  $\tilde{\cO},\tilde{\Omega},\tilde{\Theta}$ to further hide logarithmic constants.
For any positive integer $n$, let $[n]=\{1,\cdots,n\}$. 


{\bf Problem Setup.} We consider the problem of binary classification  with a linear decision function $\bx\mapsto\<\bw,\bx\>$. Let $\cS=\{(\bx_1,y_1),\cdots,(\bx_n,y_n)\}_{i=1}^n$ with $\bx_i\in \bbR^{d}$ and $y_i\in \{\pm1\}$ for any $i\in [n]$ be the training set. 
Without loss of generality, we assume  $\norm{\bx_i}\leq1,\forall i\in[n]$. Throughout this paper, we assume $\cS$ to be linearly
separable:
\begin{assumption}[linear separability]\label{ass: linearly separable}
There exists a $\bw\in\bbS^{d-1}$ such that $\min\limits_{i\in [n]}y_i\<\bw,\bx_i\>>0$.
\end{assumption}

Under this assumption, the solutions that  classify all training data correctly may  not be unique.  Among them, {\bf the max-margin solution} is often favorable due to its superior generalization ability as suggested by the theory of support vector machine \citep{vapnik1999nature}. For any $\bw\in\mathbb{R}^d$,  the normalized $\ell_2$-margin is defined by $\gamma(\bw):=\min_{i\in[n]}y_i\<\hat{\bw},\bx_i\>$. The max-margin  solution and the corresponding max margin are defined by 
\begin{equation}
\bw^\star:=\mathop{\arg\max}\limits_{\bw \in \mathbb{S}^{d-1}}\gamma(\bw),\quad
\gamma^\star:=\gamma(\bw^\star).
\end{equation}
For the margin function, we have  the following important properties:
\begin{itemize}[leftmargin=2em]
    \item {\bf Homogeneity.} $\gamma(c\bw)=\gamma(\bw)$ for any $c>0$ and $\bw\in\bbR^d$. 
    \item {\bf Directional Convergence.}  Under Assumption~\ref{ass: linearly separable},  $\gamma^\star-\gamma(\bw)\leq\norm{\hat{\bw}-\bw^\star}$ (Lemma~\ref{lemma: Margin error and Directional error}).
\end{itemize}
Therefore, instead of directly inspecting the margin maximization,  we can focus on  the analysis of directional convergence, which is often must easier. 

In this paper, we are interested in algorithms that   minimize  the following objective
\begin{equation}\label{pro: logistic regression}
\cL(\bw)=\frac{1}{n}\sum_{i=1}^n\ell\left(y_i\<\bw,\bx_i\>\right).
\end{equation}
where $\ell:\mathbb{R}\mapsto\mathbb{R}_{\geq 0}$ is a loss function. We assume $\ell(z)=e^{-z}$ for simplicity and the extension to general loss functions with exponential-decay tails such as logistic loss $\ell(z)=\log(1+e^{-z})$ are straightforward~\citep{soudry2018implicit,nacson2019convergence}.



Consider to solve the optimization problem \eqref{pro: logistic regression} with GD:
\begin{equation}\label{equ: GD}
        \textbf{GD:}\quad\ 
        \bw(t+1)=\bw(t)-\eta\nabla\cL(\bw(t)).
\end{equation}
 \citet{soudry2018implicit} showed under Assumption~\ref{ass: linearly separable}, GD~\eqref{equ: GD} with $\eta\leq1$ converges in direction to the max-margin solution $\bw^\star$ despite the non-uniqueness of solutions. However, this occurs at a slow logarithmic rate $\gamma^\star-\gamma(\bw(t))=\cO(1/\log t)$. To accelerate the convergence, \citet{nacson2019convergence,ji2021characterizing} proposed the following Normalized Gradient Descent (NGD):
 \begin{equation}\label{equ: NGD}
        \textbf{NGD:}\quad\ 
        \bw(t+1)=\bw(t)-\eta\frac{\nabla\cL(\bw(t))}{\cL(\bw(t))},
\end{equation}
and \citet{ji2021characterizing} show that for NGD with $\eta\leq 1$, the margin maximization is much faster: $\gamma^\star-\gamma(\bw(t))=\cO(1/t)$.

\section{Motivations and the Algorithm}
\label{section: methodology}



In this section, we propose a toy problem to showcase why NGD is slow in maximizing the margin and why our proposed algorithm can accelerate it significantly.

\begin{dataset}\label{ass: dataset: 3data not intersect}  $\cS=\{(\bx_1, y_1),(\bx_2, y_2),(\bx_3, y_3)\}$ where $\bx_1=({\gamma^\star},\sqrt{1-{\gamma^\star}^2})^\top$, $y_1=1$, $\bx_2=({\gamma^\star},-\sqrt{1-{\gamma^\star}^2})^\top$, $y_2=1$, $\bx_3=(-\gamma^\star,-\sqrt{1-{\gamma^\star}^2})^\top$, $y_3=-1$, and $\gamma^\star>0$.
\end{dataset}
For this particular dataset, the max-margin solution is $\bw^\star=\be_1=(1,0)^\top$ and $\gamma^\star$ represents the associate  margin. A visualization of this dataset can be found in Figure~\ref{fig: 3 data vis}. 
\begin{figure*}[!ht]
    \hspace{-.3cm}
    \centering
    \subfloat[\small Dataset~\ref{ass: dataset: 3data not intersect}.]{\label{fig: 3 data vis}
    \includegraphics[width=2.4cm]{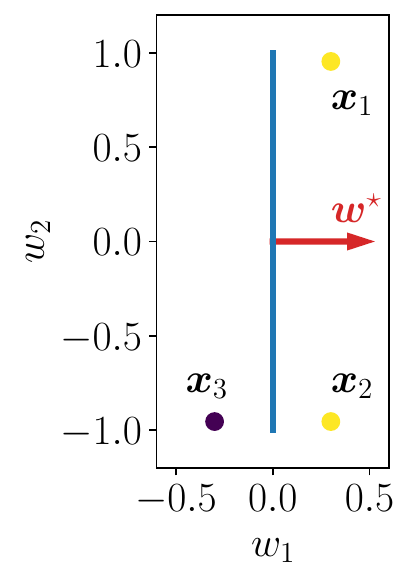}}
    \hspace{-.0cm}
    \subfloat[\small The vector field and the trajectories of NGD and PRGD for Dataset~\ref{ass: dataset: 3data not intersect}.]{\label{fig: 3 data vector field}
    \includegraphics[width=13.25cm]{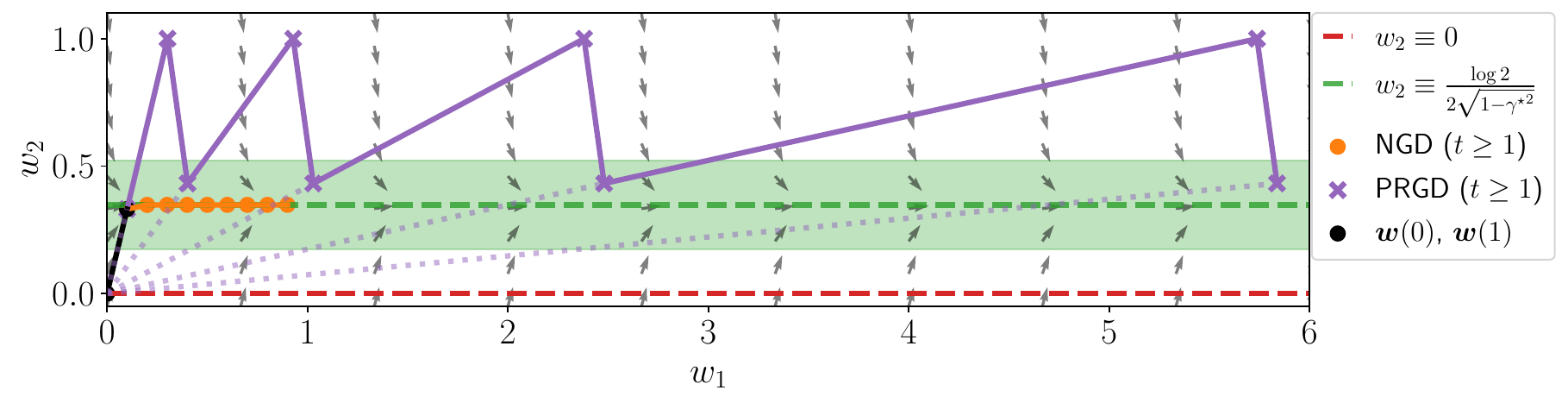}}
    \caption{\small (a) A visualization of Dataset~\ref{ass: dataset: 3data not intersect} where $\bw^\star$ is the max-margin solution.  (b) The vector field and the trajectories of NGD and PRGD for Dataset~\ref{ass: dataset: 3data not intersect}. The {\color{gray}{gray arrows}} plot the vector field $-\nabla\cL(\cdot)/\norm{\nabla\cL(\cdot)}$; the {\color{red!80!black}{red dashed line}} corresponds to the max-margin solution $\bw^\star$; the {\color{green!80!black}{green zone}} $\bbA$ is an ``attractor'' of NGD dynamics. 
    We plot the trajectories of {\color{violet}{PPGD}} and {\color{orange}{NGD}} for 8 iterations starting from the same initial point $\bw(1)$ (black), where $\bw(1)$ is trained by NGD starting from $\bw(0)=\bzero$ (black).}
    \label{fig: toy dataset}
\end{figure*}

\textit{The Vector Field.} To gain an intuitive understanding of why NGD dynamics is slow in margin maximization for Dataset \ref{ass: dataset: 3data not intersect}, we visualize the direction of normalized gradient in Figure~\ref{fig: 3 data vector field} (the gray arrows).  We can see that the \textbf{centripetal velocity}, i.e., the component orthogonal to $\bw^\star$, becomes tiny in the {\color{green!80!black}{green zone}} $\mathbb{A}$ and diminishes to zero at the green line. Consequently, {\color{orange}NGD} (the orange curve) always enters and is attracted in $\mathbb{A}$ (and remain close the green line). A rigorous analysis in  Appendix~\ref{appendix: proof: methodology} shows that  the ``attractor'' of NGD (i.e., the green zone) is given by $\bbA:=\left\{\bw:{\log2}/{4}\leq w_2\sqrt{1-{\gamma^\star}^2}\leq{3\log2}/{4}\right\}$.

\textit{The Inefficiency of NGD.}  
We can see that while NGD
keeps trapped in the attractor $\mathbb{A}$, $\bw(t)$ moves towards infinity and accordingly, $\hat{\bw}(t)\to \bw^\star$ as $t\to\infty$. However, due to the gap between $\bbA$ and the max-margin direction $\bw^\star$, we have $|w_2(t)|=\Theta(1)$ for all $t\in \mathbb{N}$. Consequently, the directional convergence of NGD is cursed to have a rate at most  
\begin{align*}
    \|\hat{\bw}(t)-\bw^\star\|=& \sqrt{\left(\frac{w_1(t)}{\|\bw(t)\|}-1\right)^2+\left(\frac{w_2(t)}{\|\bw(t)\|}\right)^2}
    \\=&\Theta\left(\frac{|w_2(t)|}{\norm{\bw(t)}}\right)=\Theta\left(\frac{1}{t}\right), 
\end{align*}
where we use the fact that the norm $\|\bw(t)\|$ grows at $\Theta(t)$ rate (Lemma~\ref{thm: NGD upper bound ji}).

\paragraph*{Acceleration via Amplifying the Centripetal Velocity.} In Figure~\ref{fig: 3 data vector field}, we can see that the centripetal velocity is helpful for converging towards $\bw^\star$ when $\bw$ is  away from $\bw^\star$. The inefficiency of NGD stems from the fact that NGD is trapped in the green zone where the centripetal velocity is tiny. Therefore, to accelerate the directional convergence, we can stretch $\bw(t)$ outside the the green zone via {\em rescaling}:  $\bw(t)\to c\bw(t)$ for some  $c>1$. This rescaling does not change the margin (due to the homogeneity) but reposition $\bw(t)$ into a region where the centripetal velocity is lower bounded,  thereby enabling a faster directional convergence when employing  NGD steps there. 

Based the above intuition, we propose the {\em Progressive  Rescaling Gradient Descent} (PRGD) given in Alg.~\ref{alg: PRGD}. The additional projection step in Alg.~\ref{alg: PRGD} is proposed to stablize training  by avoiding 
the rapid explosion of parameters' norm in each cycle. It is shown to be useful in experiments but does not affect our theoretical results (Theorem \ref{thm: PRGD main thm}).

\begin{algorithm}[!ht]
	\caption{Progressive  Rescaling Gradient Descent (PRGD)}
	\label{alg: PRGD}
	\KwIn{Dataset $\cS$; Initialization $\bw(0)$; Progressive Time $\{T_{k}\}_{k=0}^{K}$; Progressive Radius $\{R_{k}\}_{k=0}^{K}$; }  
	\For{$k=0,1,2,\cdots,K$}{
        $\bw(T_{k}+1)={R_k}\frac{\bw(T_{k})}{\norm{\bw(T_{k})}}$; 
         \quad\quad\quad\quad\quad $\triangleright$ {\makecell[c]{progressive \\ rescaling step}}
        \\
        \For{$T_k+1\leq t\leq T_{k+1}-1$}{
            $\bv(t+1)=\bw(t)-\eta\frac{\nabla\cL(\bw(t))}{\cL(\bw(t))}$;\  $\triangleright${\makecell[c]{ normalized grad- \\ ient descent step}}
            \\
		$\bw(t+1)= \Proj_{\bbB(\boldsymbol{0},R_{k})}\bracket{\bv(t+1)}$;
            \hfill $\triangleright$ 
            {\makecell[c]{project-\\ion step}}
            }
        }
    \KwOut{$\bw(T_K+1)$.}
\end{algorithm}

\textit{The Efficiency of PRGD.} 
In Figure~\ref{fig: 3 data vector field}, we plot the trajectory of {\color{violet}{PRGD}} (the purple curve) with hyperparameter $T_{k+1}-T_k=2$. This means that in each cycle, {\color{violet}{PRGD}} executes one step of norm rescaling, followed by one step of projected NGD.  It is evident from the figure that  {\color{violet}{PRGD}}  converges towards $\bw^*$ in direction much faster. This acceleration can be attributed to the following mechanism: the rescaling step allows PRGD to move out of the attractor  $\bbA$ (where the centripetal velocity is tiny) and to undertake NGD in the line $w_2=1$, where 
{\em 
the centripetal velocity has a uniformly positive lower bound.
}
Utilizing this fact, we can show that the norm increases exponentially fast via simple geometric calculation. Consequently, the directional convergence is exponentially fast 
\begin{align*}
    &\|\hat{\bw}(2k-1)-\bw^\star\|=\|\hat{\bw}(2k)-\bw^\star\|
    \\=&\Theta\left(\frac{|w_2(2k)|}{\norm{\bw(2k)}}\right)=\Theta\left(\frac{1}{\norm{\bw(2k)}}\right)=e^{-\Omega(k)}.
\end{align*}

The following proposition formalizes the above intuitive analysis of NGD and PRGD for Dataset \ref{ass: dataset: 3data not intersect}, whose proof is deferred to Appendix~\ref{appendix: proof: methodology}.
\begin{proposition}
\label{thm: 3data}
Consider Dataset~\ref{ass: dataset: 3data not intersect}. Then NGD~\eqref{equ: NGD} can only maximize the margin polynomially fast, while PRGD~(Alg.~\ref{alg: PRGD}) can maximize the margin exponentially fast. Specifically,
\begin{itemize}[leftmargin=2em]
\item \textbf{(NGD).} Let $\bw(t)$ be NGD \eqref{equ: NGD} solution at time $t$  with $\eta=1$ starting from $\bw(0)=\bzero$. Then both the margin maximization and directional convergence are at (tight) polynomial rates:
$\norm{\hat{\bw}(t)-\bw^\star}=\Theta\left({1}/{t}\right),\gamma^{\star}-\gamma(\bw(t))=\Theta\left({1}/{t}\right)$.

\item \textbf{(PRGD).} Let $\bw(1)$ be NGD~\eqref{equ: NGD} solution at time $1$ with $\eta=1$ starting from $\bw(0)=\bzero$, and let $\bw(t)$ be PRGD solution (Alg.~\ref{alg: PRGD}) at time $t$ with $\eta=1$ starting from $\bw(1)$. Then there exists a set of hyperparameters $\{R_k\}_k$ and $\{T_k\}_k$ such that $R_k=e^{\Theta(k)}$ and $T_k=\Theta(k)$, and both the margin maximization and directional convergence are at (tight) exponential rate:
$\norm{\hat{\bw}(t)-\bw^\star}=e^{-\Theta\left(t\right)},\gamma^{\star}-\gamma(\bw(t))=e^{-\Theta\left(t\right)}$.
\end{itemize}
\end{proposition}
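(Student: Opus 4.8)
The plan is to reduce both dynamics to the scalar evolution of the second coordinate. Write $\bw=(w_1,w_2)$ and $\beta:=\sqrt{1-{\gamma^\star}^2}\in(0,1)$. A direct computation gives $y_1\bx_1=y_3\bx_3=(\gamma^\star,\beta)$ and $y_2\bx_2=(\gamma^\star,-\beta)$, hence $\cL(\bw)=\tfrac{1}{3}e^{-\gamma^\star w_1}\bigl(2e^{-\beta w_2}+e^{\beta w_2}\bigr)$ and
\begin{equation*}
-\frac{\nabla\cL(\bw)}{\cL(\bw)}=\bigl(\gamma^\star,\ g(w_2)\bigr),\qquad g(s):=\beta\,\frac{2-e^{2\beta s}}{2+e^{2\beta s}}.
\end{equation*}
So with $\eta=1$ every NGD step acts as $w_1\mapsto w_1+\gamma^\star$ and $w_2\mapsto w_2+g(w_2)$; in particular the $w_2$-dynamics is autonomous and the normalized gradient is norm-independent. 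Moreover the explicit margin formula reads $\gamma(\bw)=\gamma^\star\hat w_1-\beta|\hat w_2|$, so both $\|\hat{\bw}-\bw^\star\|$ and $\gamma^\star-\gamma(\bw)$ are governed by $\hat w_2=w_2/\|\bw\|$: one has $\|\hat{\bw}-\bw^\star\|=\Theta(|\hat w_2|)$ (using $\hat w_1>0$), and $\gamma^\star-\gamma(\bw)=\gamma^\star(1-\hat w_1)+\beta|\hat w_2|=\Theta(|\hat w_2|^2)+\Theta(|\hat w_2|)$.

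For NGD I would study the one-dimensional map $\phi(s):=s+g(s)$. One checks $\phi'=1+g'>0$ (since $g'(s)\ge-\beta^2>-1$), that $\phi$ has a unique fixed point $s^\star=\tfrac{\log 2}{2\beta}$ on $(0,\infty)$ with $\phi(s)>s$ precisely when $s<s^\star$, and that $w_2(1)=\beta/3<s^\star$. Hence $w_2(t)\uparrow s^\star$ monotonically, so $|w_2(t)|=\Theta(1)$; together with $w_1(t)=t\gamma^\star$ this yields $\|\bw(t)\|=\Theta(t)$, and therefore $\|\hat{\bw}(t)-\bw^\star\|=\Theta(|w_2(t)|/\|\bw(t)\|)=\Theta(1/t)$, while $\gamma^\star-\gamma(\bw(t))=\Theta(1/t^2)+\Theta(1/t)=\Theta(1/t)$. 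The matching $\mathcal{O}$ upper bounds also follow directly from Lemma~\ref{thm: NGD upper bound ji}.

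For PRGD, the crucial point is that projection onto $\bbB(\bzero,R_k)$ preserves direction, so $a_k:=\hat w_2(\bw(T_k))$ evolves essentially independently of the norm: after the rescaling step $w_2$ equals $R_ka_k$ and $w_1$ equals $R_k\sqrt{1-a_k^2}$, after the single NGD step they become $R_ka_k+g(R_ka_k)$ and $R_k\sqrt{1-a_k^2}+\gamma^\star$. I would take $T_{k+1}-T_k=2$ (so $T_k=\Theta(k)$) and radii $R_k:=\tau/a_k$ for a fixed $\tau>s^\star$; this pins $w_2\equiv\tau$ right before each NGD step, i.e.\ into the region where the centripetal velocity is uniformly nonzero ($g(\tau)<0$) while still $\tau+g(\tau)>0$. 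Substituting gives the exact recursion
\begin{equation*}
\frac{a_{k+1}}{a_k}=\frac{\lambda}{\sqrt{\bigl(\sqrt{1-a_k^2}+\gamma^\star a_k/\tau\bigr)^2+\lambda^2a_k^2}},\qquad\lambda:=1+\frac{g(\tau)}{\tau}\in(0,1),
\end{equation*}
whose denominator equals $1+\Theta(a_k)\ge 1$ for small $a_k$, so $\lambda a_k(1-Ca_k)\le a_{k+1}\le\lambda a_k$. Telescoping through $\prod_j(1-C\lambda^j a_0)=\Theta(1)$ then gives $a_k=\Theta(\lambda^k)=e^{-\Theta(k)}$, hence $R_k=\tau/a_k=e^{\Theta(k)}$, and converting per-cycle to per-step rates (using $T_k=\Theta(k)$) yields $\|\hat{\bw}(t)-\bw^\star\|=\Theta(|\hat w_2(t)|)=e^{-\Theta(t)}$ and, since $\beta>0$, $\gamma^\star-\gamma(\bw(t))=e^{-\Theta(t)}$.

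The main obstacle is entering the regime where this recursion applies: $\bw(1)=(\gamma^\star,\beta/3)$ has $a_0=\beta/\sqrt{1+8{\gamma^\star}^2}$, which is not small when $\gamma^\star$ is small, so the small-$a_k$ expansions above (and the positivity of the telescoped product) can fail initially. I would handle this with a burn-in: a first, long cycle in which $R_0$ is chosen large enough that the projection stays inactive, so that this cycle merely runs plain NGD for a dataset-dependent constant number of steps, driving $\hat w_2$ below the threshold needed for the two-step analysis; thereafter the two-step cycles take over and $T_k$ remains $\Theta(k)$. The rest is routine but somewhat tedious: determining exactly when the projection activates, verifying $w_1$ stays positive, propagating the two-sided bounds to obtain $\Theta(\lambda^k)$ rather than merely $\mathcal{O}(\lambda^k)$, and filling the gaps between consecutive $T_k$'s for the per-step statement.
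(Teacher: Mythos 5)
Your proposal is correct and follows the same overall strategy as the paper's proof: decouple the dynamics into the linearly growing first coordinate ($w_1\mapsto w_1+\gamma^\star$) and an autonomous scalar recursion for $w_2$; show for NGD that $w_2(t)$ stays $\Theta(1)$ near the zero $s^\star=\log 2/(2\beta)$ of the centripetal component; and for PRGD rescale $w_2$ back to a fixed height where that component is a nonzero negative constant, yielding geometric growth of the norm and hence $\hat w_2(t)=e^{-\Theta(t)}$. Two executional differences are worth noting. First, for PRGD the paper tracks the \emph{unnormalized} first coordinate immediately after each rescaling and obtains the exact linear recursion $w_1^{(k+1)}=(w_1^{(k)}+\gamma^\star)/q$ with $q=1+g(1)$, valid for every $k$ with no small-parameter expansion; your recursion for $a_k=\hat w_2(T_k)$ is nonlinear, which is what forces the burn-in phase you describe (the burn-in does work, since your NGD analysis already yields $\hat w_2(t)=\Theta(1/t)$, but rewriting your cycle map in terms of $w_1^{(k)}=R_k\sqrt{1-a_k^2}$ would make it exact and let you drop the burn-in and the telescoped product entirely). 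Second, your insistence on rescaling to a height $\tau>s^\star$ is more careful than the paper's fixed choice of height $1$: the paper's contraction factor $q=1+g(1)$ is only less than $1$ when $e^{2\beta}>2$, i.e.\ for $\gamma^\star<\sqrt{1-(\log 2/2)^2}$, whereas your choice guarantees $\lambda=\phi(\tau)/\tau\in(s^\star/\tau,1)$ for every $\gamma^\star\in(0,1)$. For the NGD part, your monotone-convergence-to-the-fixed-point argument and the paper's invariant-interval induction both rest on the same monotonicity of $s\mapsto s+g(s)$ and are interchangeable.
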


\section{Centripetal Velocity Analysis}
\label{section: landscape analysis}

In the above analysis, the key property enabling the acceleration  for Dataset \ref{ass: dataset: 3data not intersect} is the existence a region where the centripetal velocity is uniformly lower bounded and we can stretch $\bw(t)$ to this region by simple norm rescaling. In this section, we demonstrate that this property holds generally. 


We needs the following decomposition of parameter for our fine-grained analysis of directional dynamics. Note that the same decomposition has been employed in~\citet{ji2021characterizing,wu2023implicit}.
\begin{definition}
    Let $\cP(\bw):=\<\bw,\bw^{\star}\>\bw^{\star}$ and $\cP_{\perp}(\bw):=\bw-\<\bw,\bw^{\star}\>\bw^{\star}$. 
    It is worth noting that for any $\bw\in\bbR^d$, we have the following decomposition
    \[
    \bw=\cP(\bw)+\cP_{\perp}(\bw).
    \]
\end{definition}

We can now formally propose the definition of the ``centripetal velocity'' as follows and a visual illustration of the definition is provided in Figure~\ref{fig: centripetal velocity vis}. 
\begin{definition}[Centripetal Velocity]\label{def: centripetal angular velocity}
The normalized gradient at $\bw\in\bbR^d$ is $\nabla\cL(\bw)/\cL(\bw)$ and we define the centripetal velocity $\varphi(\bw)$ at $\bw$ by
\begin{equation*}
    \varphi(\bw):=\<-\frac{\nabla\cL(\bw)}{\cL(\bw)},-\frac{\cP_{\perp}(\bw)}{\norm{\cP_{\perp}(\bw)}}\>.
\end{equation*}
\end{definition}

\begin{figure}
    \centering
    \includegraphics[width=0.33\textwidth]{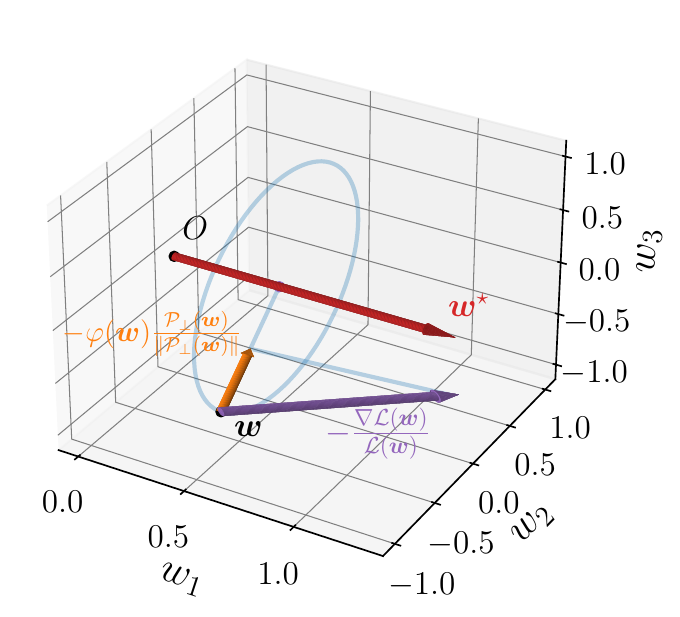}
    \caption{\small A visual illustration of Definition~\ref{def: centripetal angular velocity} in $\bbR^3$. The
    {\color{red!80!black}{red arrow}} corresponds to the max-margin direction $\bw^\star$. At $\bw\in\bbR^3$, the {\color{violet}{purple arrow}} signifies the normalized negative gradient; 
    the {\color{orange}{orange arrow}} depicts the projection of $-\nabla\cL(\bw)/\cL(\bw)$ along the centripetal direction $-\cP_{\perp}(\bw)/\norm{\cP_{\perp}(\bw)}$, reflecting the \textit{centripetal velocity} $\varphi(\bw)$.}
    \label{fig: centripetal velocity vis}
\end{figure}


In addition, our subsequent analysis crucially relies on the following geometry: 
\begin{definition}[Semi-infinite Hollow Cylinder]\label{def: half hollow cylindrical}
    We use 
    \begin{equation}
    \begin{aligned}
        \bbC(D_1,D_2;H)&:=\big\{\bw\in{\rm span}\{\bx_i:i\in[n]\}: 
        \\D_1\leq&\norm{\cP_{\perp}(\bw)}\leq D_2;\left<\bw,\bw^\star\right>\geq H\big\}
    \end{aligned}
    \end{equation}
    to denote a cylinder which starts from the height $H$ and extends infinitely along the direction $\bw^{\star}$.
\end{definition}

Our subsequent analysis will concentrate on this semi-infinite hollow cylinder as PRGD ensures the iterations will be confined in the region. 
Additionally, it is crucial to note that our attention is restricted to the smaller subspace ${\rm span}\{\bx_i:i\in[n]\}$, rather than the entire space $\bbR^d$. This is justified by fact that the trajectories of GD, NGD, and PRGD, when initialized from $\bzero$, will remain staying in this subspace indefinitely.

\subsection{The Existence of a Favorable Semi-infinite Hollow Cylinder}
In this subsection, we undertake a theoretical examination of the centripetal velocity, as defined in Definition~\ref{def: centripetal angular velocity}, on the semi-infinite hollow cylinder described in Definition~\ref{def: half hollow cylindrical}. Our investigation aims to address the following query:
\vspace{.05cm}
\begin{center}
    \textit{Does a ``favorable'' semi-infinite hollow cylinder exist where the centripetal velocity\\ consistently maintains a positive lower bound?}
\end{center}
\vspace{.1cm}

\begin{assumption}[Non-degenerate data
]\label{ass: non-degenerate data}
Let $\cI$ be the index set of the support vectors. Assume there exist $\alpha_i>0$ $(i\in\cI)$ such that $\bw^\star=\sum_{i\in\cI}\alpha_i \bx_i y_i$ and ${\rm span}\{\bx_i:i\in\cI\}={\rm span}\{\bx_i:i\in[n]\}$.
\end{assumption}

Under this assumption, we can establish the existence of  a favorable semi-infinite cylindrical surface as follows. The proof  is deferred to Appendix~\ref{appendix: Centripetal Velocity Analysis}.

\begin{theorem}[Centripetal Velocity Analysis, Main result]
\label{thm: Centripetal Velocity Analysis, main result}
Under Assumption~\ref{ass: linearly separable} and~\ref{ass: non-degenerate data}, there exists a semi-infinite hollow cylinder $\bbC(D,2D;H)$ and a positive constant $\mu>0$ such that
\begin{align*}
    \inf_{\bw\in\bbC(D,2D;H)}\varphi(\bw)\geq\mu.
\end{align*}
\end{theorem}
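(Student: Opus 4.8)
The plan is to unpack the definition of the centripetal velocity $\varphi(\bw)$ and show it is bounded below by a positive constant uniformly over a suitable cylinder $\bbC(D,2D;H)$, by choosing $D$ small and $H$ large. Write $\cL(\bw)=\frac1n\sum_i e^{-y_i\langle\bw,\bx_i\rangle}$, so $-\nabla\cL(\bw)/\cL(\bw)=\sum_i p_i(\bw)\,\bx_i y_i$, where $p_i(\bw)=e^{-y_i\langle\bw,\bx_i\rangle}/\sum_j e^{-y_j\langle\bw,\bx_j\rangle}$ is a probability vector. Decompose $\bx_i y_i=\langle\bx_i y_i,\bw^\star\rangle\bw^\star+\cP_\perp(\bx_i y_i)$; the $\bw^\star$-component drops out of the inner product with $-\cP_\perp(\bw)/\|\cP_\perp(\bw)\|$, so
\[
\varphi(\bw)=\Big\langle \sum_i p_i(\bw)\cP_\perp(\bx_i y_i),\ -\frac{\cP_\perp(\bw)}{\|\cP_\perp(\bw)\|}\Big\rangle.
\]
The goal is thus to show the "effective" perpendicular drift $\sum_i p_i(\bw)\cP_\perp(\bx_i y_i)$ points into $-\cP_\perp(\bw)$ with a quantitatively non-degenerate angle and magnitude.

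First I would control the weights $p_i(\bw)$ on the cylinder. On $\bbC(D,2D;H)$ we have $\langle\bw,\bx_i\rangle=\langle\bw,\bw^\star\rangle\langle\bw^\star,\bx_i\rangle+\langle\cP_\perp(\bw),\cP_\perp(\bx_i)\rangle$; for a support vector $i\in\cI$ the first term is $\langle\bw,\bw^\star\rangle\gamma^\star y_i$ (large, since $\langle\bw,\bw^\star\rangle\ge H$) while for a non-support vector $y_i\langle\bw^\star,\bx_i\rangle\ge\gamma^\star+\delta$ for some margin gap $\delta>0$, and the perpendicular term is bounded by $\|\cP_\perp(\bw)\|\le 2D$. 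Hence for $H$ large enough relative to $D$, the mass $\sum_{i\notin\cI}p_i(\bw)$ is exponentially small (a factor $e^{-\Omega(H\delta)}$ or so), and among the support vectors the $p_i$ are comparable up to the $e^{\pm O(D)}$ distortion coming from the perpendicular part. So $\sum_i p_i(\bw)\cP_\perp(\bx_i y_i)$ is within $O(D)+e^{-\Omega(H)}$ of the "centered" vector $\sum_{i\in\cI}q_i\cP_\perp(\bx_i y_i)$ where $q_i$ are the normalized restrictions of $p_i$ to $\cI$ — and crucially, when $\cP_\perp(\bw)=0$ and $H\to\infty$ the weights become uniform over a set whose reweightings span a neighborhood of $\bw^\star=\sum_{i\in\cI}\alpha_i\bx_i y_i$.

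The core geometric step uses Assumption~\ref{ass: non-degenerate data}. Since $\bw^\star=\sum_{i\in\cI}\alpha_i\bx_i y_i$ with all $\alpha_i>0$, and $\mathrm{span}\{\bx_i:i\in\cI\}=\mathrm{span}\{\bx_i:i\in[n]\}$, the set $\{\cP_\perp(\bx_i y_i):i\in\cI\}$ positively spans the subspace $\cP_\perp(\mathrm{span}\{\bx_i:i\in[n]\})$ — indeed $\sum_{i\in\cI}\alpha_i\cP_\perp(\bx_i y_i)=\cP_\perp(\bw^\star)=0$, so $0$ is in the interior of the convex hull of $\{\cP_\perp(\bx_i y_i)\}_{i\in\cI}$ within that subspace. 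This is exactly the condition under which, for \emph{any} unit direction $\bu$ in the perpendicular subspace, there is a probability vector over $\cI$ whose barycenter has a non-trivial negative component along $\bu$; moreover, by a compactness argument over the simplex of admissible weight perturbations and the sphere of directions $\bu$, one gets a \emph{uniform} constant $c_0>0$ such that the softmax-induced weights (which are themselves a perturbation of uniform, hence stay in a fixed compact subset of the relative interior of the simplex) yield $\langle\sum_i q_i\cP_\perp(\bx_i y_i),-\bu\rangle\ge c_0$. Taking $\bu=\cP_\perp(\bw)/\|\cP_\perp(\bw)\|$ gives the bound on the centered vector; then choose $D$ small enough and $H$ large enough that the $O(D)+e^{-\Omega(H)}$ error is at most $c_0/2$, and set $\mu=c_0/2$.

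The main obstacle I anticipate is the geometric/uniformity step: showing that the softmax weights on the support vectors, as $\bw$ ranges over the cylinder, stay inside a compact subset of the relative interior of the probability simplex on $\cI$ \emph{and} that over this compact set the barycenter map has image bounded away from $0$ in every direction — i.e. turning the qualitative "$0\in\mathrm{relint}\,\mathrm{conv}\{\cP_\perp(\bx_i y_i)\}$" into the quantitative $\inf$ over a compact family. The $e^{\pm O(D)}$ perturbation of softmax weights must be compared against the exact barycentric identity $\sum\alpha_i\cP_\perp(\bx_i y_i)=0$; the hypothesis $\alpha_i>0$ is what prevents the admissible weight set from touching a face of the simplex, and a continuity-plus-compactness argument then closes the gap. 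The exponential suppression of non-support-vector weights is routine given the margin gap $\delta$.
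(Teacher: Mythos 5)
Your setup is on the right track: rewriting $\varphi(\bw)$ as an inner product of a softmax-weighted barycenter $\sum_i p_i(\bw)\cP_\perp(\bx_iy_i)$ with $-\cP_\perp(\bw)/\|\cP_\perp(\bw)\|$, killing the non-support vectors by taking $H$ large (the margin gap $\gamma^\star_{\rm sub}-\gamma^\star>0$ gives the exponential suppression), and invoking $\sum_{i\in\cI}\alpha_i\cP_\perp(\bx_iy_i)=\cP_\perp(\bw^\star)=\bzero$ together with the spanning condition are all steps the paper also takes.

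However, the core quantitative step is wrong, and the error is in the direction of your choice of $D$. You take $D$ \emph{small}, so the softmax weights on $\cI$ are an $e^{\pm O(D)}$ perturbation of uniform, and you claim that any weights in a compact subset of the relative interior of the simplex give $\langle\sum_i q_i\cP_\perp(\bx_iy_i),-\bu\rangle\ge c_0>0$ for \emph{every} unit $\bu$. That is false: as $D\to 0$ the barycenter tends to the \emph{fixed} vector $\bar{\bu}:=\frac{1}{|\cI|}\sum_{i\in\cI}\cP_\perp(\bx_iy_i)$, independent of $\bu$, so $\langle\bar{\bu},-\bu\rangle$ is \emph{negative} for $\bu$ pointing along $\bar{\bu}$ whenever $\bar{\bu}\neq\bzero$ (equivalently $\frac{1}{|\cI|}\sum_{i\in\cI}\bx_iy_i\neq\gamma^\star\bw^\star$, the generic case the paper isolates in Theorem~\ref{thm: GD NGD main result}). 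The condition $\bzero\in\mathrm{relint}\,\mathrm{conv}\{\cP_\perp(\bx_iy_i)\}_{i\in\cI}$ only guarantees that \emph{some} weight vector works for each direction, not that the particular softmax weights do. Indeed, a thin solid cylinder around the axis is precisely where the paper shows the centripetal velocity degenerates or reverses (the ``attractor'' in Figure~\ref{fig: 3 data vector field}; the drift near $\cP_\perp(\bw)=\bzero$ points \emph{toward} the minimizer $\bv^\star\neq\bzero$ of $\cL_\perp$, i.e.\ away from the axis), which is the whole reason the cylinder must be hollow and PRGD must rescale \emph{outward}.

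The fix — and the paper's actual mechanism — is to take $D$ \emph{large}. Writing $\bu_i=\cP_\perp(-y_i\bx_i)$ and $\bv=\cP_\perp(\bw)/\|\cP_\perp(\bw)\|$, the numerator is $\sum_{i\in\cI}\langle\bv,\bu_i\rangle e^{D\langle\bv,\bu_i\rangle}$: the exponential tilting gives weight $e^{D\langle\bv,\bu_i\rangle}$ to exactly those support vectors that contribute \emph{positively}. From $\sum_{i\in\cI}\alpha_i\bu_i=\bzero$ and $\sum_{i\in\cI}\langle\bv,\alpha_i\bu_i\rangle^2\ge\lambda_{\min}$ (spanning), one gets $\max_{i\in\cI}\langle\bv,\bu_i\rangle\ge\sqrt{\lambda_{\min}}/(2|\cI|)$ uniformly in $\bv$; that single term contributes $\ge\frac{\sqrt{\lambda_{\min}}}{2|\cI|}e^{D\sqrt{\lambda_{\min}}/(2|\cI|)}$, while all negative terms together contribute at least $-|\cI|$ (their exponential weights are $\le 1$). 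Choosing $D$ large enough that the amplified positive term dominates yields the uniform lower bound; your small-$D$ perturbative argument cannot recover this because the zeroth-order term $\langle\bar{\bu},-\bu\rangle$ it perturbs around has the wrong sign for some $\bu$.
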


Assumption \ref{ass: non-degenerate data} has been widely used in prior analysis of the margin-maximization bias of gradient-based algorithms. 
The first part, requiring strictly positive dual variables to ensure directional convergence, is rather weak and holds for almost all linearly separable data \citep{soudry2018implicit,nacson2019convergence,ji2021characterizing,wang2021momentum}.
The second part requires the support vectors to span the entire dataset. This condition has been adopted  in refined analysis of the residual in Theorem 4 of~\citep{soudry2018implicit}, the analysis of SGD dynamics~\citep{nacson2019stochastic}, and the edge of stability~\citep{wu2023implicit}. We emphasize that the second part is only a technical condition as experimental results for real-world datasets in Section~\ref{section: experiments I} demonstrate that PRGD performs effectively even in cases where this condition is not met. We leave the relaxation of this condition for future work.

\section{Convergence Analysis}
\label{section: convergence analysis}

\subsection{Exponentially Fast Margin Maximization of PRGD}

 Theorem~\ref{thm: Centripetal Velocity Analysis, main result} ensures the existence of a favorable semi-infinite hollow cylinder. The following theorem shows that PRGD (Alg.~\ref{alg: PRGD}) can leverage the favorability to achieve an exponential  rate in directional convergence and  margin maximization.


\begin{theorem}[PRGD, Main Result]\label{thm: PRGD main thm} 
Suppose that Assumption~\ref{ass: linearly separable} and~\ref{ass: non-degenerate data} hold. Let $\bw(t)$ be solution generated by the following \textbf{two-phase} algorithms starting from $\bw(0)=\bzero$:
\begin{itemize}[leftmargin=2em]
    \item \textit{Warm-up Phase}: Run GD~\eqref{equ: GD} or NGD~\eqref{equ: NGD} with $\eta\leq 1$ for $T_{\mathrm{w}}$ steps starting from $\bw(0)$;
    \item \textit{Acceleration Phase}: Run PRGD (Alg.~\ref{alg: PRGD}) with some $\eta,\{R_k\}_k,\{T_k\}_k$ starting from $\bw(T_{\mathrm{w}})$.
\end{itemize}
Then there exist a set of hyperparameters $\eta=\Theta(1)$, $R_k=e^{\Theta(k)}$ and $T_k=\Theta(k)$, ensuring both directional convergence and margin maximization occur at \textbf{exponential} rates:
\begin{align*}
    \norm{\hat{\bw}(t)-\bw^\star}= e^{-\Omega(t)};\quad \gamma^\star-\gamma(\bw(t))= e^{-\Omega(t)}.
\end{align*}
\end{theorem}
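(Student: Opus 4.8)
The strategy is to use the warm-up phase to drive the iterate into (or near) the subspace $\mathrm{span}\{\bx_i : i\in[n]\}$ with sufficiently large margin so that the loss is dominated by the support vectors, and then to show that the acceleration phase keeps the iterate confined to the favorable semi-infinite hollow cylinder $\bbC(D,2D;H)$ supplied by Theorem~\ref{thm: Centripetal Velocity Analysis, main result}, where the centripetal velocity is uniformly at least $\mu>0$. The key invariant to maintain across PRGD cycles is: at the start of cycle $k$, $\bw(T_k+1)$ lies on the sphere of radius $R_k$ and inside the cylinder, i.e. $D\le\norm{\cP_\perp(\bw(T_k+1))}\le 2D$ and $\<\bw(T_k+1),\bw^\star\>\ge H$. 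One then checks that (i) the rescaling step $\bw(T_k+1)=R_k\,\bw(T_k)/\norm{\bw(T_k)}$ preserves the \emph{direction}, hence preserves $\hat\bw$ and the margin, and lands on radius $R_k$; (ii) the projected NGD steps within the cycle move $\cP_\perp$ inward by an amount controlled below by $\eta\mu$ (using $\varphi(\bw)\ge\mu$) while only mildly perturbing $\<\bw,\bw^\star\>$ (which stays $\ge H$ because the parallel component of the normalized gradient is $\ge$ some positive constant whenever the margin is bounded away from $\gamma^\star$), so the iterate stays in the cylinder; and (iii) choosing $R_{k+1}/R_k$ to be a constant ratio $>1$ — concretely tuned so that $R_{k+1}$ equals the norm reached after the in-cycle NGD steps — makes $R_k=e^{\Theta(k)}$ consistent with the dynamics. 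I would set $T_{k+1}-T_k$ equal to a fixed constant (as in the toy analysis), so $T_k=\Theta(k)$.

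\textbf{Key steps in order.} First, quantify the warm-up phase: invoke the known results of \citet{soudry2018implicit} (for GD) or \citet{ji2021characterizing} (for NGD) to guarantee that after $T_{\mathrm w}$ steps, $\gamma(\bw(T_{\mathrm w}))\ge\gamma^\star/2$ (say), $\norm{\bw(T_{\mathrm w})}$ is large enough, and $\cP_\perp$ is small — enough to place the rescaled vector in $\bbC(D,2D;H)$ after the first rescaling. Second, prove the one-cycle contraction lemma: if $\bw(T_k+1)\in\bbC(D,2D;H)$ with $\norm{\bw(T_k+1)}=R_k$, then after one rescaling plus $(T_{k+1}-T_k-1)$ projected-NGD steps, $\norm{\cP_\perp(\bw(T_{k+1}))}\le D$-ish while $\norm{\bw(T_{k+1})}\ge(1+c)R_k$ for an absolute constant $c>0$, and the iterate is still in the cylinder — this is where $\varphi(\bw)\ge\mu$ is the engine. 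Third, set $R_{k+1}=(1+c)R_k$ so the invariant is reproduced, giving $R_k=R_0(1+c)^k=e^{\Theta(k)}$ and $\norm{\bw(t)}=e^{\Omega(t)}$. Fourth, convert norm growth into directional convergence: since $\cP_\perp(\bw(t))$ stays bounded (it lives in $\bbC(D,2D;H)$, so $\norm{\cP_\perp}\le 2D$) while $\norm{\bw(t)}=e^{\Omega(t)}$, Lemma-type estimates give $\norm{\hat\bw(t)-\bw^\star}=\Theta(\norm{\cP_\perp(\bw(t))}/\norm{\bw(t)})=e^{-\Omega(t)}$; then $\gamma^\star-\gamma(\bw(t))\le\norm{\hat\bw(t)-\bw^\star}=e^{-\Omega(t)}$ by the directional-convergence bound (Lemma~\ref{lemma: Margin error and Directional error}). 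Finally, handle the interior (non-rescaling, non-endpoint) time steps by monotonicity/interpolation so the rate holds for all $t$, not just $t=T_k$.

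\textbf{Main obstacle.} The crux is the one-cycle contraction lemma, specifically controlling the parallel component $\<\bw,\bw^\star\>$ under the projected NGD steps: the projection onto $\bbB(\bzero,R_k)$ can shrink $\<\bw,\bw^\star\>$, and one must show it cannot drop below $H$ within a cycle. This requires a lower bound on the parallel velocity $\<-\nabla\cL(\bw)/\cL(\bw),\bw^\star\>$ that is \emph{uniform} over the cylinder and bounded away from zero as long as the margin has not yet converged — essentially a companion estimate to Theorem~\ref{thm: Centripetal Velocity Analysis, main result}. Equivalently, since $\<-\nabla\cL(\bw)/\cL(\bw),\bw^\star\>\ge\gamma(\bw)$ always holds (a standard computation: the normalized negative gradient is a convex combination of $\bx_iy_i$, each with $\<\bx_iy_i,\bw^\star\>\ge\gamma^\star\ge\gamma(\bw)$), one gets a clean lower bound, but then one must trade off the growth of $\<\bw,\bw^\star\>$ against the cap $R_k$ so that projection rarely binds and, when it does, does not destroy the invariant — this bookkeeping, together with choosing the cycle length and the ratio $1+c$ compatibly with $D$, $H$, $\mu$, and $\eta$, is the delicate part. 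A secondary subtlety is ensuring the very first rescaled iterate after warm-up actually lands inside $\bbC(D,2D;H)$ rather than below height $H$ or outside the radial annulus; this dictates how large $T_{\mathrm w}$ must be and is a quantitative but routine consequence of the warm-up convergence rates.
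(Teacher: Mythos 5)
Your overall architecture matches the paper's: warm-up until the direction is accurate enough that a single rescaling lands the iterate in the favorable cylinder $\bbC(D,2D;H)$ of Theorem~\ref{thm: Centripetal Velocity Analysis, main result}, then constant-length cycles of rescaling plus NGD, then conversion of norm growth into $\norm{\hat\bw(t)-\bw^\star}=\Theta(\norm{\cP_\perp(\bw(t))}/\norm{\bw(t)})$ and an appeal to Lemma~\ref{lemma: Margin error and Directional error}. The warm-up quantification and the final conversion steps are fine.

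The gap is in your one-cycle contraction lemma, which misattributes the source of exponential growth. You claim that after the in-cycle projected NGD steps $\norm{\bw(T_{k+1})}\ge(1+c)R_k$ for an absolute constant $c>0$, and that $R_{k+1}$ can be ``tuned so that $R_{k+1}$ equals the norm reached after the in-cycle NGD steps.'' This cannot work: since $\norm{\nabla\cL(\bw)/\cL(\bw)}\le 1$ (Lemma~\ref{lemma: uniform project lower bound}), each NGD step changes $\norm{\bw}$ by at most $\eta$ additively, so with a constant cycle length and $\eta=\Theta(1)$ the norm at the end of a cycle is at most $R_k+O(1)$, not $(1+c)R_k$. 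If you then set $R_{k+1}$ equal to that norm, $R_k$ grows only linearly and the whole rate collapses to $O(1/t)$. The correct engine, which your proposal misses, is that the NGD step contracts the \emph{perpendicular} component \emph{multiplicatively}: with $\eta=\mu D$ and $\norm{\cP_\perp(\bw)}=D$, the centripetal bound gives $\norm{\cP_\perp(\bv)}^2\le D^2+\eta^2-2\eta D\mu=(1-\mu^2)D^2$. The rescaling step is then chosen adaptively, $R_{k+1}=D\norm{\bw(T_{k+1})}/\norm{\cP_\perp(\bw(T_{k+1}))}$, which stretches the whole vector by the factor $D/\norm{\cP_\perp(\bv)}\ge 1/\sqrt{1-\mu^2}>1$; it is this \emph{rescaling}, not the NGD steps, that multiplies the norm (and hence $\<\bw,\bw^\star\>$) by a constant factor per cycle while restoring $\norm{\cP_\perp}=D$ exactly, yielding $R_k=e^{\Theta(k)}$ and the exponential rate. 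Your fixed-ratio prescription $R_{k+1}=(1+c)R_k$ could in principle be made to work, but only by proving that the additive shrinkage of $\norm{\cP_\perp}$ per cycle balances the multiplicative stretch so the iterate stays in the annulus $[D,2D]$ forever — bookkeeping the adaptive choice avoids entirely. Finally, your main worry — the projection shrinking $\<\bw,\bw^\star\>$ below $H$ — is a non-issue: projection onto a ball centered at the origin is a radial rescaling, so it preserves the ratio $\<\bw,\bw^\star\>/\norm{\cP_\perp(\bw)}$, and the parallel component under NGD only increases (by at least $\eta\gamma^\star$ per step); no companion lower bound on the parallel velocity beyond $\gamma^\star$ is needed.
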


This theorem demonstrates that PRGD can achieve both directional convergence and margin maximization exponentially fast. In stark contrast, all existing algorithms maximize the margin at notably slower rates, including $\cO(1/\log t)$ for GD~\citep{soudry2018implicit}, $\cO(1/t)$ for NGD~\citep{ji2021characterizing}, and $\tilde{\cO}(1/t^2)$ for Dual Acceleration~\citep{ji2021fast,wang2022accelerated}.

The complete proof of Theorem \ref{thm: PRGD main thm} is deferred to Appendix~\ref{appendix: proof: PPGD} and here, we provide a  proof sketch to illustrate the intuition behind:
\begin{itemize}[leftmargin=2em]
\item The initial warm-up phase utilizes GD to secure a preliminary  directional convergence, albeit at a slower rate, such that the condition $\norm{\hat{\bw}(T_{\mathrm{w}})-\bw^\star}<\min
\{D/2H,1/2\}$ is satisfied. This condition is crucial as it allows for the subsequent stretching of $\bw(T_{\mathrm{w}})$ to the favorable semi-infinite hollow cylinder $\bbC(D,2D;H)$ (in Theorem~\ref{thm: Centripetal Velocity Analysis, main result}) through a straightforward norm rescaling. Without this condition, rescaling cannot reposition $\bw$ into the favorable region.


\item Following the warm-up phase, rescaling is employed to position $\bw(T_{\mathrm{w}})$ into the favorable semi-infinite hollow cylinder $\bbC(D,2D;H)$ by choosing $R_1=\frac{D}{\norm{\cP_{\perp}(\bw(T_{\mathrm{w}}))}}$. 
NGD steps taken thereafter yield a significant directional convergence, however, NGD steps also drive solutions to leave away from $\bbC(D,2D;H)$. To overcome this issue, 
by setting a sequence of progressively increasing radii $\{R_k\}$, we can reposition the parameter back to $\bbC(D,2D;H)$ again, as is evident illustrated in Figure~\ref{fig: 3 data vector field}. 
Lastly, through a simple geometric calculation, we can demonstrate that such directional convergence is exponentially fast. 
\end{itemize}

\begin{remark}
We  clarify that Theorem ~\ref{thm: PRGD main thm} establishes the exponentially fast margin maximization of PRGD under a particular family of hyperparameters. For a broader range of hyperparameter choices, we delve into experimetnal explorations in Section~\ref{section: experiments}.
\end{remark}

\begin{remark}
In Proposition~\ref{thm: 3data}, we have provided a tightly exponentially fast rate on Dataset~\ref{ass: dataset: 3data not intersect}, which satisfies Assumption~\ref{ass: linearly separable} and ~\ref{ass: non-degenerate data}. Hence, the tightness of Theorem~\ref{thm: PRGD main thm} is ensured.
\end{remark}



\subsection{Inefficiency of GD and NGD}
In order to theoretically justify RPGD's superiority over GD and NGD,  we need lower bounds of directional convergence and margin maximization rates for GD and NGD. However,  as mentioned above, previous studies have only established the upper bounds of GD and NGD as $\mathcal{O}(1/\log t)$ and $\mathcal{O}(1/t)$, respectively. In this section, we further identify mild assumptions, under which we show that those rates also serve as the lower bounds for GD and NGD.

\begin{theorem}[GD and NGD, Main Results]\label{thm: GD NGD main result}
Suppose Assumption~\ref{ass: linearly separable} and~\ref{ass: non-degenerate data} hold. Additionally, we assume $\gamma^\star\bw^\star\ne\frac{1}{|\cI|}\sum_{i\in\cI}\bx_i y_i$. Then,
 \begin{itemize}[leftmargin=2em]
    \item For GD \eqref{equ: GD} with $\eta\leq \eta_0$ starting from $\bw(0)=\bzero$ (where $\eta_0$ is a constant), it holds that
    \begin{align*}
        \norm{\hat{\bw}(t)-\bw^\star}=&\Theta(1/\log t);
        \\\gamma^\star-\gamma(\bw(t))=&\Omega\bracket{1/\log^2 t}.
    \end{align*}
    \item   
    For NGD~\eqref{equ: NGD} with $\eta\leq \eta_0$ starting from $\bw(0)=\bzero$ (where $\eta_0$ is a constant), there exists a subsequence $\bw(t_k)$ ($t_k\to\infty$) such that
    \begin{align*}
        \norm{\hat{\bw}(t_k)-\bw^\star}=&\Theta(1/t_k);
        \\\gamma^\star-\gamma(\bw(t_k))=&\Omega\bracket{1/t_k^2}.
    \end{align*}
\end{itemize}

\end{theorem}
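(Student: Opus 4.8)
The plan is to establish matching lower bounds for GD and NGD by carefully tracking the orthogonal component $\cP_\perp(\bw(t))$ along the trajectory, showing it cannot decay too fast, and combining this with known upper bounds on the growth of $\norm{\bw(t)}$. The extra assumption $\gamma^\star\bw^\star\ne\frac{1}{|\cI|}\sum_{i\in\cI}\bx_i y_i$ is precisely what guarantees a nonvanishing ``drift'' in the orthogonal direction: it means the (normalized) negative gradient, evaluated near the max-margin direction where the loss is dominated by the support vectors, does not point purely along $\bw^\star$, so the component $\cP_\perp$ of the update is bounded below in magnitude whenever $\bw$ is close to the max-margin direction. This is the analogue, in the general setting, of the ``green zone attractor'' phenomenon from Dataset~\ref{ass: dataset: 3data not intersect}.

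First I would set up the decomposition $\bw(t)=\rho(t)\bw^\star+\cP_\perp(\bw(t))$ where $\rho(t)=\<\bw(t),\bw^\star\>$. For the upper-bound side (the $\Theta$ matching), I would invoke the already-known results: $\norm{\hat{\bw}(t)-\bw^\star}=\cO(1/\log t)$ for GD \citep{soudry2018implicit} and $\cO(1/t)$ for NGD \citep{ji2021characterizing} (Lemma~\ref{thm: NGD upper bound ji} in the appendix), together with the fact that $\rho(t)$ grows like $\log t$ for GD and like $t$ for NGD. It then suffices to lower-bound $\norm{\cP_\perp(\bw(t))}$ by an absolute constant along the trajectory (or along a subsequence, for NGD). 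Indeed, once $\hat{\bw}(t)$ is close to $\bw^\star$, a Taylor/Pythagoras estimate gives $\norm{\hat\bw(t)-\bw^\star}=\Theta\bigl(\norm{\cP_\perp(\bw(t))}/\norm{\bw(t)}\bigr)=\Theta\bigl(\norm{\cP_\perp(\bw(t))}/\rho(t)\bigr)$, so a constant lower bound on $\norm{\cP_\perp}$ plus the $\rho(t)$ growth rate yields the directional lower bound, and then Lemma~\ref{lemma: Margin error and Directional error}-type relations (or a quadratic refinement thereof, explaining the $\log^2$ versus $\log$ and $t^2$ versus $t$ discrepancy: margin error is second order in directional error when one is near the optimum) yield the margin lower bounds $\Omega(1/\log^2 t)$ and $\Omega(1/t_k^2)$.

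The core lemma to prove is therefore: under Assumptions~\ref{ass: linearly separable}, \ref{ass: non-degenerate data} and $\gamma^\star\bw^\star\ne\frac{1}{|\cI|}\sum_{i\in\cI}\bx_i y_i$, there is a constant $c_0>0$ with $\liminf_t \norm{\cP_\perp(\bw(t))}\geq c_0$ for GD, and $\norm{\cP_\perp(\bw(t_k))}\geq c_0$ along a subsequence for NGD. The idea: since $\hat{\bw}(t)\to\bw^\star$, for large $t$ the support-vector margins $y_i\<\bw(t),\bx_i\>$ all grow, and the loss becomes dominated by terms with $i\in\cI$; hence the normalized gradient $-\nabla\cL(\bw(t))/\cL(\bw(t))$ converges in direction to $\sum_{i\in\cI}\tilde\alpha_i \bx_i y_i$ for a probability vector $\tilde\alpha$ whose entries are determined by the (vanishing) differences of margins. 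Projecting onto $\bw^{\star\perp}$: $\cP_\perp\bigl(\sum_i \tilde\alpha_i \bx_i y_i\bigr)$ — the assumption $\gamma^\star\bw^\star\ne\frac1{|\cI|}\sum_{i\in\cI}\bx_iy_i$ rules out the degenerate case where the uniform average lies on the axis, and combined with the non-degeneracy Assumption~\ref{ass: non-degenerate data} (which rules out $\cP_\perp$ of the gradient vanishing for all admissible weightings), one shows $\norm{\cP_\perp(\text{step direction})}\geq c_0>0$ for all $t$ large. This per-step orthogonal push, integrated against the slowly-growing $\rho(t)$, keeps $\norm{\cP_\perp(\bw(t))}$ from collapsing. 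For NGD, where one must pass to a subsequence, I would argue that $\cP_\perp(\bw(t))$ cannot converge to $0$ — if it did, the step direction would have a uniformly positive orthogonal component pointing \emph{away} from $0$, contradicting $\cP_\perp(\bw(t))\to 0$; so $\norm{\cP_\perp(\bw(t))}\geq c_0$ infinitely often.

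The main obstacle I anticipate is making the ``the normalized gradient direction near $\bw^\star$ has a uniformly bounded-below orthogonal component'' claim fully rigorous and uniform in $t$. The subtlety is that the weights $\tilde\alpha_i$ in the limiting gradient direction are \emph{not} the fixed dual variables $\alpha_i$ from Assumption~\ref{ass: non-degenerate data}; they fluctuate with $t$ (depending on the ordering and gaps among the support-vector margins), and one needs to show that \emph{no} limiting weighting can make $\cP_\perp$ of the gradient vanish — equivalently, that $\bw^\star\notin\operatorname{relint}$ of the cone spanned by the active $\bx_iy_i$ in a way compatible with the excluded equality, or more simply that the only way $\sum\tilde\alpha_i\bx_iy_i\parallel\bw^\star$ is the forbidden uniform weighting, which is exactly what the hypothesis $\gamma^\star\bw^\star\ne\frac{1}{|\cI|}\sum_{i\in\cI}\bx_iy_i$ together with Assumption~\ref{ass: non-degenerate data} is engineered to prevent (when $\hat\bw\to\bw^\star$, symmetry of the margins forces $\tilde\alpha$ toward uniform, so the limiting direction is the forbidden one only in the measure-zero excluded case). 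Handling the interplay between GD's larger step sizes (where $\eta\le\eta_0$ and the non-normalized gradient shrinks, complicating the per-step analysis) and NGD's cleaner normalization will require separate but parallel arguments, and getting the quadratic refinement for the margin error (degree-two vanishing of $\gamma^\star-\gamma(\bw)$ in $\norm{\hat\bw-\bw^\star}$ near the optimum, using that $\bw^\star$ is a maximizer on the sphere) right is where the $\log^2 t$ and $t_k^2$ exponents come from.
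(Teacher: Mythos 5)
Your skeleton coincides with the paper's: reduce both lower bounds to showing $\norm{\cP_{\perp}(\bw(t))}=\Omega(1)$ (for all large $t$ for GD, along a subsequence for NGD), combine this with $\norm{\bw(t)}=\Theta(\log t)$ resp.\ $\Theta(t)$ from Lemma~\ref{thm: NGD upper bound ji}, and upgrade the directional lower bound to the margin lower bound via the two-sided relation of Lemma~\ref{lemma: tight margin error and Directional error} (note this quadratic lower bound needs Assumption~\ref{ass: non-degenerate data}, not just smoothness of $\gamma$ near the maximizer). You also correctly identify that the extra hypothesis forbids $\frac{1}{|\cI|}\sum_{i\in\cI}\bx_iy_i$ from being parallel to $\bw^\star$. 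However, your central mechanism is incorrect. You claim the orthogonal component of the normalized negative gradient is uniformly bounded below ``whenever $\bw$ is close to the max-margin direction,'' and that directional convergence ``forces $\tilde\alpha$ toward uniform.'' Neither holds. Writing $\<\bw,\bx_iy_i\>=\<\bw,\bw^\star\>\gamma^\star+\<\cP_{\perp}(\bw),\bx_iy_i\>$ for $i\in\cI$, the support-vector part of the orthogonal component of the normalized gradient is (up to normalization) exactly $\nabla\cL_{\perp}(\cP_{\perp}(\bw))$, where $\cL_{\perp}(\bv)=\frac{1}{|\cI|}\sum_{i\in\cI}e^{-\<\bv,\cP_{\perp}(\bx_iy_i)\>}$. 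This vanishes at the minimizer $\bv^\star$ of $\cL_{\perp}$, and the trajectory is attracted to precisely $\cP_{\perp}(\bw(t))\approx\bv^\star$; being directionally close to $\bw^\star$ is perfectly compatible with $\cP_{\perp}(\bw)=\bv^\star$ because $\norm{\bw}\to\infty$. Correspondingly the weights $\tilde\alpha_i\propto e^{-\<\cP_{\perp}(\bw(t)),\bx_iy_i\>}$ do not tend to uniform; they tend to the dual variables $\alpha_i$, which are non-uniform exactly when $\bv^\star\ne\bzero$. The correct role of the hypothesis is the equivalence $\gamma^\star\bw^\star\ne\frac{1}{|\cI|}\sum_{i\in\cI}\bx_iy_i\iff\nabla\cL_{\perp}(\bzero)\ne\bzero\iff\bv^\star\ne\bzero$: the orthogonal push is bounded below only in a small ball around $\cP_{\perp}=\bzero$, and the $\Omega(1)$ bound on $\norm{\cP_{\perp}(\bw(t))}$ comes from the \emph{equilibrium of the orthogonal dynamics being away from the origin}, not from a never-vanishing push.

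This gap has concrete consequences. For GD, ruling out $\cP_{\perp}(\bw(t))\to\bzero$ by contradiction yields only a subsequence, whereas the theorem asserts the full-sequence $\Theta(1/\log t)$; the paper closes this by invoking the residual convergence $\bw(t)-\bw^\star\log t\to\tilde\bw$ (Theorem~\ref{thm: refined error result, soudry}) and checking $\cP_{\perp}(\tilde\bw)\ne\bzero$, since $\cP_{\perp}(\tilde\bw)=\bzero$ would force all dual variables equal and hence the forbidden identity. For NGD your contradiction argument (if $\cP_{\perp}(\bw(t))\to\bzero$ then the fixed push $-\eta\nabla\cL_{\perp}(\bzero)\ne\bzero$ prevents it) is the right local idea, but making it rigorous is the bulk of the paper's proof: one needs the imported facts that $\cL_{\perp}$ has a unique minimizer, is strongly convex on bounded sets, and that $\cP_{\perp}(\bw(t))$ stays bounded (Theorem~\ref{thm: orthogonal dynamics, ji}), together with a quantitative estimate (Lemma~\ref{lemma: perp gradient error estimate}) showing that near $\cP_{\perp}=\bzero$ and at large height the orthogonal update is an inexact gradient step on $\cL_{\perp}$ with relative error at most $1/2$, so that a geometric loss-decrease argument contradicts $\cL_{\perp}$ staying bounded away from its minimum. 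Your proposal names these difficulties but does not resolve them, and as written the GD full-sequence bound has no supporting argument.
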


To the best of our knowledge, Theorem~\ref{thm: GD NGD main result} provides the {\bf first} lower bounds of both directional convergence and margin maximization for GD and NGD. We anticipate that this result can help build intuitions for future analysis of the implicit bias and convergence of gradient-based algorithms.

As presented in Table~\ref{table: comparison tight}, under the same conditions--Assumption~\ref{ass: linearly separable},~\ref{ass: non-degenerate data}, and $\gamma^\star\bw^\star\ne\frac{1}{|\cI|}\sum_{i\in\cI}\bx_i y_i$, PRGD can achieve directional convergence {\em exponentially fast} with the rate $e^{-\Omega(t)}$. In contrast, Theorem~\ref{thm: GD NGD main result} ensures that GD exhibits a {\em tight} bound with exponentially slow rate $\Theta(1/\log t)$, as well as NGD maintains a  tight bound of polynomial speed $\Theta(1/t_k)$. Moreover, for margin maximization, Theorem~\ref{thm: GD NGD main result} also provides {\em nearly tight} lower bounds for GD and NGD. 


The detailed proof of Theorem~\ref{thm: GD NGD main result} is provided in Appendix~\ref{appendix: proof of hardness of GD and NGD}. 
While the proof of the lower bounds involves a more intricate convex optimization analysis compared to Proposition~\ref{thm: 3data} (especially for NGD due to its aggressive step size), the fundamental insights shared by both proofs are remarkably similar. 
Specifically, for NGD, our conditions ensures the existence of a nearly ``attractor'' region, such that (i) there exists a sequence of NGD $\bw(t_k)$ $(t_k\to\infty)$ falls within this region; (ii) the condition $\gamma^\star\bw^\star\ne\frac{1}{|\cI|}\sum_{i\in\cI}\bx_i y_i$ ensures a $\Omega(1)$ distance between the attractor region and the max-margin direction. 
Since the norm grows as $\norm{\bw(t_k)}=\Theta(t_k)$, NGD is cursed to have only $\Omega(1/\norm{\bw(t_k)})=\Omega(1/t_k)$ \text{directional convergence rate}.

\begin{figure*}[!hb]
    \vspace{.2cm}
    \centering
    {
    \includegraphics[width=0.24\textwidth]{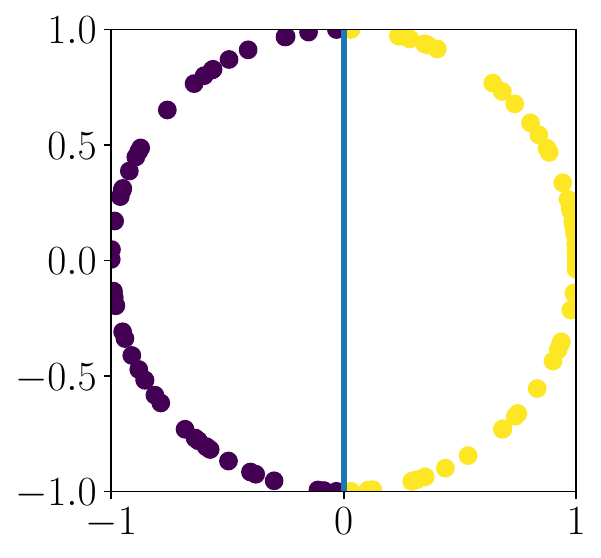}
    \hspace*{.2em}
    \includegraphics[width=0.22\textwidth]{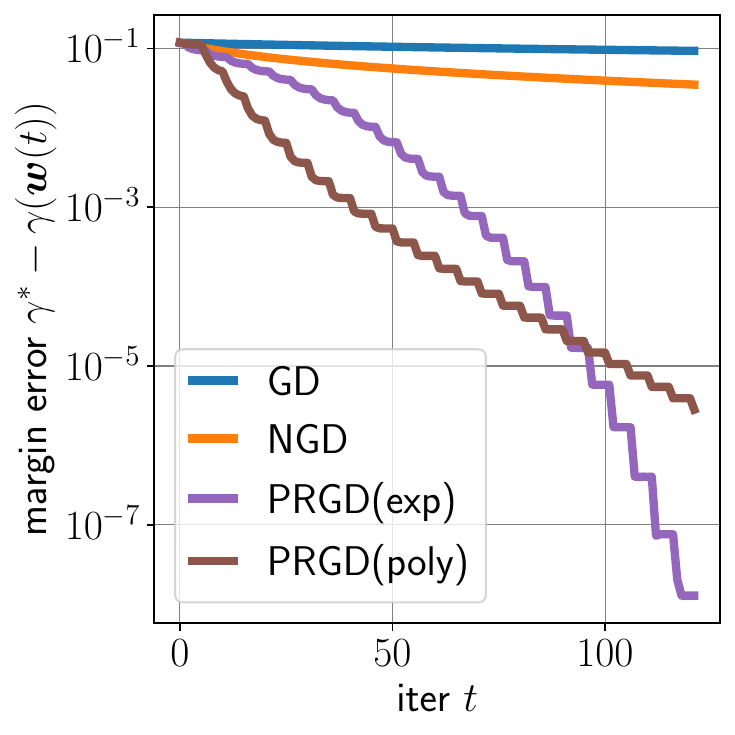}
    \hspace*{.5em}
    \includegraphics[width=0.22\textwidth]{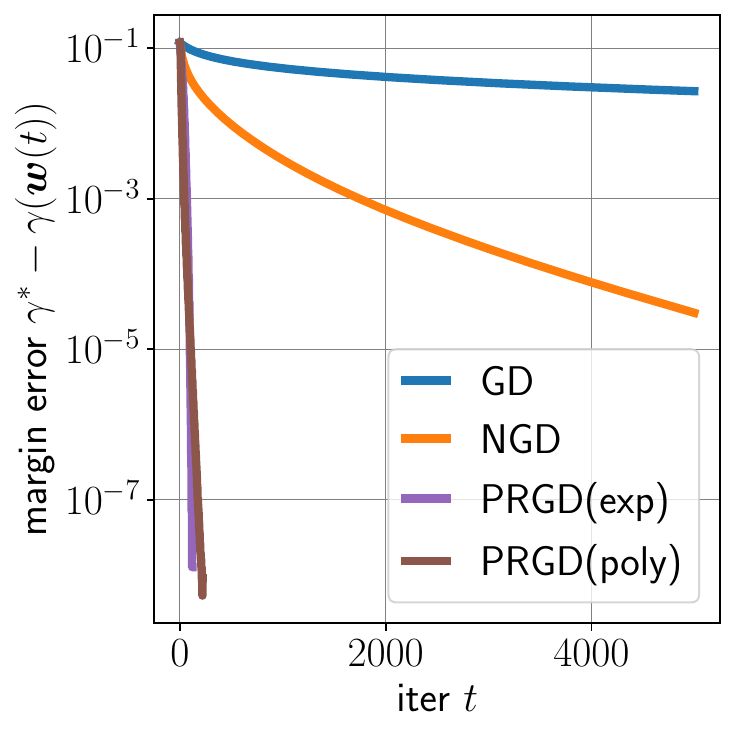}
    }
    \caption{\small Comparison of margin maximization rates of different algorithms on a synthetic dataset. (left) A visualization of the 2d synthetic dataset. The yellow points represent the data with label $1$, while the purple points corresponds to the data with label $1$; (middle)(right) The comparison of margin maximization rates of different algorithms on this dataset at small and large time scales, respectively.}
    \label{fig: synthetic data}
\end{figure*}

\section{Numerical Experiments}
\label{section: experiments}

\subsection{Linearly Separable Datasets}
\label{section: experiments I}

{\bf Experiments on Synthetic Dataset.} 
We start our experimental validations with two synthetic linearly separable datasets. For synthetic datasets, the value of  $\gamma^\star$ is explicit, and as such, we can explicitly compute the margin gap.
To ensure a fair comparison, we maintain the same step size $\eta=1$ for all GD, NGD, and PRGD. 

While our theoretical analysis (Theorem~\ref{thm: PRGD main thm}) is confined to a specific set of hyper-parameters, it is worth noting that Theorem~\ref{thm: Centripetal Velocity Analysis, main result}, a crucial property used in the proof of Theorem~\ref{thm: PRGD main thm}, holds over a relatively broad region. This flexibility enables a simpler selection of hyperparameters.
Following the guidelines provided in Theorem~\ref{thm: PRGD main thm}, we employ PRGD(exp) with hyperparameters: 
\begin{align*}
    T_{k+1}-T_k\equiv5, \ R_k=R_0\times 1.2^k.
\end{align*} 
To illustrate the impact of the progressive radius, we also examine PRGD(poly) configured with
\begin{align*}
    T_{k+1}-T_k\equiv5, \ 
    R_k=R_0\times k^{1.2},
\end{align*}
where the progressive radius increases polynomially. For more experimental details, refer to Appendix~\ref{appendix: experimental details synthetic}. 
Some of the experimental results are provided in Figure~\ref{fig: synthetic data}, and the complete results are referred to Appendix~\ref{appendix: experimental details synthetic}. Consistent with Theorem~\ref{thm: PRGD main thm}, PRGD(exp) indeed maximizes the margin (super-)exponentially fast, and surprisingly, PRGD(poly) also performs relatively well for this task. In contrast, NGD and GD reduce the margin gaps much more slowly, which substantiates Theorem~\ref{thm: GD NGD main result}.

\textbf{Experiments on Real-World Datasets.}
In this case, we extend our experiments to real-world datasets.
Specifically, we employ the \texttt{digit} datasets from \texttt{Sklearn}, which are image classification tasks with $d=64$, $n=300$.
In this real-world setting, we lack prior knowledge of the exact $\gamma^\star$.
Instead, we approximate $\gamma^\star$ by employing $\gamma(\bw(t))$ obtained by a sufficiently trained NGD. 
Additionally, it is worth noting that these datasets do not satisfy the second part of Assumption~\ref{ass: non-degenerate data}. For instance, in the \texttt{digit-01} dataset, $\rank\{\bx_i:i\in\cI\}=2<\rank\{\bx_i:i\in[n]\}=51$, where $\rank\{\bx_i:i\in\cI\}$ is calculated by approximating $\bw^\star$ using $\hat{\bw}(t)$, obtained through training NGD sufficiently.

In real experiments, we test both PRGD(exp) and PRGD(poly) and consistently observe that the latter performs much better. Therefore, in this experiment, we employ a modified variant of PRGD with smaller progressive radii: 
\begin{equation}\label{eqn: qq1}
R_k=R_0 \cdot k^\alpha,\quad T_{k+1}-T_k=T_0 \cdot k^\beta,
\end{equation}
where $\alpha, \beta$ are hyperparameters to be tuned.

The results with well-tuned hyperparameters $\alpha=\beta=0.6$ are presented in Figure~\ref{fig: digit}. It is evident that, in these real-world datasets, PRGD consistently beats GD and NGD in terms of margin maximization rates.

\begin{figure}[!ht]
    \centering
    \includegraphics[width=3.7cm]{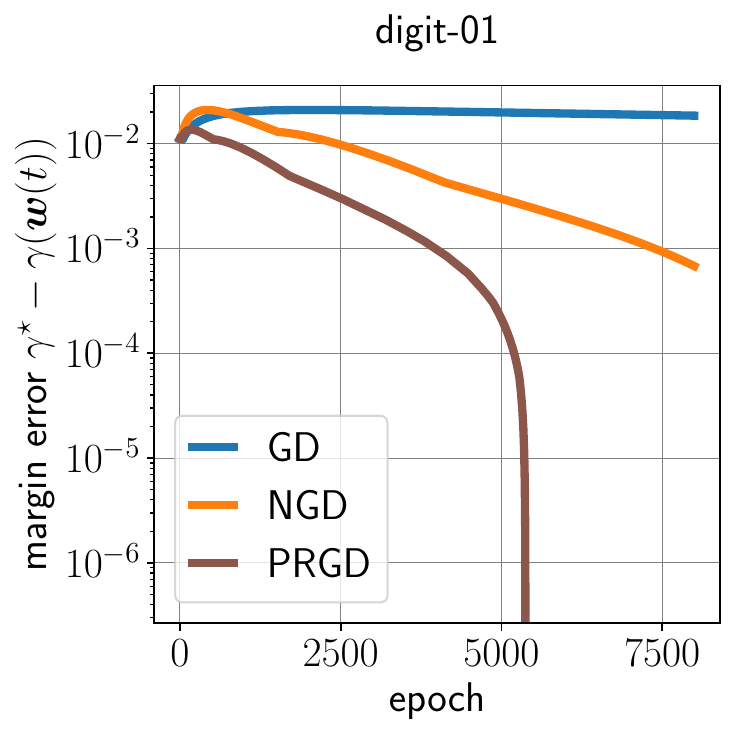}
    \hspace{0.2cm}
    \includegraphics[width=3.7cm]{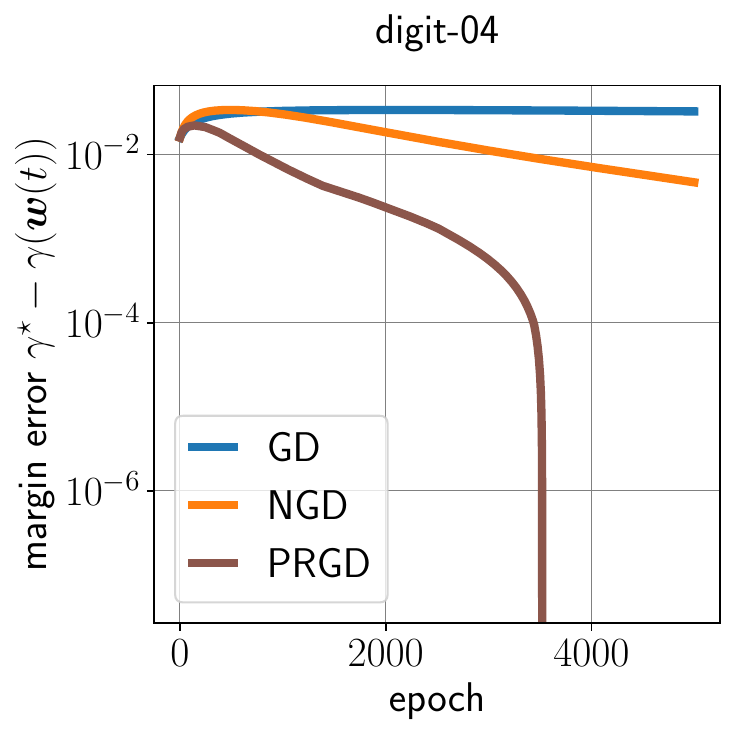}
    \caption{\small Comparison of margin maximization rates of different algorithms on \texttt{digit} (real-word) datasets. (Left) the results on \texttt{digit-01} dataset; (Right) the results on \texttt{digit-04} dataset.}
    \label{fig: digit}
\end{figure}

\subsection{Linearly Non-separable Datasets and Deep Neural Networks}
In this subsection, we further explore the potential practical utilities of PRGD for datasets that are not linearly separable. 
1) In the {\bf first} experiment, we still consider linear models but for classifying a linearly non-separable dataset, \texttt{Cancer} in \texttt{Sklearn}.
2) For the {\bf second} experiment, we examine the performance of PRGD for  deep neural networks. 
Inspired by~\citet{lyu2019gradient,ji2020directional}, the max-margin bias also exists for homogenized neural networks. Thus, we follow~\citet{lyu2019gradient} and examine our algorithm for homogenized VGG-16 network~\citep{vgg} on the full CIFAR-10 dataset~\citep{krizhevsky2009learning}, without employing any explicit regularization. 
Additionally, in this setting, we employ  mini-batch stochastic gradient  instead of the full gradient for these algorithms, and we also fine-tune the learning rates of GD, NGD, and PRGD. Both NGD and PRGD share the same learning rate scheduling strategy as described in~\citet{lyu2019gradient}. 
For both experiments,  we follow the same strategy as described in~\eqref{eqn: qq1} to tune the progressive hyperparameters of PRGD. For more experimental details, please refer to Appendix~\ref{appendix: experimental details VGG}.

The experimental results are presented in Fig~\ref{fig: cancer} and Fig~\ref{fig: vgg}, respectively. One can see that our PRGD algorithm archives better generalization performance and outperforms GD and NGD for both tasks.

\begin{figure}[ht]
    \centering
    \subfloat[Linear Model on \texttt{Cancel}]{\label{fig: cancer}
    \includegraphics[width=4.18cm]{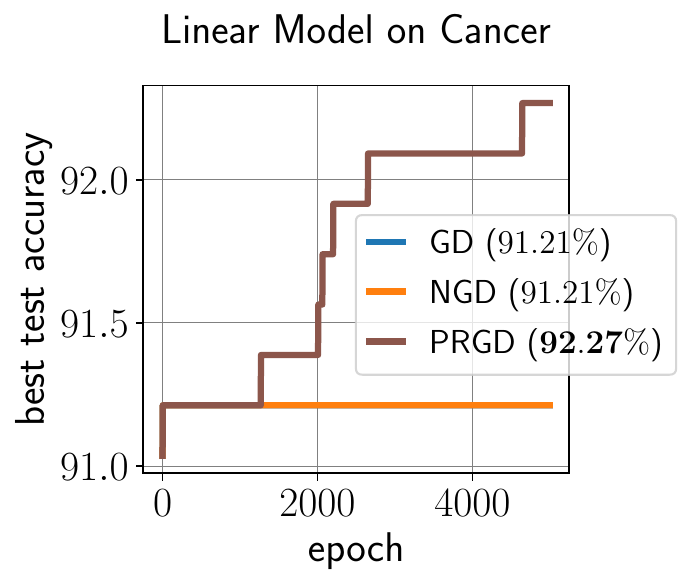}}
    \subfloat[VGG on Cifar-10]{\label{fig: vgg}
    \includegraphics[width=4.1cm]{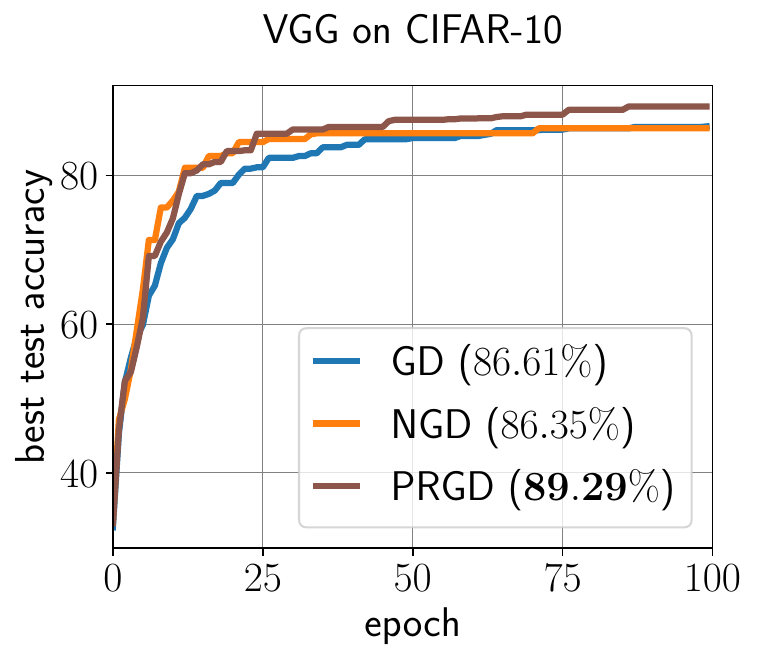}}
    \caption{\small Comparison of the generalization performance of GD, NGD, and PRGD for non-linearly separable datasets and deep neural networks.}
\end{figure}

\section{Concluding Remark}
In this work, we investigate the mechanisms driving the convergence of gradient-based algorithms towards max-margin solutions. Specifically, we elucidate why GD and NGD can only achieve polynomially fast margin maximization by examining the properties of the velocity field linked to (normalized) gradients. This analysis inspires the design of a novel algorithm called PRGD that significantly accelerates the process of margin maximization. To substantiate our theoretical claims, we offer both synthetic and real-world experimental results, thereby underscoring the potential practical utility of the proposed PRGD algorithm. Looking ahead, an intriguing avenue for future research is the application of progressive norm rescaling techniques to state-of-the-art real-world models. In addition, it would be worthwhile to explore how PRGD can cooperate with other explicit regularization techniques, such as batch normalization, dropout, and sharpness-aware minimization~\citep{foret2020sharpness}, to further improve the generalization performance.

\section*{Acknowledgements}

Mingze Wang is supported in part by the National Key Basic Research Program of China (No. 2015CB856000). Lei Wu is supported in part by a startup fund from Peking University. We thank Dr. Chao Ma, Zihao Wang, Zhonglin Xie for helpful discussions and anonymous reviewers for their valuable suggestions.

\section*{Impact Statement}
This paper presents work whose goal is to advance the theoretical aspects of Machine Learning and, specifically, the margin maximization bias of gradient-based algorithms.
There are many potential societal consequences of our work, none which we feel must be specifically highlighted here.



\newpage
\appendix
\onecolumn
\newpage
\appendix

\begin{center}
    \noindent\rule{\textwidth}{1.0pt} 
    \vspace{-0.25cm}
    \LARGE \textbf{Appendix} 
    \noindent\rule{\textwidth}{1.0pt}
\end{center}

\startcontents[sections]
\printcontents[sections]{l}{1}{\setcounter{tocdepth}{2}}


\vspace{1.cm}
\section{Proofs in Section~\ref{section: methodology}}
\label{appendix: proof: methodology}

\subsection{Proof of Proposition~\ref{thm: 3data}}

\begin{proof}[Proof of Proposition~\ref{thm: 3data}]\ \\ 
\underline{Step I. Regularized path analysis.}

For simplicity, we denote $\bz_1=\bx_1y_1$ and $\bz_2=\bx_2y_2$.
\begin{align*}
    \cL(\bw)=\frac{1}{3}\bracket{2e^{-\bw^\top\bz_1}+e^{-\bw^\top\bz_2}}=\frac{1}{3}e^{-w_1\gamma}\bracket{2e^{-w_2\sqrt{1-\gamma^2}}+e^{w_2\sqrt{1-\gamma^2}}}.
\end{align*}
\begin{align*}
    \nabla \cL(\bw)=\begin{pmatrix}
    -\frac{1}{3}e^{-w_1\gamma}\gamma\bracket{2e^{-w_2\sqrt{1-\gamma^2}}+e^{w_2\sqrt{1-\gamma^2}}}\\
    \frac{1}{3}e^{-w_1\gamma}\sqrt{1-\gamma^2}\bracket{-2e^{-w_2\sqrt{1-\gamma^2}}+e^{w_2\sqrt{1-\gamma^2}}}
    \end{pmatrix}.
\end{align*}

For any fixed $R>0$, we will calculate the regularized solution in the ball $\norm{\bw}_2\leq R$.

From the expression of $\nabla\cL(\bw)$, we know $\nabla\cL(\bw)\ne\bzero$ for any $\bw\in\bbR^d$. Hence, it must holds $\norm{\bw_{\rm reg}^*(R)}_2=R$. Moreover, we can determine the signal of $w_{\rm reg,1}^*(R)$ and $w_{\rm reg,2}^*(R)$. From the symmetry of the $\ell_2$ ball, we know $w_{\rm reg,1}^*(R)<0$ and $w_{\rm reg,2}^*(R)>0$. This is because: if $w_{\rm reg,1}^*(R)>0$, then $\cL(-w_{\rm reg,1}^*(R),w_{\rm reg,2}^*(R))<\cL(w_{\rm reg,1}^*(R),w_{\rm reg,2}^*(R))$, which is contradict to the optimum of $\bw_{\rm reg}^*(R)$. 

Then from the optimum and differentiability, we have
\begin{align*}
    \frac{\<\bw_{\rm reg}^*(R),-\nabla\cL\bracket{\bw_{\rm reg}^*(R)}\>}{R\norm{\nabla\cL\bracket{\bw_{\rm reg}^*(R)}}_2}=1,
\end{align*}
which means
\begin{align*}
\bw_{\rm reg}^*(R)\pll\nabla\cL\bracket{\bw_{\rm reg}^*(R)},\quad\<\bw_{\rm reg}^*(R),\nabla\cL\bracket{\bw_{\rm reg}^*(R)}\><0.
\end{align*}
For simplicity, we use the notation $w_1(R):=w_{\rm reg,1}^*(R)$, $w_2(R):=w_{\rm reg,2}^*(R)$ in the proof below.

By a straightforward calculation and taking the square, we have
\begin{align*}
    \frac{(1-\gamma^2)\bracket{e^{2w_2(B)\sqrt{1-\gamma^2}}+4e^{-2w_2(B)\sqrt{1-\gamma^2}}-4}}{\gamma^2\bracket{e^{2w_2(R)\sqrt{1-\gamma^2}}+4e^{-2w_2\sqrt{1-\gamma^2}}+4}}=\frac{w_2^2(B)}{w_1^2(R)}=\frac{w_2^2(R)}{R^2-w_2^2(R)},
\end{align*}
which is equivalent to
\begin{equation}\label{equ: proof of Theorem 3data: explicit expression}
    \begin{aligned}
    \frac{R^2}{w_2^2(R)}=\frac{1}{1-\gamma^2}+\frac{8\gamma^2}{(1-\gamma^2)\bracket{e^{2w_2(R)\sqrt{1-\gamma^2}}+4e^{-2w_2(R)\sqrt{1-\gamma^2}}-4}}.
\end{aligned}
\end{equation}

With the help of Lemma \ref{lemma: ji regularization path convergence}, we know 
\begin{align*}
    \lim_{R\to\infty}\<\bw^*,\frac{\bw_{\rm reg}^*(R)}{R}\>=\lim_{R\to\infty}\frac{w_1(R)}{\sqrt{w_1^2(R)+w_2^2(R)}}=1,
\end{align*}
which means $\lim\limits_{R\to\infty}\frac{w_2^2(R)}{R^2}=0$. Then taking $R\to\infty$ in \eqref{equ: proof of Theorem 3data: explicit expression}, we have
\begin{align*}
    \lim_{R\to\infty}\bracket{e^{2w_2(R)\sqrt{1-\gamma^2}}+4e^{-2w_2(R)\sqrt{1-\gamma^2}}}=4.
\end{align*}
A straight-forward calculation gives us
\begin{align*}
    \lim_{R\to\infty}w_2(R)=\frac{\log2}{2\sqrt{1-\gamma^2}}.
\end{align*}


\underline{Step II. Proof for NGD.} 

Following the proof, we have
\begin{align*}
    -\frac{\nabla\cL(\bw)}{\cL(\bw)}
    =\begin{pmatrix}
        \gamma \\
        \sqrt{1-\gamma^2}\bracket{2-e^{2w_2\sqrt{1-\gamma^2}}}\big/\bracket{2+e^{2w_2\sqrt{1-\gamma^2}}}
    \end{pmatrix}.
\end{align*}

For NGD, it holds that:
\begin{align*}
    w_1(t+1)&=w_1(t)+\gamma,
    \\
    w_2(t+1)&=w_2(t)+\sqrt{1-\gamma^2}\bracket{2-e^{2w_2(t)\sqrt{1-\gamma^2}}}\big/\bracket{2+e^{2w_2(t)\sqrt{1-\gamma^2}}}.
\end{align*}
It is worth noticing that the dynamics of $w_1(t)$ and $w_2(t)$ are decoupled. For $w_1(t)$, it is easy to verify that $w_1(t)=\gamma t$, $\forall t\geq1$. As for $w_2(t)$, we will estimate the uniform upper and lower bounds.

For simplicity, we denote $x(t):=2w_2(t)\sqrt{1-\gamma^2}-\log2$. From the dynamics of $w_2(t)$, the dynamics of $x(t)$ are
\begin{align*}
    x(t+1)=x(t)+2(1-\gamma^2)\frac{1-e^{x(t)}}{1+e^{x(t)}}=x(t)+2(1-\gamma^2)\bracket{\frac{2}{1+e^{x(t)}}-1}.
\end{align*}
Then we will prove that $|x(t)|\leq\frac{1}{2}\log2$ holds for $t\geq1$ by induction. 

From $x(0)=-\log2$, we have $x(1)=-\log2+\frac{2(1-\gamma^2)}{3}\in\left[-\frac{1}{2}\log2,\frac{1}{2}\log2\right]$.

Assume that $x(t)\in\left[-\frac{1}{2}\log2,\frac{1}{2}\log2\right]$ holds for any $t\leq k$, and we denote $h(x):=x+2(1-\gamma^2)\left(\frac{2}{1+e^x}-1\right)$. Then with the help of Lemma~\ref{lemma: increase 3 data NGD}, the following estimate holds for $t=k+1$:
\begin{align*}
    x(k+1)=&h\left(x(k)\right)\leq h\left(\frac{1}{2}\log2\right)=\frac{1}{2}\log 2+2(1-\gamma^2)\frac{1-\sqrt{2}}{1+\sqrt{2}}<\frac{1}{2}\log 2;
    \\
     x(k+1)=&h\left(x(k)\right)\geq h\left(-\frac{1}{2}\log2\right)=-\frac{1}{2}\log 2+2(1-\gamma^2)\frac{\sqrt{2}-1}{\sqrt{2}+1}>-\frac{1}{2}\log 2.
\end{align*}

By induction, we have proved that $x(t)\in\left[-\frac{1}{2}\log2,\frac{1}{2}\log2\right]$ holds for any $t\geq1$. This implies that $w_2(t)\in\left[\frac{\log2}{4\sqrt{1-\gamma^2}},\frac{3\log2}{4\sqrt{1-\gamma^2}}\right]$ holds for any $t\geq1$.
Hence, 
\begin{align*}
    \frac{\log2}{4t\gamma\sqrt{1-\gamma^2}}\leq\frac{w_2(t)}{w_1(t)}\leq\frac{3\log2}{4t\gamma\sqrt{1-\gamma^2}},\ \forall t\geq 1.
\end{align*}
From the definition of directional convergence, we have:
\begin{align*}
    &\norm{\frac{\bw(t)}{\norm{\bw(t)}}-\bw^\star}=\sqrt{2\left(1-\<\frac{\bw(t)}{\norm{\bw(t)}},\be_1\>\right)}
    =\sqrt{2\left(1-\frac{w_1(t)}{\sqrt{w_1^2(t)+w_2^2(t)}}\right)}
    \\=&
    \sqrt{2\left(1-\frac{1}{\sqrt{\frac{w_2^2(t)}{w_1^2(t)}+1}}\right)}=\Theta\left(\left|\frac{w_2(t)}{w_1(t)}\right|\right)=\Theta\left(\frac{1}{t}\right).
\end{align*}

From the definition of margin, we have:
\begin{align*}
    &\gamma(\bw(t))-\gamma^\star=\min_{i\in[2]}\left<\frac{\bw(t)}{\norm{\bw(t)}},\bz_i\right>-\gamma^\star=\left<\frac{\bw(t)}{\norm{\bw(t)}},\bz_2\right>-\gamma^\star
    \\=&\frac{w_1(t)\gamma^\star-w_2(t)\sqrt{1-{\gamma^\star}^2}}{\sqrt{w_1^2(t)+w_2^2(t)}}-\gamma^\star
    =\frac{-\frac{w_2(t)}{w_1(t)}\sqrt{1-{\gamma^\star}^2}+\gamma^\star}{\sqrt{\frac{w_2^2(t)}{w_1^2(t)}+1}}-\gamma^\star
    \\=&-\frac{\frac{w_2(t)}{w_1(t)}\sqrt{1-{\gamma^\star}^2}}{\sqrt{\frac{w_2^2(t)}{w_1^2(t)}+1}}+\gamma^\star\left(\left(\frac{w_2^2(t)}{w_1^2(t)}+1\right)^{-1/2}-1\right)
    =-\Theta\left(\frac{w_2(t)}{w_1(t)}\right)-\Theta\left(\frac{w_2^2(t)}{w_1^2(t)}\right)
    \\=&-\Theta\left(\frac{w_2(t)}{w_1(t)}\right)=-\Theta\left(\frac{1}{t}\right).
\end{align*}

\underline{Step II. Proof for PRGD.}

For PRGD, to maximize margin exponentially fast, we only need to select $R_k=e^{\Theta(k)}$ and $T_k=\Theta(k)$.
Notice that the choices of $R_k$ and $T_k$ are not unique. For simplicity, we use the following choice to make our proof clear.
\begin{itemize}
    \item Phase I. We run NGD with $\eta=1$ for one step.
    \item Phase II. We run PRGD with $\eta=1$ for $t\geq 1$. Specifically, we select $T_k$ and $R_k$ such that: 
    \begin{gather*}
        T_0=1;\quad T_{k+1}=T_k+2,\ \forall k\geq 0;\quad
        R_{k}=\frac{\norm{\bw(T_k)}}{w_2(T_k)},\ \forall k\geq 0.
    \end{gather*}
\end{itemize}

Recalling our proof for NGD, at the end of Phase I, it holds that $w_2(1)\in\left[\frac{\log2}{4\sqrt{1-{\gamma^\star}^2}},\frac{3\log2}{4\sqrt{1-{\gamma^\star}^2}}\right]$ and $w_1(1)=\gamma^\star$. Then we analyze Phase II. For simplicity, we denote an abosulte constant $q={1+\sqrt{1-{\gamma^\star}^2}\bracket{2-e^{2\sqrt{1-{\gamma^\star}^2}}}\big/\bracket{2+e^{2\sqrt{1-{\gamma^\star}^2}}}}\in(0,1)$.

\begin{itemize}
    \item (S1). $w_2(2k+2)=1$ and $\norm{\bw(2k+2)}=R_k$ hold for any $k\geq0$;
    \item (S2). $\frac{w_2(2k+2)}{w_1(2k+2)}=\frac{w_2(2k+1)}{w_2(2k+1)}$ holds for any $k\geq0$;
    \item (S3). $w_1(2k+2)=\bracket{\frac{1}{q}}^{k}\bracket{w_1(2)+\frac{\gamma^\star}{1-q}}-\frac{\gamma^\star}{1-q}=e^{\Theta(k)}$.
    \item (S4). $R_k=e^{\Theta(k)}$.
    \item (S5). $\frac{w_2(t)}{w_1(t)}=e^{-\Theta(t)}$.
\end{itemize}

According to the update rule of Algorithm~\ref{alg: PRGD}, (S1)(S2) hold directly. 

Then we will prove (S3). Recalling the update rule, for $2k+3$, it holds that
\begin{align*}
    &w_2(2k+3)=w_2(2k+2)+\sqrt{1-{\gamma^\star}^2}\bracket{2-e^{2w_2(2k+2)\sqrt{1-{\gamma^\star}^2}}}\big/\bracket{2+e^{2w_2(2k+2)\sqrt{1-{\gamma^\star}^2}}}
    \\=&1+\sqrt{1-{\gamma^\star}^2}\bracket{2-e^{2\sqrt{1-{\gamma^\star}^2}}}\big/\bracket{2+e^{2\sqrt{1-{\gamma^\star}^2}}}:=q;
\end{align*}
\begin{align*}
    w_1(2k+3)=w_1(2k+2)+\gamma^\star.
\end{align*}
Recalling (S2), we have:
\begin{align*}
    w_1(2k+4)=w_2(2k+4)\frac{w_1(2k+3)}{w_2(2k+3)}=1\cdot\frac{w_1(2k+2)+\gamma^\star}{q}=\frac{1}{q}\left(w_1(2k+2)+\gamma^\star\right),
\end{align*}
where $1+\sqrt{1-{\gamma^\star}^2}\bracket{2-e^{2\sqrt{1-{\gamma^\star}^2}}}\big/\bracket{2+e^{2\sqrt{1-{\gamma^\star}^2}}}\in(0,1)$.

Consequently, a simple calculation can imply (S3):
\begin{align*}
     w_1(2k+2)=\bracket{\frac{1}{q}}^{k}\bracket{w_1(2)+\frac{\gamma^\star}{1-q}}-\frac{\gamma^\star}{1-q}=e^{\Theta(k)},\quad \forall k\geq0.
\end{align*}

Then using (S1) and (S3), (S4) holds:
\begin{align*}
    R_k=\norm{\bw(2k+2)}=\sqrt{1+w_1^2(2k+2)}=e^{\Theta(k)}.
\end{align*}

By (S1)(S2) and (S3), we have:
\begin{align*}
    \frac{w_2(2k+1)}{w_1(2k+1)}=\frac{w_2(2k+2)}{w_1(2k+2)}=\frac{1}{w_1(2k+2)}=e^{-\Theta(2k+2)},
\end{align*}
which implies (S4).

From the definition of directional convergence, we have:
\begin{align*}
    &\norm{\frac{\bw(t)}{\norm{\bw(t)}}-\bw^\star}=\sqrt{2\left(1-\<\frac{\bw(t)}{\norm{\bw(t)}},\be_1\>\right)}
    =\sqrt{2\left(1-\frac{w_1(t)}{\sqrt{w_1^2(t)+w_2^2(t)}}\right)}
    \\=&
    \sqrt{2\left(1-\frac{1}{\sqrt{\frac{w_2^2(t)}{w_1^2(t)}+1}}\right)}=\Theta\left(\frac{w_2(t)}{w_1(t)}\right)=e^{-\Theta(t)}.
\end{align*}
From the definition of margin, we have:
\begin{align*}
    &\gamma(\bw(t))-\gamma^\star=\min_{i\in[2]}\left<\frac{\bw(t)}{\norm{\bw(t)}},\bz_i\right>-\gamma^\star=\left<\frac{\bw(t)}{\norm{\bw(t)}},\bz_2\right>-\gamma^\star
    \\=&\frac{w_1(t)\gamma^\star-w_2(t)\sqrt{1-{\gamma^\star}^2}}{\sqrt{w_1^2(t)+w_2^2(t)}}-\gamma^\star
    =\frac{-\frac{w_2(t)}{w_1(t)}\sqrt{1-{\gamma^\star}^2}+\gamma^\star}{\sqrt{\frac{w_2^2(t)}{w_1^2(t)}+1}}-\gamma^\star
    \\=&-\frac{\frac{w_2(t)}{w_1(t)}\sqrt{1-{\gamma^\star}^2}}{\sqrt{\frac{w_2^2(t)}{w_1^2(t)}+1}}+\gamma^\star\left(\left(\frac{w_2^2(t)}{w_1^2(t)}+1\right)^{-1/2}-1\right)
    =-\Theta\left(\frac{w_2(t)}{w_1(t)}\right)-\Theta\left(\frac{w_2^2(t)}{w_1^2(t)}\right)
    \\=&-\Theta\left(\frac{w_2(t)}{w_1(t)}\right)=-e^{-\Theta(t)}.
\end{align*}

\end{proof}

\subsection{Useful Lemmas}

\begin{lemma}[Margin error and Directional error]\label{lemma: Margin error and Directional error}
    Under Assumption~\ref{ass: linearly separable}, for any $\bw\in\bbR^d$, it holds that $\gamma^\star-\gamma(\bw)\leq\norm{\frac{\bw}{\norm{\bw}}-\bw^\star}$.
\end{lemma}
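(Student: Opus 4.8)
The plan is to use the elementary fact that a pointwise minimum of affine functions that are each $1$-Lipschitz (here, in the direction variable) is itself $1$-Lipschitz, and to avoid any reference to the separability structure since the inequality is purely geometric. First I would fix $\bw\in\bbR^d$ with $\bw\neq\bzero$ and pick an index $i^\star\in[n]$ attaining the minimum in the definition of the normalized margin, so that $\gamma(\bw)=y_{i^\star}\<\hat{\bw},\bx_{i^\star}\>$. By definition of $\gamma^\star=\gamma(\bw^\star)=\min_{i\in[n]}y_i\<\bw^\star,\bx_i\>$ we have the one-sided bound $\gamma^\star\le y_{i^\star}\<\bw^\star,\bx_{i^\star}\>$, and subtracting the two displays gives $\gamma^\star-\gamma(\bw)\le y_{i^\star}\<\bw^\star-\hat{\bw},\bx_{i^\star}\>$.

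The second step is to estimate this inner product by Cauchy--Schwarz: $y_{i^\star}\<\bw^\star-\hat{\bw},\bx_{i^\star}\>\le\norm{\bw^\star-\hat{\bw}}\norm{\bx_{i^\star}}\le\norm{\bw^\star-\hat{\bw}}$, where I use $|y_{i^\star}|=1$ together with the normalization $\norm{\bx_i}\le1$ imposed in the problem setup. Chaining the two inequalities yields $\gamma^\star-\gamma(\bw)\le\norm{\hat{\bw}-\bw^\star}$, which is the claim.

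There is essentially no obstacle here: the only points needing a word of care are that the minimizing index $i^\star$ exists (immediate, since $[n]$ is finite) and that $\bw^\star$ and $\gamma^\star$ are well defined (guaranteed by compactness of $\bbS^{d-1}$ and continuity of $\gamma$, which is where Assumption~\ref{ass: linearly separable} makes the statement meaningful, even though it is not otherwise used in the estimate). The whole argument is a two-line computation once the minimizing index is introduced.
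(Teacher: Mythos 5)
Your proof is correct and is essentially identical to the paper's: both select the index attaining the minimum for $\hat{\bw}$, bound $\gamma^\star$ by the value of $\bw^\star$ at that same index, and finish with Cauchy--Schwarz together with $\norm{\bx_i}\le 1$. No issues.
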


\begin{proof}[Proof of Lemma~\ref{lemma: Margin error and Directional error}]\ \\
Let $\bw\in\bbR^d$ and denote $i_0\in\mathop{\arg\min}\limits_{i\in[n]} y_i\<\frac{\bw}{\norm{\bw}},\bx_i\>$. Then we have:
\begin{align*}
    &\gamma^\star-\gamma(\bw)=\min_{i}y_i\<\bw^\star,\bx_i\>-\min_{i}y_i\<\frac{\bw}{\norm{\bw}},\bx_i\>
    \\=&
    \min_{i}y_i\<\bw^\star,\bx_i\>-y_{i_0}\<\frac{\bw}{\norm{\bw}},\bx_{i_0}\>
    \\\leq&
    y_{i_0}\<\bw^\star,\bx_{i_0}\>-y_{i_0}\<\frac{\bw}{\norm{\bw}},\bx_{i_0}\>=y_{i_0}\<\bw^\star-\frac{\bw}{\norm{\bw}},\bx_{i_0}\>
    \\\leq&
    \norm{\frac{\bw}{\norm{\bw}}-\bw^\star}.
\end{align*}

\begin{lemma}[Integration of \citep{soudry2018implicit,ji2020gradient}]\label{lemma: ji regularization path convergence}
For problem~\eqref{pro: logistic regression}, Gradient Flow convergences to the $\ell_2$ max-margin direction $\bw^*$, hence the regularization path also convergences to the $\ell_2$ max-margin solution: $\lim\limits_{B\to\infty}\frac{\bw_{\rm reg}^*(B)}{B}=\bw^*$.
\end{lemma}



\end{proof}


\vspace{1.cm}
\section{Proofs in Section~\ref{section: landscape analysis}}\label{appendix: Centripetal Velocity Analysis}

\begin{proof}[Proof of Theorem~\ref{thm: Centripetal Velocity Analysis, main result}]\ \\
Without loss of generality, we can assume ${\rm span}\{\bx_1,\cdots,\bx_n\}=\bbR^d$. If ${\rm span}\{\bx_1,\cdots,\bx_n\}\ne\bbR^d$, we only need to change the proof in the subspace ${\rm span}\{\bx_1,\cdots,\bx_n\}$.

Recall the definition of $\bbC(D_1,D_2;H)$:
\begin{align*}
    \bbC(D_1,D_2;H):=\left\{\bw\in{\rm span}\{\bx_i:i\in[n]\}: D_1\leq\norm{\cP_{\perp}(\bw)}\leq D_2;\left<\bw,\bw^\star\right>\geq H\right\},
\end{align*}

we further define $\bbC(D;H)$ as:
\begin{align*}
   \bbC(D;H):=\left\{\bw\in{\rm span}\{\bx_i:i\in[n]\}:\norm{\cP_{\perp}(\bw)}=D;\left<\bw,\bw^\star\right>\geq H\right\}.
\end{align*}

It holds that
\begin{align*}
    \bbC(D;H)=\left\{h\bw^\star+D\bv:h\geq H,\bv\perp\bw^\star,\norm{\bv}=1\right\},
\end{align*}
and the following relationship is easy to verified: 
\begin{align*}
    \bbC(D_1,D_2;H)=\bigcup\limits_{D_1\leq D\leq D_2}\bbC(D;H).
\end{align*}
In the following steps, we first prove the lower bound for $\bbC(D;H)$, and then prove for $\bbC(D_1,D_2;H)$.

\underline{Step I. Strip out the important ingredients.}


For any $\bw\in\bbC(D;H)$, we have:
\begin{equation}\label{equ: proof of thm centri velocity frac}
\begin{aligned}
    &\left<\frac{\nabla\cL(\bw)}{\cL(\bw)},\frac{\cP_{\perp}(\bw)}{\norm{\cP_{\perp}(\bw)}}\right>=\<\frac{\nabla\cL(\bw)}{\cL(\bw)},\bv\>
    \\=&
    \<\frac{\frac{1}{n}\sum_{i=1}^n(-y_i\bx_i)\exp(-\<\bw,y_i\bx_i\>)}{\frac{1}{n}\sum_{i=1}^n\exp(-y_i\<\bw,\bx_i\>)},\bv\>
    \\=&
    \frac{\sum_{i=1}^n\<\bv,-y_i\bx_i\>\exp\left(-h\<\bw^\star,y_i\bx_i\>\right)\exp\left(-D\<\bv,y_i\bx_i\>\right)}{\sum_{i=1}^n\exp\left(-h\<\bw^\star,y_i\bx_i\>\right)\exp\left(-D\<\bv,y_i\bx_i\>\right)}.
\end{aligned}
\end{equation}
For the numerator of~\eqref{equ: proof of thm centri velocity frac}, it holds that
\begin{align*}
    &\sum_{i=1}^n\<\bv,-y_i\bx_i\>\exp\left(-h\<\bw^\star,y_i\bx_i\>\right)\exp\left(-D\<\bv,y_i\bx_i\>\right)
    \\=&\sum_{i\in\cI}\<\bv,-y_i\bx_i\>\exp\left(-h\<\bw^\star,y_i\bx_i\>\right)\exp\left(-D\<\bv,y_i\bx_i\>\right)
    \\&+\sum_{i\notin\cI}\<\bv,-y_i\bx_i\>\exp\left(-h\<\bw^\star,y_i\bx_i\>\right)\exp\left(-D\<\bv,y_i\bx_i\>\right)
    \\=&\exp\left(-h\gamma^\star\right)\sum_{i\in\cI}\<\bv,-y_i\bx_i\>\exp\left(-D\<\bv,y_i\bx_i\>\right)
    \\&+\sum_{i\notin\cI}\<\bv,-y_i\bx_i\>\exp\left(-h\<\bw^\star,y_i\bx_i\>\right)\exp\left(-D\<\bv,y_i\bx_i\>\right)
    \\\geq&\exp\left(-h\gamma^\star\right)\sum_{i\in\cI}\<\bv,-y_i\bx_i\>\exp\left(-D\<\bv,y_i\bx_i\>\right)-\sum_{i\notin\cI}\exp\left(-h\<\bw^\star,y_i\bx_i\>\right)\exp\left(D\right)
    \\\geq&\exp\left(-h\gamma^\star\right)\sum_{i\in\cI}\<\bv,-y_i\bx_i\>\exp\left(-D\<\bv,y_i\bx_i\>\right)-(n-|\cI|)\exp\left(-h\gamma_{\rm sub}^\star\right)\exp\left(D\right);
\end{align*}
For the denominator of~\eqref{equ: proof of thm centri velocity frac}, it holds that:
\begin{align*}
    &\sum_{i=1}^n\exp\left(-h\<\bw^\star,y_i\bx_i\>\right)\exp\left(-D\<\bv,y_i\bx_i\>\right)
    \\\leq&\sum_{i=1}^n\exp\left(-h\<\bw^\star,y_i\bx_i\>\right)\exp\left(D\right)
    \\=&\Bigg(\sum_{i\in\cI}\exp\left(-h\<\bw^\star,y_i\bx_i\>\right)+\sum_{i\notin\cI}\exp\left(-h\<\bw^\star,y_i\bx_i\>\right)\Bigg)\exp\left(D\right)
    \\\leq&
    \Bigg(|\cI|\exp\left(-h\gamma^\star\right)+(n-|\cI|)\exp\left(-h\gamma_{\rm sub}^\star\right)\Bigg)\exp(D).
\end{align*}

Combining these two estimates, we obtain:
\begin{equation}\label{equ: proof of thm centri velocity frac estimate}
    \begin{aligned}
        &\left<\frac{\nabla\cL(\bw)}{\cL(\bw)},\frac{\cP_{\perp}(\bw)}{\norm{\cP_{\perp}(\bw)}}\right>
    \\\geq&\frac{\exp\left(-h\gamma^\star\right)\sum\limits_{i\in\cI}\<\bv,-y_i\bx_i\>\exp\left(-D\<\bv,y_i\bx_i\>\right)-(n-|\cI|)\exp\left(-h\gamma_{\rm sub}^\star\right)\exp\left(D\right)}{\left(|\cI|\exp\left(-h\gamma^\star\right)+(n-|\cI|)\exp\left(-h\gamma_{\rm sub}^\star\right)\right)\exp(D)}
    \\=&\frac{\frac{1}{|\cI|\exp(D)}\sum\limits_{i\in\cI}\<\bv,-y_i\bx_i\>\exp\left(-D\<\bv,y_i\bx_i\>\right)-\frac{n-|\cI|}{|\cI|}\exp\left(-h(\gamma_{\rm sub}^\star-\gamma^\star)\right)}{1+\frac{n-|\cI|}{|\cI|}\exp\left(-h(\gamma_{\rm sub}^\star-\gamma^\star)\right)}.
    \end{aligned}
\end{equation}

Notice that the term $\frac{n-|\cI|}{|\cI|}\exp\left(-h(\gamma_{\rm sub}^\star-\gamma^\star)\right)$ converges to $0$ when $h$ goes to $+\infty$. Thus, we only need to derive the uniform lower bound for the term 
\begin{align*}
    \sum_{i\in\cI}\<\bv,-y_i\bx_i\>\exp\left(-D\<\bv,y_i\bx_i\>\right)
\end{align*} 
for any $\bv\in\{\bv:\bv\perp\bw^\star,\norm{\bv}=1\}$.

\underline{Step II. Uniform Lower bound of  $\sum_{i\in\cI}\<\bv,-y_i\bx_i\>\exp\left(D\<\bv,-y_i\bx_i\>\right)$ for $\{\bv:\bv\perp\bw^\star,\norm{\bv}=1\}$.}

For simplicity, we denote $\bu_i:=\cP_\perp(-y_i\bx_i)$ for $i\in[n]$. It is worth noticing that $\<\bv,-y_i\bx_i\>=\<\bv,\bu_i\>$ for any $\bv\in\{\bv:\bv\perp\bw^\star,\norm{\bv}=1\}$. Therefore,

\begin{align*}
    \sum_{i\in\cI}\<\bv,-y_i\bx_i\>\exp\left(D\<\bv,-y_i\bx_i\>\right)=\sum_{i\in\cI}\<\bv,\bu_i\>\exp\left(D\<\bv,\bu_i\>\right).
\end{align*}

First, recalling Assumption~\ref{ass: non-degenerate data}, there exist $\alpha_i>0$ $(i\in\cI)$ such that $\bw^\star=\sum_{i\in\cI}\alpha_i y_i \bx_i$, where $\sum_{i\in\cI}\alpha_i=1$. This implies: $\bzero=\sum_{i\in\cI}\alpha_i\bu_i$. Thus, we define $\bk_i:=\alpha_i\bu_i$, then
\begin{align*}
    \sum_{i\in\cI}\bk_i=\bzero.
\end{align*}

Recalling Assumption~\ref{ass: non-degenerate data}, it holds ${\rm span}\{\bx_i:i\in\cI\}={\rm span}\{\bx_i:i\in[n]\}$, which implies that ${\rm span}\{\bu_i:i\in\cI\}=\{\bv:\bv\perp\bw^\star\}$. Therefore, there exists an absolute constant $\lambda_{\min}>0$ such that
\begin{align*}
    \sum_{i\in\cI}\<\bv,\bk_i\>^2=\bv^\top\left(\sum_{i\in\cI}\bk_i\bk_i^\top\right)\bv\geq\lambda_{\min},\quad\forall \bv\in\{\bv\perp\bw^\star,\norm{\bv}=1\}.
\end{align*}

By a rough estimate, it holds that:
\begin{align*}
    &\sum_{i\in\cI}\left|\<\bv,\bk_i\>\right|=\left(\left(\sum_{i\in\cI}\left|\<\bv,\bk_i\>\right|\right)^2\right)^{1/2}
    \\\geq&\left(\sum_{i\in\cI}\left|\<\bv,\bk_i\>\right|^2\right)^{1/2}\geq\sqrt{\lambda_{\min}},\quad\forall \bv\in\{\bv\perp\bw^\star,\norm{\bv}=1\}.
\end{align*}

Notice that $\sum_{i\in\cI}\<\bv,\bk_i\>=\<\bv,\sum_{i\in\cI}\bk_i\>=0$, it holds that
\begin{align*}
    \sum_{i\in\cI,\<\bv,\bk_i\>>0}\<\bv,\bk_i\>=-\sum_{i\in\cI,\<\bv,\bk_i\><0}\<\bv,\bk_i\>,
\end{align*}
which means:
\begin{align*}
    \sum_{i\in\cI,\<\bv,\bk_i\>>0}\<\bv,\bk_i\>=\frac{1}{2}\sum_{i\in\cI}|\<\bv,\bk_i\>|.
\end{align*}

Consequently, we can do the following estimate for the largest $\<\bv,\bk_i\>$:
\begin{align*}
    &\max\limits_{i\in\cI}\<\bv,\bk_i\>
    \geq\frac{1}{|\{i:i\in\cI,\<\bv,\bk_i\>>0\}|}\sum_{i\in\cI,\<\bv,\bk_i\>>0}\<\bv,\bk_i\>
    \\\geq&\frac{1}{|\cI|}\sum_{i\in\cI,\<\bv,\bk_i\>>0}\<\bv,\bk_i\>=\frac{1}{2|\cI|}\sum_{i\in\cI}|\<\bv,\bk_i\>|\geq\frac{\sqrt{\lambda_{\min}}}{2|\cI|},\quad \forall \bv\in\{\bv\perp\bw^\star,\norm{\bv}=1\}.
\end{align*}
 
Hence, we obtain the uniform lower bound by the following splitting: 
\begin{align*}
    &\sum_{i\in\cI}\<\bv,\bu_i\>\exp\left(D\<\bv,\bu_i\>\right)
    \\=&\sum_{i\in\cI:\<\bv,\bu_i\>>0}\<\bv,\bu_i\>\exp\left(D\<\bv,\bu_i\>\right)+\sum_{i\in\cI:\<\bv,\bu_i\><0}\<\bv,\bu_i\>\exp\left(D\<\bv,\bu_i\>\right)
    \\\geq&\max_{i\in\cI}\<\bv,\bu_i\>\exp\left(D\<\bv,\bu_i\>\right)+\sum_{i\in\cI:\<\bv,\bu_i\><0}\<\bv,\bu_i\>\exp\left(D\<\bv,\bu_i\>\right)
    \\\geq&\max_{i\in\cI}\<\bv,\bu_i\>\exp\left(D\<\bv,\bu_i\>\right)+\sum_{i\in\cI:\<\bv,\bu_i\><0}\<\bv,\bu_i\>\exp\left(0\right)
    \\\geq&\max_{i\in\cI}\<\bv,\bu_i\>\exp\left(D\<\bv,\bu_i\>\right)-\sum_{i\in\cI:\<\bv,\bu_i\><0}\norm{\bu_i}
    \\\geq&\max_{i\in\cI}\<\bv,\bu_i\>\exp\left(D\<\bv,\bu_i\>\right)-|\cI|
    \\\geq&\frac{\sqrt{\lambda_{\min}}}{2|\cI|}\exp\left(D\frac{\sqrt{\lambda_{\min}}}{2|\cI|}\right)-|\cI|,\quad \forall \bv\in\{\bv\perp\bw^\star,\norm{\bv}=1\}.
\end{align*}

\underline{Step III. The final bound.}

First, we select
\begin{align*}
    D_0=\log\left(\frac{4|\cI|^2}{\sqrt{\lambda_{\min}}}\right),
    \quad
    H_0=\frac{1}{\gamma_{\rm sub}^\star-\gamma^\star}\log\left(\max\left\{\frac{2(n-|\cI|)}{|\cI|},2\right\}\right).
\end{align*}

For any $D\geq D_0$ and $h\geq H_0$, we have:
it holds that:
\begin{align*}
    &\inf\limits_{\bv\in\{\bv\perp\bw^\star,\norm{\bv}=1\}}\sum_{i\in\cI}\<\bv,\bu_i\>\exp\left(D\<\bv,\bu_i\>\right)
    \\\geq&\frac{\sqrt{\lambda_{\min}}}{2|\cI|}\exp\left(D\frac{\sqrt{\lambda_{\min}}}{2|\cI|}\right)-|\cI|
    \geq\frac{\sqrt{\lambda_{\min}}}{4|\cI|}\exp\left(D\frac{\sqrt{\lambda_{\min}}}{2|\cI|}\right);
\end{align*}
\begin{align*}
    \frac{n-|\cI|}{|\cI|}\exp\left(-h(\gamma_{\rm sub}^\star-\gamma^\star)\right)\leq \frac{1}{2}.
\end{align*}

Therefore,
\begin{align*}
    \eqref{equ: proof of thm centri velocity frac estimate}\geq&\frac{\frac{1}{|\cI|\exp(D)}\frac{\sqrt{\lambda_{\min}}}{4|\cI|}\exp\left(D\frac{\sqrt{\lambda_{\min}}}{2|\cI|}\right)-\frac{n-|\cI|}{|\cI|}\exp\left(-h(\gamma_{\rm sub}^\star-\gamma^\star)\right)}{1+\frac{1}{2}}
    \\\geq&
    \frac{\frac{\sqrt{\lambda_{\min}}}{4|\cI|^2}\exp(-D)-\frac{n-|\cI|}{|\cI|}\exp\left(-h(\gamma_{\rm sub}^\star-\gamma^\star)\right)}{2}.
\end{align*}

Now we select
\begin{align*}
    D_1=D_0,\quad D_2=2D_0,\quad H=\max\left\{H_0,\frac{1}{\gamma_{\rm sub}^\star-\gamma^\star}\left(2D_0+\log\bracket{\frac{4|\cI|(n-|\cI|)}{\sqrt{\lambda_{\min}}}}\right)\right\}.
\end{align*}

Thus, for any $D\in[D_1,D_2]$ and $\bw\in\bbC(D;H)$, it holds that
\begin{align*}
&\left<\frac{\nabla\cL(\bw)}{\cL(\bw)},\frac{\cP_{\perp}(\bw)}{\norm{\cP_{\perp}(\bw)}}\right>
\\\geq&\frac{\frac{\sqrt{\lambda_{\min}}}{4|\cI|^2}\exp(-D)-\frac{n-|\cI|}{|\cI|}\exp\left(-h(\gamma_{\rm sub}^\star-\gamma^\star)\right)}{2}
\\\geq&\frac{\sqrt{\lambda_{\min}}\exp(-D)}{16|\cI|^2}\geq\frac{\sqrt{\lambda_{\min}}\exp(-2D_0)}{16|\cI|^2}.
\end{align*}

Lastly, we obtain our result:
\begin{align*}
    &\inf\limits_{\bw\in\bbC(D_1,D_2;H)}\left<\frac{\nabla\cL(\bw)}{\cL(\bw)},\frac{\cP_{\perp}(\bw)}{\norm{\cP_{\perp}(\bw)}}\right>
    \\=&\inf_{D\in[D_1, D_2]}\inf\limits_{\bw\in\bbC(D;H)}\left<\frac{\nabla\cL(\bw)}{\cL(\bw)},\frac{\cP_{\perp}(\bw)}{\norm{\cP_{\perp}(\bw)}}\right>\geq\frac{\sqrt{\lambda_{\min}}\exp(-D)}{16|\cI|^2}
    \\&\geq\frac{\sqrt{\lambda_{\min}}\exp(-2D_0)}{16|\cI|^2}>0.
\end{align*}

\end{proof}

\vspace{1.cm}
\section{Proofs in Section~\ref{section: convergence analysis}}\label{appendix: proof: Convergence Analysis}

\subsection{Proof of Theorem~\ref{thm: PRGD main thm}}
\label{appendix: proof: PPGD}

\begin{proof}[Proof of Theorem~\ref{thm: PRGD main thm}]\ \\
According Theorem~\ref{thm: Centripetal Velocity Analysis, main result}, there exist constants $H,D,\mu>0$ such that 
\begin{align*}
    \<\frac{\nabla\cL(\bw)}{\cL(\bw)},\frac{\cP_{\perp}\left(\bw\right)}{\norm{\cP_{\perp}\left(\bw\right)}}\>\geq\mu\text{ holds for any }\bw\in\bbC(D,2D;H),
\end{align*}
where $\bbC(D,2D;H):=\left\{\bw\in{\rm span}\{\bx_i:i\in[n]\}: D\leq\norm{\cP_{\perp}(\bw)}\leq 2D;\left<\bw,\bw^\star\right>\geq H\right\}$.

Following the proof of Theorem~\ref{thm: Centripetal Velocity Analysis, main result}, we further define $\bbC(D;H)$ as:
\begin{align*}
   \bbC(D;H):=\left\{\bw\in{\rm span}\{\bx_i:i\in[n]\}:\norm{\cP_{\perp}(\bw)}=D;\left<\bw,\bw^\star\right>\geq H\right\}.
\end{align*}

It holds that $\bbC(D;H)=\left\{h\bw^\star+D\bv:h\geq H,\bv\perp\bw^\star,\norm{\bv}=1\right\}$ and $\bbC(D;H)\subset\bbC(D,2D;H)$.

\underline{Analysis of Phase I.} 

Phase I is a warm-up phase. We will prove that at the end of this phase, trained $\bw$ can be scaled onto $\bbC(D;H)$. First, we choose the error
\begin{align*}
    \epsilon=\min\left\{\frac{D}{2H},\frac{1}{2}\right\}.
\end{align*}

Notice that Theorem~\ref{thm: NGD upper bound ji} ensures that under Assumption~\ref{ass: linearly separable} and~\ref{ass: non-degenerate data}, the directional convergence of GD and NGD with $\eta\leq1$ holds, and the rates are $\cO(1/\log t)$ and $\cO(1/t)$,respectively.

Therefore, there exists $T^{\epsilon}$ such that $\norm{\frac{\bw(T^{\epsilon})}{\norm{\bw(T^{\epsilon})}}-\bw^\star}<\epsilon$, which implies the inner satisfies:
\begin{align*}
    \left<\frac{\bw(T^{\epsilon})}{\norm{\bw(T^{\epsilon})}},\bw^\star\right>=\frac{1}{2}\left(2-\norm{\frac{\bw(T^{\epsilon})}{\norm{\bw(T^{\epsilon})}}-\bw^\star}^2\right)>1-\frac{\epsilon^2}{2}.
\end{align*}
Therefore, at $T^{\epsilon}$, it holds that:
\begin{align*}
    &\frac{\norm{\cP_\perp(\bw(T^{\epsilon}))}}{\left<\bw(T^{\epsilon}),\bw^\star\right>}
    =\frac{\norm{\bw(T^{\epsilon})-\cP(\bw(T^{\epsilon}))}}{\left<\bw(T^{\epsilon}),\bw^\star\right>}=\norm{\frac{\bw(T^{\epsilon})}{\left<\bw(T^{\epsilon}),\bw^\star\right>}-\bw^\star}
    \\=&
    \norm{\frac{\bw(T^{\epsilon})}{\left<\bw(T^{\epsilon})),\bw^\star\right>}-\bw^\star}=\norm{\frac{\bw(T^{\epsilon})}{\left<\bw(T^{\epsilon})),\bw^\star\right>}-\frac{\bw(T^{\epsilon})}{\norm{\bw(T^{\epsilon})}}+\frac{\bw(T^{\epsilon})}{\norm{\bw(T^{\epsilon})}}-\bw^\star}
    \\\leq&
    \norm{\frac{\bw(T^{\epsilon})}{\left<\bw(T^{\epsilon})),\bw^\star\right>}-\frac{\bw(T^{\epsilon})}{\norm{\bw(T^{\epsilon})}}}+\norm{\frac{\bw(T^{\epsilon})}{\norm{\bw(T^{\epsilon})}}-\bw^\star}
    <\left|\frac{\left<\bw(T^{\epsilon})),\bw^\star\right>-\norm{\bw(T^{\epsilon})}}{\left<\bw(T^{\epsilon})),\bw^\star\right>}\right|+\epsilon
    \\=&
    \left|1-\frac{1}{\left<\frac{\bw(T^{\epsilon})}{\norm{\bw(T^{\epsilon})}},\bw^\star\right>}\right|+\epsilon<\frac{1}{1-\frac{\epsilon^2}{2}}-1+\epsilon=\frac{\frac{\epsilon^2}{2}}{1-\frac{\epsilon^2}{2}}+\epsilon<\frac{4\epsilon^2}{7}+\epsilon
    \\\leq&
    \left(\frac{2}{7}+1\right)\epsilon\leq 2\epsilon\leq\min\left\{\frac{D}{H},1\right\}.
\end{align*}
We choose $T_{\rm w}=T^{\epsilon}=\Theta(1)$, and we obtain $\bw(T_{\rm w})$ at the end of Phase I.

\underline{Analysis of Phase II.}

For simplicity, due to $T_{\rm w}$ is an constant, we replace the time $t$ to $t-T_{\rm w}$ in the proof of Phase II. This means that Phase II starts from $t=0$ with the initialization $\bw(0)\gets\bw(T_{\rm w})$.

In this proof, we choose
\begin{align*}
    \eta=\mu D,\ T_{k}=2k,\ R_k=\frac{D\norm{\bw(T_k)}}{\norm{\cP_{\perp}(\bw(T_k))}},\ \forall k\geq0.
\end{align*}

Recalling Algorithm~\ref{alg: PRGD}, the update rule is:
\begin{align*}
\cdots;
\\
\bw(2k+1)&=R_{k}\frac{\bw(2k)}{\norm{\bw(2k)}};
\\
\bv(2k+2)&=\bw(2k+1)-\eta\frac{\nabla\cL(\bw(2k+1))}{\cL(\bw(2k+1))};
\\
\bw(2k+2)&=\Proj_{\bbB(0,\norm{\bw(2k+1)})}\left(\bv(2k+2)\right);
\\
\bw(2k+3)&=R_{k+1}\frac{\bw(2k+2)}{\norm{\bw(2k+2)}};
\\\cdots
\end{align*}

In general, we aim to prove the following statements:
\begin{align*}
        &\text{(S1). $\bw(2k+1)\in\bbC(D;H), \ \forall k\geq0$.}
        \\
        &\text{(S2). $\<\bw(2k+1),\bw^\star\>\geq\frac{1}{\left(\sqrt{1-2\mu}\right)^{k}}\left(\<\bw(1),\bw^\star\>+\frac{\gamma^\star}{1-\sqrt{1-2\mu}}\right)-\frac{\gamma^\star}{1-\sqrt{1-2\mu}},\ \forall k\geq0$;}
        \\&\quad\quad
        \<\bw(2k+1),\bw^\star\>\leq\frac{1}{\left(\sqrt{1-\mu^2}\right)^{k}}\left(\<\bw(1),\bw^\star\>+\frac{1}{1-\sqrt{1-\mu^2}}\right)-\frac{1}{1-\sqrt{1-\mu^2}},\ \forall k\geq0.
        \\&
        \text{(S3). $D\sqrt{1-2\mu}\leq\norm{\cP_{\perp}(\bv(2k+2))}\leq D\sqrt{1-\mu^2},\ \forall k\geq0$.}
        \\&
        \text{(S4). }\frac{\<\bw(2k+2),\bw^\star\>}{\norm{\cP_{\perp}(\bw(2k+2))}}=\frac{\<\bw(2k+3),\bw^\star\>}{\norm{\cP_{\perp}(\bw(2k+3))}}=\frac{\<\bw(1),\bw^\star\>}{D}e^{\Theta(k)}.
        \\&
        \text{(S5). } R_{k+1}=\<\bw(1),\bw^\star\>e^{\Theta(k)}.
        \\&
        \text{(S6). } \norm{\frac{\bw(t)}{\norm{\bw(t)}}-\bw^\star}=\frac{D}{\<\bw(1),\bw^\star\>}e^{-\Theta(t)}.
        \\&
        \text{(S7). } \gamma^\star-\gamma(\bw(t))=\frac{D}{\<\bw(1),\bw^\star\>}e^{-\Theta(t)}.
\end{align*}

\underline{Step I. Proof of (S1)(S2).} 

In this step, we will prove (S1)(S2) by induction.

\underline{Step I (i).} 
We prove (S1)(S2) for $k=0$. Recalling our analysis of Phase I, it holds that 
\begin{align*}
    \frac{\norm{\cP_\perp(\bw(0))}}{\<\bw(0)),\bw^\star\>}\leq\min\left\{\frac{D}{H},1\right\}.
\end{align*}
Thus, if we choose $R_0=\frac{D\norm{\bw(0)}}{\norm{\cP_{\perp}(\bw(0))}}$ in Algorithm~\ref{alg: PRGD}, then $\bw(1)=\frac{D}{\norm{\cP_{\perp}(\bw(0))}}\cdot\bw(0)$ and $\bw(1)$ satisfies:
\begin{gather*}
    \norm{\cP_{\perp}(\bw(1))}=\norm{\cP_{\perp}\left(\frac{D}{\norm{\cP_{\perp}(\bw(0))}}\bw(0)\right)}=\norm{\frac{D\cP_{\perp}(\bw(0))}{\norm{\cP_{\perp}(\bw(0))}}}=D;
\end{gather*}
\begin{align*}
    &\left<\bw(1),\bw^\star\right>=\left<\frac{D}{\norm{\cP_{\perp}(\bw(0))}}\bw(0),\bw^\star\right>
    =D\left<\frac{\bw(0)}{\norm{\cP_\perp(\bw(0))}},\bw^\star\right>
    \\=&D\frac{\<\bw(0),\bw^\star\>}{\norm{\cP_\perp(\bw(0))}}\geq \frac{D}{\min\left\{\frac{D}{H},1\right\}}=\max\left\{H,D\right\}.
\end{align*}
which means that (S1) $\bw(1)\in\bbC(D;H)$ holds for $k=0$. As for (S2), it is trivial for $k=0$.

\underline{Step I (ii).}
Assume (S1)(S2) hold for any $0\leq k'\leq k$. Then we will prove for $k'=k+1$.

First, it is easy to bound the difference between $\left<\bv(2k+2),\bw^\star\right>$ and $\left<\bw(2k+1),\bw^\star\right>$:
\begin{equation}\label{equ: proof of thm: main PRGD: diff between v(2k+2) and w(2k+1)}
\begin{aligned}
    &\left<\bv(2k+2),\bw^\star\right>-\left<\bw(2k+1),\bw^\star\right>
    \\=&
    \eta\left<-\frac{\nabla\cL(\bw(2k+1))}{\cL(\bw(2k+1))},\bw^\star\right>
    \overset{\text{Lemma~\ref{lemma: uniform project lower bound}}}{\in}\left[\eta\gamma^\star,\eta\right]=[\mu\gamma^\star D,\mu D].
\end{aligned}
\end{equation}
Secondly, notice the following fact about $\bw(2k+3)$:
\begin{equation}\label{equ: proof of thm: main PRGD: fact w 2k+3}
    \begin{aligned}
        &\bw(2k+3)=R_{k+1}\frac{\bw(2k+2)}{\norm{\bw(2k+2)}}=\frac{D\norm{\bw(2k+2)}}{\norm{\cP_{\perp}(\bw(2k+2))}}\frac{\bw(2k+2)}{\norm{\bw(2k+2)}}
    \\=&
    \frac{D}{\norm{\cP_\perp(\bw(2k+2))}}\bw(2k+2)=\frac{D}{\norm{\cP_\perp(\bv(2k+2))}}\bv(2k+2).
    \end{aligned}
\end{equation}
With the help of the estimates above and the induction, now we can give the following two-sided bounds for $\<\bw(2k+3),\bw^\star\>$.
\begin{itemize}
    \item Upper bound for $\<\bw(2k+3),\bw^\star\>$:
\end{itemize}
\begin{equation}\label{equ: proof of thm: main PRGD: upper bound w 2k+3}
\begin{aligned}
    &\<\bw(2k+3),\bw^\star\>
    \overset{\eqref{equ: proof of thm: main PRGD: fact w 2k+3}}{=}
    D\frac{\left<\bv(2k+2),\bw^\star\right>}{\norm{\cP_\perp(\bv(2k+2))}}
    \\\overset{\eqref{equ: proof of thm: main PRGD: diff between v(2k+2) and w(2k+1)}}{\leq}& 
    D\frac{\left<\bw(2k+1),\bw^\star\right>+1}{\sqrt{1-\mu^2}D}
    =\frac{\left<\bw(2k+1),\bw^\star\right>+1}{\sqrt{1-\mu^2}}
    \\
    \overset{\text{induction}}{\leq}&
    \frac{1}{\sqrt{1-\mu^2}}\left(\frac{1}{\left(\sqrt{1-\mu^2}\right)^{k}}\left(\<\bw(1),\bw^\star\>+\frac{1}{1-\sqrt{1-\mu^2}}\right)-\frac{1}{1-\sqrt{1-\mu^2}}+1\right)
    \\=&
    \frac{1}{\left(\sqrt{1-\mu^2}\right)^{k+1}}\left(\<\bw(1),\bw^\star\>+\frac{1}{1-\sqrt{1-\mu^2}}\right)-\frac{1}{1-\sqrt{1-\mu^2}}.
    \end{aligned}
\end{equation}
\begin{itemize}
    \item Lower bound for $\<\bw(2k+3),\bw^\star\>$:
\end{itemize}
\begin{equation}\label{equ: proof of thm: main PRGD: lower bound w 2k+3}
    \begin{aligned}
    &\<\bw(2k+3),\bw^\star\>
    \overset{\eqref{equ: proof of thm: main PRGD: fact w 2k+3}}{=}
    D\frac{\left<\bv(2k+2),\bw^\star\right>}{\norm{\cP_\perp(\bv(2k+2))}}
    \\
    \overset{\eqref{equ: proof of thm: main PRGD: diff between v(2k+2) and w(2k+1)}}{\geq}& D\frac{\left<\bw(2k+1),\bw^\star\right>+\gamma^\star}{\sqrt{1-2\mu}D}
    =\frac{\left<\bw(2k+1),\bw^\star\right>+\gamma^\star}{\sqrt{1-2\mu}}
    \\\overset{\text{induction}}{\geq}&\frac{1}{\sqrt{1-2\mu}}\left(\frac{1}{\left(\sqrt{1-2\mu}\right)^{k}}\left(\<\bw(1),\bw^\star\>+\frac{\gamma^\star}{1-\sqrt{1-2\mu}}\right)-\frac{\gamma^\star}{1-\sqrt{1-2\mu}}+\gamma^\star\right)
    \\=&\frac{1}{\left(\sqrt{1-2\mu}\right)^{k+1}}\left(\<\bw(1),\bw^\star\>+\frac{\gamma^\star}{1-\sqrt{1-2\mu}}\right)-\frac{\gamma^\star}{1-\sqrt{1-2\mu}}.
    \end{aligned}
\end{equation}

Hence, from~\eqref{equ: proof of thm: main PRGD: upper bound w 2k+3}\eqref{equ: proof of thm: main PRGD: lower bound w 2k+3}, we have proved that (S2) holds for $k+1$.

Moreover, we have the following facts:
\begin{align*}
    \norm{\cP_{\perp}\left(\bw(2k+3)\right)}\overset{\eqref{equ: proof of thm: main PRGD: fact w 2k+3}}{=}&\norm{\cP_{\perp}\left(\frac{D}{\norm{\cP_\perp(\bv(2k+2))}}\bv(2k+2)\right)}
    \\=&\norm{D\frac{\cP_\perp(\bv(2k+2))}{\norm{\cP_\perp(\bv(2k+2))}}}=D,
\end{align*}
\begin{align*}
    \<\bw(2k+3),\bw^\star\>
    \overset{\eqref{equ: proof of thm: main PRGD: lower bound w 2k+3}}{\geq}
    &\frac{1}{\left(\sqrt{1-2\mu}\right)^{k+1}}\left(\<\bw(1),\bw^\star\>+\frac{\gamma^\star}{1-\sqrt{1-2\mu}}\right)-\frac{\gamma^\star}{1-\sqrt{1-2\mu}}
    \\\geq&
    \frac{\<\bw(1),\bw^\star\>}{\left(\sqrt{1-2\mu}\right)^{k+1}}\geq\<\bw(1),\bw^\star\>\geq H;
\end{align*}
which means that (S1) holds for $k+1$, i.e., $\bw(2k+3)\in\bbC(D;H)$.

Now we have proved (S1)(S2) for any $k\geq 0$ by induction.

\underline{Step II. Proof of (S3).} 

In this step, we will prove (S3) directly. For any $k\geq0$, we can derive the following two-sides bounds:
\begin{itemize}
    \item For the upper bound of $\norm{\cP_{\perp}(\bv(2k+2))}$, we have:
\end{itemize}
\begin{equation}\label{equ: proof of thm: main PRGD: upper bound P perp(v)}
\begin{aligned}
    &\norm{\cP_{\perp}(\bv(2k+2))}^2
    =\norm{\cP_{\perp}(\bw(2k+1))-\eta\cP_{\perp}\left(\frac{\nabla\cL(\bw(2k+1))}{\cL(\bw(2k+1))}\right)}^2
    \\=&
    \norm{\cP_{\perp}(\bw(2k+1))}^2+\eta^2\norm{\cP_{\perp}\left(\frac{\nabla\cL(\bw(2k+1))}{\cL(\bw(2k+1))}\right)}^2
    \\&\quad\quad\quad\quad-2\eta\left<\cP_{\perp}(\bw(2k+1)),\cP_{\perp}\left(\frac{\nabla\cL(\bw(2k+1))}{\cL(\bw(2k+1))}\right)\right>
    \\=&
    D^2+\eta^2\norm{\cP_{\perp}\left(\frac{\nabla\cL(\bw(2k+1))}{\cL(\bw(2k+1))}\right)}^2-2\eta D\left<\frac{\cP_{\perp}(\bw(2k+1))}{\norm{\cP_{\perp}(\bw(2k+1))}},\frac{\nabla\cL(\bw(2k+1))}{\cL(\bw(2k+1))}\right>
    \\\leq&
    D^2+\eta^2\norm{\frac{\nabla\cL(\bw(2k+1))}{\cL(\bw(2k+1))}}^2-2\eta D\left<\frac{\cP_{\perp}(\bw(2k+1))}{\norm{\cP_{\perp}(\bw(2k+1))}},\frac{\nabla\cL(\bw(2k+1))}{\cL(\bw(2k+1))}\right>
    \\\overset{\text{Lemma~\ref{lemma: uniform project lower bound}}}{\leq}&
    D^2+\eta^2-2\eta D\mu=D^2+\mu^2 D^2-2\mu^2 D^2=(1-\mu^2)D^2.
\end{aligned}
\end{equation}
\begin{itemize}
    \item For the lower bound of $\norm{\cP_{\perp}(\bv(2k+2))}$, we have:
\end{itemize}
\begin{equation}\label{equ: proof of thm: main PRGD: lower bound P perp(v)}
\begin{aligned}
    &\norm{\cP_{\perp}(\bv(2k+2))}^2
    =\norm{\cP_{\perp}(\bw(2k+1))-\eta\cP_{\perp}\left(\frac{\nabla\cL(\bw(2k+1))}{\cL(\bw(2k+1))}\right)}^2
    \\=&
    D^2+\eta^2\norm{\cP_{\perp}\left(\frac{\nabla\cL(\bw(2k+1))}{\cL(\bw(2k+1))}\right)}^2
    \\&\quad\quad\quad\quad-2\eta D\left<\frac{\cP_{\perp}(\bw(2k+1))}{\norm{\cP_{\perp}(\bw(2k+1))}},\frac{\nabla\cL(\bw(2k+1))}{\cL(\bw(2k+1))}\right>
    \\\geq&
    D^2-2\eta D\left<\frac{\cP_{\perp}(\bw(2k+1))}{\norm{\cP_{\perp}(\bw(2k+1))}},\frac{\nabla\cL(\bw(2k+1))}{\cL(\bw(2k+1))}\right>
    \\\geq&
    D^2-2\eta D\norm{\frac{\nabla\cL(\bw(2k+1))}{\cL(\bw(2k+1))}}
    \\\overset{\text{Lemma~\ref{lemma: uniform project lower bound}}}{\geq}&
    D^2-2\eta D=D^2-2\mu D^2=(1-2\mu)D^2.
\end{aligned}
\end{equation}

Hence, we have proved (S3).

\underline{Step III. Proof of (S4)(S5)(S6).}

First, we derive two-sided bounds for $\frac{\<\bw(2k+2),\bw^\star\>}{\norm{\cP_{\perp}(\bw(2k+2))}}$. For any $k\geq 0$, we have:
\begin{itemize}
    \item Upper bound of $\frac{\<\bw(2k+2),\bw^\star\>}{\norm{\cP_{\perp}(\bw(2k+2))}}$.
\end{itemize}
\begin{align*}
    &\frac{\<\bw(2k+2),\bw^\star\>}{\norm{\cP_{\perp}(\bw(2k+2))}}
    =\frac{\<\bv(2k+2),\bw^\star\>}{\norm{\cP_{\perp}(\bv(2k+2))}}
    \\\overset{\eqref{equ: proof of thm: main PRGD: diff between v(2k+2) and w(2k+1)}}{\leq}&
    \frac{\<\bw(2k+1),\bw^\star\>+\mu D}{\norm{\cP_{\perp}(\bv(2k+2))}}
    \\\overset{\text{(S2)(S3)}}{\leq}&
    \frac{\frac{1}{\left(\sqrt{1-\mu^2}\right)^{k}}\left(\<\bw(1),\bw^\star\>+\frac{1}{1-\sqrt{1-\mu^2}}\right)-\frac{1}{1-\sqrt{1-\mu^2}}+\mu D}{D\sqrt{1-2\mu}}
    \\\leq&
    \frac{\<\bw(1),\bw^\star\>}{D}e^{\Theta(k)}.
\end{align*}
\begin{itemize}
    \item Lower bound of $\frac{\<\bw(2k+2),\bw^\star\>}{\norm{\cP_{\perp}(\bw(2k+2))}}$.
\end{itemize}
\begin{align*}
    &\frac{\<\bw(2k+2),\bw^\star\>}{\norm{\cP_{\perp}(\bw(2k+2))}}
    =\frac{\<\bv(2k+2),\bw^\star\>}{\norm{\cP_{\perp}(\bv(2k+2))}}
    \\\overset{\eqref{equ: proof of thm: main PRGD: diff between v(2k+2) and w(2k+1)}}{\geq}&
    \frac{\<\bw(2k+1),\bw^\star\>+\mu\gamma^\star D}{\norm{\cP_{\perp}(\bv(2k+2))}}
    \\\overset{\text{(S2)(S3)}}{\geq}&
    \frac{\frac{1}{\left(\sqrt{1-2\mu}\right)^{k}}\left(\<\bw(1),\bw^\star\>+\frac{\gamma^\star}{1-\sqrt{1-2\mu}}\right)-\frac{\gamma^\star}{1-\sqrt{1-2\mu}}+\mu\gamma^\star D}{(1-\mu)D}
    \\\geq&
    \frac{\<\bw(1),\bw^\star\>}{D}e^{\Theta(k)}.
\end{align*}
Additionally, notice 
\begin{align*}
    \frac{\<\bw(2k+2),\bw^\star\>}{\norm{\cP_{\perp}(\bw(2k+2))}}=\frac{\<\bw(2k+3),\bw^\star\>}{\norm{\cP_{\perp}(\bw(2k+3))}},
\end{align*}
we obtain (S4):
\begin{align*}
    \frac{\<\bw(2k+2),\bw^\star\>}{\norm{\cP_{\perp}(\bw(2k+2))}}=\frac{\<\bw(2k+3),\bw^\star\>}{\norm{\cP_{\perp}(\bw(2k+3))}}=\frac{\<\bw(1),\bw^\star\>}{D}e^{\Theta(k)}.
\end{align*}
Furthermore, Combining (S4) and the following fact
\begin{align*}
    &R_{k+1}=\frac{D\norm{\bw(2k+2)}}{\norm{\cP_{\perp}(\bw(2k+2))}}=\frac{D\norm{\bv(2k+2)}}{\norm{\cP_{\perp}(\bv(2k+2))}}
    \\=&D\frac{\sqrt{\left<\bv(2k+2),\bw^\star\right>^2+\norm{\cP_{\perp}(\bv(2k+2))}^2}}{\norm{\cP_{\perp}(\bv(2k+2))}}
    =D\sqrt{\frac{\left<\bv(2k+2),\bw^\star\right>^2}{\norm{\cP_{\perp}(\bv(2k+2))}^2}+1},
\end{align*}
we can obtain (S5):
\begin{align*}
    R_{k+1}=\<\bw(1),\bw^\star\>e^{\Theta(k)}.
\end{align*}

In the same way, we can prove
\begin{align*}
    &\norm{\frac{\bw(2k)}{\norm{\bw(2k)}}-\bw^\star}=\norm{\frac{\bw(2k+1)}{\norm{\bw(2k+1)}}-\bw^\star}=2\left(1-\<\frac{\bw(2k+1)}{\norm{\bw(2k+1)}},\bw^\star\>\right)
    \\=&
    2\left(1-\frac{\<\bw(2k+1),\bw^\star\>}{\norm{\bw(2k+1)}}\right)=2\left(1-\frac{\<\bw(2k+1),\bw^\star\>}{\sqrt{\<\bw(2k+1),\bw^\star\>^2+\norm{\cP_{\perp}(\bw(2k+1))}^2}}\right)
    \\=&2\left(1-\frac{1}{\sqrt{1+\frac{\norm{\cP_{\perp}(\bw(2k+1))}^2}{\<\bw(2k+1),\bw^\star\>^2}}}\right)
    \overset{\text{(S4)}}{=}
    2\left(1-\frac{1}{\sqrt{1+\frac{D^2}{\<\bw(1),\bw^\star\>^2}e^{-\Theta(k)}}}\right)
    \\=&\frac{D}{\<\bw(1),\bw^\star\>}e^{-\Theta(k)},
\end{align*}
which means (S6):
\begin{align*}
    \norm{\frac{\bw(t)}{\norm{\bw(t)}}-\bw^\star}=\frac{D}{\<\bw(1),\bw^\star\>}e^{-\Theta(t)},
\end{align*}

\underline{Step III. Proof of (S7).}
Using Lemma~\ref{lemma: Margin error and Directional error} and (S6), we obtain (S7).

\underline{Conclusions.}

From our proof of Phase II, we have $\<\bw(1),\bw^\star\>\geq\max\{H,D\}$. Taking this fact into (S6)(S7), we obtain our conclusions:
\begin{align*}
    &\norm{\frac{\bw(t)}{\norm{\bw(t)}}-\bw^\star}=e^{-\Omega(t)};
    \\&\gamma^\star-\gamma(\bw(t))=e^{-\Omega(t)}.
\end{align*}

\end{proof}

\subsection{Proof of Theorem~\ref{thm: GD NGD main result}}
\label{appendix: proof of hardness of GD and NGD}

The proof of GD is relatively straightforward. In contrast, the proof for NGD is significantly more intricate, necessitating a more rigorous convex optimization analysis than Proposition~\ref{thm: 3data}. 

For NGD, we still focus on the dynamics of $\cP_{\perp}(\bw(t))$, which is orthogonal to $\bw^\star$.
Actually, we can prove that there exists a subsequence $\cP_{\perp}(\bw(t_k))$ ($t_k\to\infty$), which satisfies $\norm{\cP_{\perp}(\bw(t_k))}=\Theta(1)$. Since the norm grows at $\norm{\bw(t_k)}=\Theta(t_k)$ (Thm~\ref{thm: NGD upper bound ji}), NGD must have only $\Theta(\norm{\cP_{\perp}(\bw(t_k))}/\norm{\bw(t_k)})=\Theta(1/t_k)$ directional convergence rate. 
Furthermore, the non-degenerated data assumption~\ref{ass: non-degenerate data} can also provide a two-sided bound for the margin error (Lemma~\ref{lemma: tight margin error and Directional error}), which ensures $\Omega(1/t_k^2)$ margin maximization rate.
Our crucial point is that the $(d-1)$-dim dynamics of $\cP_{\perp}(\bw(t))$ near $\bzero\in{\rm span}\{\bx_i:i\in\cI\}$ are close to {\rm in-exact} gradient descent dynamics on another strongly convex loss $\cL_\perp(\cdot)$ with unique minimizer $\bv^\star\in\bbR^{d-1}$.
Moreover, our condition $\gamma^\star\bw^\star\ne\frac{1}{|\cI|}\sum_{i\in\cI}\bx_i y_i$ can ensure that $\bv^\star\ne\bzero$. Therefore, there must exists a sequence $\cP_\perp(\bw(t_k))$ which can escapes from a sufficient small ball $\bbB(\bzero_{d-1};\epsilon_0)$, which means $\norm{\cP_{\perp}(\bw(t_k))}=\Theta(1)$.

\begin{proof}[Proof for GD]\ \\
The proof for GD is straightforward.

By Theorem~\ref{thm: refined error result, soudry}, $\lim\limits_{t\to+\infty}\Big(\bw(t)-\bw^\star\log t\Big)=\tilde{\bw}$, where $\tilde{\bw}$ is the solution to the equations: 
\begin{align*}
    \eta\exp\left(-\<\tilde{\bw},\bx_i y_i\>\right)=\alpha_i,i\in\cI.
\end{align*}

\underline{Step I. $\cP_\perp(\tilde{\bw})\ne\bzero$}.

If we assume $\cP_\perp(\tilde{\bw})=\bzero$, then there exists $c>0$ such that $\tilde{\bw}=c\bw^\star$.

Notice that for any $i\in\cI$, $\<\bw^\star,\bx_i y_i\>=\gamma^\star$. Therefore, there exists $c'>0$ such that $\alpha_i=c'$ for any $i\in\cI$.
Recalling $\bw^\star=\sum_{i\in\cI}\alpha_i\bx_i y_i$, we have $\bw^\star=c'\sum_{i\in\cI}\bx_i y_i$, which implies $\bw^\star=\frac{1}{|\cI|\gamma^\star}\sum_{i\in\cI}\bx_i y_i$. This is contradict to our condition $\gamma^\star\bw^\star\ne\frac{1}{|\cI|}\sum_{i\in\cI}\bx_i y_i$.

Hence, we have proved $\cP_\perp(\tilde{\bw})\ne\bzero$.

\underline{Step II. The lower bound.}

Recalling
$\lim\limits_{t\to+\infty}\Big(\bw(t)-\bw^\star\log t\Big)=\tilde{\bw}$ and our results in Step I, there exists $T_0>0$ such that
\begin{align*}
    \norm{\bw(t)-\bw^\star\log t-\tilde{\bw}}\leq\frac{\norm{\cP_\perp
    (\tilde{\bw})}}{2},\quad\forall t\geq T_0.
\end{align*}

Using the fact $\norm{\cP_\perp(\bw)}\leq\norm{\bw}$, we have
\begin{align*}
    \norm{\cP_\perp(\bw(t))-\cP_\perp(\tilde{\bw})}\leq\norm{\bw(t)-\bw^\star\log t-\tilde{\bw}}\leq\frac{\norm{\cP_\perp
    (\tilde{\bw})}}{2},\quad\forall t\geq T_0,
\end{align*}
which implies
\begin{align*}
    \frac{\norm{\cP_\perp
    (\tilde{\bw})}}{2}\leq\norm{\cP_\perp(\bw(t))}\leq\frac{3\norm{\cP_\perp
    (\tilde{\bw})}}{2},\quad\forall t\geq T_0.
\end{align*}

Recalling Theorem~\ref{thm: NGD upper bound ji}, it holds that $\norm{\bw(t)}=\Theta(\log t)$.
Then, a direct calculation ensures that:
\begin{align*}
    &\norm{\frac{\bw(t)}{\norm{\bw(t)}}-\bw^\star}^2=2-2\frac{\<\bw(t),\bw^\star\>}{\norm{\bw(t)}}
    \\=&2-2\frac{\<\bw(t),\bw^\star\>}{\sqrt{\<\bw(t),\bw^\star\>^2+\norm{\cP_\perp(\bw(t))}^2}}
    \\=&2-\frac{2}{\sqrt{1+\frac{\norm{\cP_\perp(\bw(t))}^2}{\<\bw(t),\bw^\star\>^2}}}=\Theta\left(\frac{\norm{\cP_\perp(\bw(t))}^2}{\<\bw(t),\bw^\star\>^2}\right)
    \\=&\Theta\left(\frac{\norm{\cP_\perp(\bw(t_k))}^2}{\norm{\bw(t)}^2-\norm{\cP_\perp(\bw(t))}^2}\right)=\Theta\left(\frac{1}{\log^2 t}\right),
\end{align*}
which implies the tight bound for the directional convergence rate: $\norm{\frac{\bw(t)}{\norm{\bw(t)}}-\bw^\star}=\Theta\left(\frac{1}{\log t}\right)$.
Moreover, with the help of Lemma~\ref{lemma: tight margin error and Directional error}, we have the lower bound for the margin maximization rate: $\gamma^\star-\gamma(\bw(t))=\Omega\left(\frac{1}{\log^2 t}\right)$.

\end{proof}

\begin{proof}[Proof for NGD]\ \\
NGD is more difficult to analyze than GD due to the more aggressive step size, and we need more refined convex optimization analysis.

Without loss of generality, we can assume ${\rm span}\{\bx_1,\cdots,\bx_n\}=\bbR^d$. This is because: GD, NGD, and PRGD can only evaluate in ${\rm span}\{\bx_i:i\in[n]\}$, i.e. $\bw(t)\in {\rm span}\{\bx_i:i\in[n]\}$. If ${\rm span}\{\bx_1,\cdots,\bx_n\}\ne\bbR^d$, we only need to change the proof in the subspace ${\rm span}\{\bx_1,\cdots,\bx_n\}$. For simplicity, we still denote $\bz_i:=\bx_iy_i$ ($i\in[n]$).

With the help of Theorem~\ref{thm: NGD upper bound ji}, the upper bounds hold: $\gamma^\star-\gamma(\bw(t))\leq\norm{\frac{\bw(t)}{\norm{\bw(t)}}-\bw^\star}=\cO(1/t)$.
So we only need to prove the lower bounds for $\gamma^\star-\gamma(\bw(t))$ and $\norm{\frac{\bw(t)}{\norm{\bw(t)}}-\bw^\star}$.

\underline{Proof Outline.}

We {\bf aim to prove the following claim}:
\begin{gather*}
    \text{there exists constant $T_0>0$ and $\epsilon_0>0$, such that:}
    \\
    \text{for any $T>T_0$, there exists $t>T$ s.t. $\norm{\cP_{\perp}(\bw(t))}>\epsilon_0$.}
\end{gather*}
If we can prove this conclusion, then there exists a subsequence $\bw(t_k)$ satisfying $t_{k+1}>t_k$, $t_k\to\infty$, and $\norm{\cP_\perp(\bw(t_k))}>\epsilon_0$. Recalling Theorem~\ref{thm: NGD upper bound ji}, $\norm{\bw(t_k)}=\Theta(t_k)$. Therefore, it must holds $\norm{\frac{\bw(t_k)}{\norm{\bw(t_k)}}-\bw^\star}=\Omega(1/t_k)$.

\underline{Proof Preparation.}

For simplicity, we denote the optimization problem orthogonal to $\bw^\star$ as
\begin{align*}
    \min_{\bv}:\cL_{\perp}(\bv)=\frac{1}{|\cI|}\sum_{i\in\cI}\exp\left(-y_i\<\bv,\cP_{\perp}(\bx_i)\>\right), \bv\in{\rm span}\{\cP_{\perp}(\bx_i):i\in\cI\}.
\end{align*}

In this proof, we focus on the dynamics of $\cP_{\perp}(\bw(t))$, satisfying:
\begin{align*}
    &\cP_{\perp}(\bw(t+1))=\cP_{\perp}(\bw(t))-\eta\cP_\perp\left(\frac{\nabla\cL(\bw(t))}{\cL(\bw(t))}\right)
    \\=&\cP_{\perp}(\bw(t))-\eta\cP_\perp\left(\frac{\frac{1}{n}\sum_{i=1}^n e^{-\<\bw(t),\bz_i\>}\bz_i}{\frac{1}{n}\sum_{i=1}^n e^{-\<\bw(t),\bz_i\>}}\right)
    \\=&\cP_{\perp}(\bw(t))-\eta\frac{\sum_{i=1}^n e^{-\<\bw(t),\bz_i\>}\cP_\perp(\bz_i)}{\sum_{i=1}^n e^{-\<\bw(t),\bz_i\>}}.
\end{align*}

With the help of Theorem~\ref{thm: orthogonal dynamics, ji}, we know that 
\begin{align*}
    &\text{(L1) the minimizer $\bv^*\in{\rm span}\{\cP_{\perp}(\bx_i):i\in\cI\}$ (of $\cL_\perp(\cdot)$) is unique.}
    \\
    &\text{(L2) there exists an absolute constant $C>0$ such that $\norm{\cP_{\perp}(\bw(t))-\bv^\star}\leq C,\ \forall t$;}
    \\
    &\text{(L3) there exists $\mu>0$ such that $\cL_\perp(\cdot)$ is $\mu$-strongly convex in $\{\bv:\norm{\bv}<C+\norm{\bv^\star}\}$;}
\end{align*}
It is also easy to verify the $L$-smoothness:
\begin{align*}
    \text{(L4) there exists $L>0$ such that $\cL_\perp(\cdot)$ is $L$-smooth in $\{\bv:\norm{\bv}<C+\norm{\bv^\star}\}$.}
\end{align*}

\underline{Step I. The minimizer $\bv^\star\ne0$.}

If $\bv^\star=\bzero$, then $\nabla\cL_{\perp}(\bzero)=\bzero$, which implies
\begin{align*}
    \bzero=\frac{1}{|\cI|}\sum_{i\in\cI}e^0\cP(\bz_i)=\frac{1}{|\cI|}\sum_{i\in\cI}\cP(\bz_i).
\end{align*}
Therefore, 
\begin{align*}
    \frac{1}{|\cI|}\sum_{i\in\cI}\bz_i=&\frac{1}{|\cI|}\sum_{i\in\cI}\<\bz_i,\bw^\star\>\bw^\star+\frac{1}{|\cI|}\sum_{i\in\cI}\cP(\bz_i)
    \\=&\frac{1}{|\cI|}\sum_{i\in\cI}\gamma^\star\bw^\star=\gamma^\star\bw^\star,
\end{align*}
which is contradict to $\gamma^\star\bw^\star\ne\frac{1}{|\cI|}\sum_{i\in\cI}\bz_i$.

\underline{Step II. The gradient error near $\bzero\in{\rm span}\{\cP_{\perp}(\bx_i):i\in\cI\}$.}

Notice that the update rule of $\cP_\perp(\bw(t))$ can be written as 
\begin{align*}
    \cP_{\perp}(\bw(t+1))=
    \cP_{\perp}(\bw(t))-\eta\cP_{\perp}\left(\frac{\nabla\cL(\bw)}{\cL(\bw)}\right)
    =\cP_{\perp}(\bw(t))-\eta\Big(\nabla\cL_\perp(\cP_\perp(\bw(t)))+\beps(\bw(t))\Big).
\end{align*}

In this step, we will prove: there exists $\epsilon_0>0$ and $R_0>0$ such that the gradient error
\begin{align*}
    \norm{\beps(\bw)}=\norm{\cP_{\perp}\left(\frac{\nabla\cL(\bw)}{\cL(\bw)}\right)-\nabla\cL_{\perp}(\cP_\perp(\bw))}\leq\frac{1}{2}\norm{\nabla\cL_{\perp}(\cP_\perp(\bw))}
\end{align*}
holds for any $\bw$ satisfying $\<\bw,\bw^\star\>>R_0$ and $\norm{\cP_{\perp}(\bw)}<\epsilon_0$.

\underline{Step II (i).} 
 For some $\epsilon_1>0$, $\norm{\nabla\cL_{\perp}(\bv)-\nabla\cL_{\perp}(\bzero)}<\frac{1}{8}\norm{\nabla\cL_{\perp}(\bzero)}$ for any $\bv\in\bbB(\bzero;\epsilon_1)$.
    
    Notice that Step I ensures $\nabla\cL_{\perp}(\bzero)=\frac{1}{|\cI|}\sum_{i\in\cI}\cP(\bz_i)\neq\bzero$. We choose 
    \begin{align*}
        \epsilon_1=\min\left\{\frac{\norm{\nabla\cL_{\perp}(\bzero)}}{8L},C+\norm{\bv^\star}\right\}.
    \end{align*}
    Then for any $\bv\in\bbB(\bzero,\epsilon_1)$, then (iv) ($L$-smooth) ensures that:
    \begin{align*}
        \norm{\nabla\cL_{\perp}(\bv)-\nabla\cL_{\perp}(\bzero)}
        \leq L\norm{\bv-\bzero}<{L\epsilon_1}\leq\frac{1}{8}\norm{\nabla\cL_{\perp}(\bzero)}.
    \end{align*}

\underline{Step II (ii).} For some $\epsilon_2>0$ and $R_0>0$, $\norm{\cP_{\perp}\left(\frac{\nabla\cL(\bw)}{\cL(\bw)}\right)-\nabla\cL_{\perp}(\bzero)}\leq\frac{1}{8}\norm{\nabla\cL_{\perp}(\bzero)}$ holds for any $\bw$ satisfying $\<\bw,\bw^\star\>>R_0$ and $\norm{\cP_{\perp}(\bw)}<\epsilon_2$.

    Due to $\norm{\nabla\cL_{\perp}(\bzero)}/16\ne0$, using Lemma~\ref{lemma: perp gradient error estimate}, there exists $\epsilon_2>0$ and $R_2>0$ such that: for any $\bw$ satisfying $\<\bw,\bw^\star\>>R_0$ and $\norm{\cP_{\perp}(\bw)}<\epsilon_2$, it holds that:
    \begin{align*}
    &\norm{\cP_\perp\left(\frac{\nabla\cL(\bw)}{\cL(\bw)}\right)-\nabla\cL_\perp(\bzero)}
    \\=&
    \norm{\frac{\sum_{i=1}^n e^{-\<\bw,\bz_i\>}\cP_\perp(\bz_i)}{\sum_{j=1}^n e^{-\<\bw,\bz_j\>}}-\frac{1}{|\cI|}\sum_{i\in\cI}\cP_\perp(\bz_i)}
    \\\leq&
    \norm{\frac{\sum_{i=1}^n e^{-\<\bw,\bz_i\>}\cP_\perp(\bz_i)}{\sum_{j=1}^n e^{-\<\bw,\bz_j\>}}-\frac{\sum_{i\in\cI} e^{-\<\bw,\bz_i\>}\cP_\perp(\bz_i)}{\sum_{j\in\cI} e^{-\<\bw,\bz_j\>}}}
    \\&\quad+\norm{\frac{\sum_{i\in\cI} e^{-\<\bw,\bz_i\>}\cP_\perp(\bz_i)}{\sum_{j\in\cI} e^{-\<\bw,\bz_j\>}}-\frac{1}{|\cI|}\sum_{i\in\cI}\cP_\perp(\bz_i)}
    \\
    \overset{\text{Lemma~\ref{lemma: perp gradient error estimate} (iii)(iv)}}{\leq}&
    \frac{\norm{\nabla\cL_{\perp}(\bzero)}}{16}+\frac{\norm{\nabla\cL_{\perp}(\bzero)}}{16}=\frac{\norm{\nabla\cL_{\perp}(\bzero)}}{8}.
\end{align*}

\underline{Step II (iii).} Based on Step II (i) and (ii), we can select
\begin{align*}
    \epsilon_3:=\min\{\epsilon_1,\epsilon_2\},\quad R_0:=R_0.
\end{align*}
Then for  any $\bw$ satisfying $\<\bw,\bw^\star\>>R_0$ and $\norm{\cP_{\perp}(\bw)}<\epsilon_3$, it holds that:
\begin{align*}
    &\norm{\beps(\bw)}=\norm{\cP_{\perp}\left(\frac{\nabla\cL(\bw)}{\cL(\bw)}\right)-\nabla\cL_{\perp}(\cP_\perp(\bw))}
    \\\leq&
    \norm{\cP_{\perp}\left(\frac{\nabla\cL(\bw)}{\cL(\bw)}\right)-\nabla\cL_{\perp}(\bzero)}+\norm{\nabla\cL_{\perp}(\bzero)-\nabla\cL_{\perp}(\cP_\perp(\bw))}
    \\
    \overset{\text{Step II (i) and (ii)}}{\leq}&
    \frac{\norm{\nabla\cL_{\perp}(\bzero)}}{8}+\frac{\norm{\nabla\cL_{\perp}(\bzero)}}{8}=\frac{\norm{\nabla\cL_{\perp}(\bzero)}}{4}\overset{\text{Step II (i)}}{<}\frac{\norm{\nabla\cL_{\perp}(\cP_\perp(\bw))}}{2}.
\end{align*}

\underline{Step III. The proof of the main claim.}

From Theorem~\ref{thm: NGD upper bound ji}, we know $\norm{\bw(t)}=\Theta(t)$ and $\norm{\frac{\bw(t)}{\norm{\bw(t)}}-\bw^\star}=\cO(1/t)$. Therefore, there exists $T_0>0$ such that $\<\bw(t),\bw^\star\>>R_0$ holds for any $t>T_0$ (where $R_0$ is defined in Step II).
Additionally, we choose
\begin{align*}
    \epsilon_0:=\min\left\{\epsilon_3,\frac{1}{2}\norm{\bv^\star}\right\},
\end{align*}
where $\epsilon_3$ is defined in Step II.

Consequently, in this step, we aim to prove: there exists constant $T_0>0$ and $\epsilon_0>0$, such that: for any $T>T_0$, there exists $t>T$ s.t. $\norm{\cP_{\perp}(\bw(t))}>\epsilon_0$.

Given any $T>T_0$, now we assume that $\norm{\cP_{\perp}(\bw(t))}<\epsilon_0$ holds for any $t>T$.


Recalling Theorem~\ref{thm: orthogonal dynamics, ji}, it ensures that $\cL_\perp(\cdot)$ is $\mu$-strongly convex in $\bbB(\bv^\star;C)-\bbB(\bv^\star;\delta)$ for some $\mu>0$. Therefore, 
\begin{align*}
    \norm{\nabla\cL_\perp(\vw)}\geq\mu\norm{\bv-\bv^\star}\geq\mu\delta,\quad\forall\bv\in\bbB(\bv^\star;C)-\bbB(\bv^\star;\delta).
\end{align*}

By our result in Step II, for any $t>T$, the gradient error holds that 
\begin{align*}
    \norm{\cP_\perp\left(\frac{\nabla\cL(\bw(t))}{\cL(\bw(t))}\right)-\nabla\cL_\perp(\cP_{\perp}(\bw(t)))}\leq\frac{1}{2}\norm{\nabla\cL_\perp(\cP_{\perp}(\bw(t)))}.
\end{align*}


Hence, by setting $\eta\leq 1/9L$, the loss descent has the following lower bound: for any $t> T$,
\begin{align*}
    &\cL_{\perp}(\cP_{\perp}(\bw(t)))-\cL_{\perp}^\star=\cL_{\perp}\left(\cP_{\perp}(\bw(t-1))-\eta\cP_\perp\left(\frac{\nabla\cL(\bw(t-1))}{\cL(\bw(t-1))}\right)\right)-\cL_{\perp}^\star
    \\
    \overset{\text{Lemma~\ref{lemma: convex optimization}}}{\leq}&
    \cL_{\perp}\left(\cP_{\perp}(\bw(t-1))\right)-\cL_{\perp}^\star-\eta\<\nabla\cL_{\perp}(\cP_{\perp}(\bw(t-1))),\cP_\perp\left(\frac{\nabla\cL(\bw(t-1))}{\cL(\bw(t-1))}\right)\>
    \\&\quad+\frac{L}{2}\eta^2\norm{\cP_\perp\left(\frac{\nabla\cL(\bw(t-1))}{\cL(\bw(t-1))}\right)}^2
    \\=&
    \cL_{\perp}\left(\cP_{\perp}(\bw(t-1))\right)-\cL_{\perp}^\star-\eta\norm{\nabla\cL_{\perp}(\cP_{\perp}(\bw(t-1)))}^2
    \\&\quad-\eta\<\nabla\cL_{\perp}(\cP_{\perp}(\bw(t-1))),\cP_\perp\left(\frac{\nabla\cL(\bw(t-1))}{\cL(\bw(t-1))}\right)-\nabla\cL_{\perp}(\cP_{\perp}(\bw(t-1)))\>
    \\&\quad+\frac{L}{2}\eta^2\norm{\cP_\perp\left(\frac{\nabla\cL(\bw(t-1))}{\cL(\bw(t-1))}\right)}^2
    \\\leq&
    \cL_{\perp}\left(\cP_{\perp}(\bw(t-1))\right)-\cL_{\perp}^\star
    \\&\quad-\eta\left(\norm{\nabla\cL_{\perp}(\cP_{\perp}(\bw(t-1)))}^2-\frac{1}{2}\norm{\nabla\cL_{\perp}(\cP_{\perp}(\bw(t-1)))}^2\right)
    \\&\quad+\frac{L\eta^2}{2}\left(\frac{3}{2}\right)^2\norm{\nabla\cL_{\perp}(\cP_{\perp}(\bw(t-1)))}^2
    \\\leq&
    \cL_{\perp}\left(\cP_{\perp}(\bw(t-1))\right)-\cL_{\perp}^\star-\eta\left(\frac{1}{2}-\frac{9\eta L}{8}\right)\norm{\nabla\cL_{\perp}(\cP_{\perp}(\bw(t-1)))}^2
    \\\leq&
    \cL_{\perp}\left(\cP_{\perp}(\bw(t-1))\right)-\cL_{\perp}^\star-\frac{3\eta}{8}\norm{\nabla\cL_{\perp}(\cP_{\perp}(\bw(t-1)))}^2
    \\
    \overset{\text{Lemma~\ref{lemma: convex optimization}}}{\leq}&
    \cL_{\perp}\left(\cP_{\perp}(\bw(t-1))\right)-\cL_{\perp}^\star-\frac{3\eta}{8}\cdot 2\mu\left(\cL_{\perp}\left(\cP_{\perp}(\bw(t-1))\right)-\cL_{\perp}^\star\right)
    \\\leq&\left(1-\frac{3\eta\mu}{4}\right)\left(\cL_{\perp}\left(\cP_{\perp}(\bw(t-1))\right)-\cL_{\perp}^\star\right)
    \\\leq&\cdots
    \\\leq&\left(1-\frac{3\eta\mu}{4}\right)^{t-T}\left(\cL_{\perp}\left(\cP_{\perp}(\bw(T_\epsilon))\right)-\cL_{\perp}^\star\right).
\end{align*}

Hence, there exists time $t>T$ such that $\cL_\perp\left(\cP_{\perp}(\bw(t))\right)-\cL_\perp^\star<\frac{\mu\epsilon_0}{4}$. 

On the other hand, the strong convexity and Lemma~\ref{lemma: convex optimization} implies that
\begin{align*}
    &\cL_\perp\left(\cP_{\perp}(\bw(t))\right)-\cL_\perp^\star\geq\frac{\mu}{2}\norm{\cP_{\perp}(\bw(t))-\bv^\star}\geq\frac{\mu}{2}\left(\norm{\bv^\star}-\norm{\cP_{\perp}(\bw(t)}\right)
    \\>&\frac{\mu}{2}(\norm{\bv^\star}-\epsilon_0)\geq\frac{\mu\epsilon_0}{4}.
\end{align*}
Thus, we obtain the {\bf contradiction}. Hence, our {\bf main claim} holds.

\underline{Step IV. Final Lower bound.}

From our result in Step III,  there exists a subsequence $\bw(t_k)$ satisfying $t_{k+1}>t_k$, $t_k\to\infty$, and $\norm{\cP_\perp(\bw(t_k))}>\epsilon_0$.
Recalling (L2), it holds that $\norm{\cP_{\perp}(\bw(t_k))}\leq C+\norm{\bv^\star}$.
Therefore, 
$\norm{\cP_{\perp}\bw(t_k)}=\Theta(1)$.

Recalling Theorem~\ref{thm: NGD upper bound ji}, it holds that $\norm{\bw(t_k)}=\Theta(t_k)$.
Then, a direct calculation ensures that:
\begin{align*}
    &\norm{\frac{\bw(t_k)}{\norm{\bw(t_k)}}-\bw^\star}^2=2-2\frac{\<\bw(t_k),\bw^\star\>}{\norm{\bw(t_k)}}
    \\=&2-2\frac{\<\bw(t_k),\bw^\star\>}{\sqrt{\<\bw(t_k),\bw^\star\>^2+\norm{\cP_\perp(\bw(t_k))}^2}}
    \\=&2-\frac{2}{\sqrt{1+\frac{\norm{\cP_\perp(\bw(t_k))}^2}{\<\bw(t_k),\bw^\star\>^2}}}=\Theta\left(\frac{\norm{\cP_\perp(\bw(t_k))}^2}{\<\bw(t_k),\bw^\star\>^2}\right)
    \\=&\Theta\left(\frac{\norm{\cP_\perp(\bw(t_k))}^2}{\norm{\bw(t_k)}^2-\norm{\cP_\perp(\bw(t_k))}^2}\right)=\Theta\left(\frac{1}{\norm{\bw(t_k)}^2-\norm{\cP_\perp(\bw(t_k))}^2}\right)=\Theta\left(\frac{1}{t_k^2}\right),
\end{align*}
which implies the tight bound for the directional convergence rate: $\norm{\frac{\bw(t_k)}{\norm{\bw(t_k)}}-\bw^\star}=\Theta\left(\frac{1}{t_k}\right)$.
Moreover, with the help of Lemma~\ref{lemma: tight margin error and Directional error}, we have the lower bound for the margin maximization rate: $\gamma^\star-\gamma(\bw(t_k))=\Omega\left(\frac{1}{t_k^2}\right)$.

Hence, we have proved Theorem~\ref{thm: GD NGD main result} for NGD.


\end{proof}

\subsection{Useful Lemmas}

\begin{lemma}\label{lemma: uniform project lower bound}
    Under Assumption~\ref{ass: linearly separable}, it holds that
    \begin{align*}
        \gamma^\star\leq\left<-\frac{\nabla\cL(\bw)}{\cL(\bw)},\bw^\star\right>\leq1,\quad
        \gamma^\star\leq\norm{\frac{\nabla\cL(\bw)}{\cL(\bw)}}\leq1,\quad\forall \bw\in\bbR^{d}.
    \end{align*}
\end{lemma}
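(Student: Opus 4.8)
The plan is to rewrite $-\nabla\cL(\bw)/\cL(\bw)$ explicitly as a probability average of the signed data vectors and then bound its inner product with $\bw^\star$ and its norm term by term. Concretely, since $\cL(\bw)=\frac1n\sum_{i=1}^n e^{-y_i\<\bw,\bx_i\>}$, a direct computation gives
\[
-\frac{\nabla\cL(\bw)}{\cL(\bw)}=\sum_{i=1}^n p_i(\bw)\,y_i\bx_i,\qquad p_i(\bw):=\frac{e^{-y_i\<\bw,\bx_i\>}}{\sum_{j=1}^n e^{-y_j\<\bw,\bx_j\>}},
\]
so that $p_i(\bw)\ge 0$ and $\sum_i p_i(\bw)=1$. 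Thus the normalized negative gradient is, at every $\bw$, a convex combination of the vectors $y_i\bx_i$, and all four inequalities will reduce to elementary bounds on these vectors.

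For the inner-product bounds I would use that $\bw^\star\in\bbS^{d-1}$, so $\hat\bw^\star=\bw^\star$ and $\gamma^\star=\gamma(\bw^\star)=\min_{i\in[n]}y_i\<\bw^\star,\bx_i\>$. Hence $y_i\<\bw^\star,\bx_i\>\ge\gamma^\star$ for every $i$, while Cauchy--Schwarz together with $\norm{\bx_i}\le 1$ gives $y_i\<\bw^\star,\bx_i\>\le\norm{\bx_i}\le 1$. Averaging these two-sided bounds against the weights $p_i(\bw)$ yields $\gamma^\star\le\<-\nabla\cL(\bw)/\cL(\bw),\bw^\star\>\le 1$, which is the first chain.

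For the norm, the upper bound is the triangle inequality applied to the convex combination: $\norm{\sum_i p_i(\bw)y_i\bx_i}\le\sum_i p_i(\bw)\norm{\bx_i}\le 1$. The lower bound follows by projecting onto the unit vector $\bw^\star$, namely $\norm{-\nabla\cL(\bw)/\cL(\bw)}\ge\<-\nabla\cL(\bw)/\cL(\bw),\bw^\star\>\ge\gamma^\star$ by the inner-product bound just established. There is no genuine obstacle in this proof; the only point requiring minor care is to invoke the normalization conventions of the setup ($\norm{\bw^\star}=1$, $\norm{\bx_i}\le 1$) correctly and to recognize the softmax structure of $p_i(\bw)$, after which everything is a short computation.
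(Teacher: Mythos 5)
Your proposal is correct and follows essentially the same route as the paper: the paper's ratio-of-sums expression for $\left<-\nabla\cL(\bw)/\cL(\bw),\bw^\star\right>$ is precisely your convex combination with softmax weights, and the bounds $\gamma^\star\leq y_i\<\bw^\star,\bx_i\>\leq 1$, the triangle inequality for the norm upper bound, and projection onto $\bw^\star$ for the norm lower bound are exactly the steps used there.
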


\begin{proof}[Proof of Lemma~\ref{lemma: uniform project lower bound}] For any $\bw\in\bbR^{d}$, we have:
\begin{align*}
    &\left<-\frac{\nabla\cL(\bw)}{\cL(\bw)},\bw^\star\right>=\frac{\frac{1}{n}\sum\limits_{i=1}^n e^{-y_i\<\bw,\bx_i\>}y_i\<\bw^\star,\bx_i\>}{\frac{1}{n}\sum\limits_{i=1}^ne^{-y_i\<\bw,\bx_i\>}}
    \geq\frac{\frac{1}{n}\sum\limits_{i=1}^n e^{-y_i\<\bw,\bx_i\>}\gamma^{\star}}{\frac{1}{n}\sum\limits_{i=1}^ne^{-y_i\<\bw,\bx_i\>}}=\gamma^*,
    \\
    &\left<-\frac{\nabla\cL(\bw)}{\cL(\bw)},\bw^\star\right>=\frac{\frac{1}{n}\sum\limits_{i=1}^n e^{-y_i\<\bw,\bx_i\>}y_i\<\bw^\star,\bx_i\>}{\frac{1}{n}\sum\limits_{i=1}^ne^{-y_i\<\bw,\bx_i\>}}
    \leq\frac{\frac{1}{n}\sum\limits_{i=1}^n e^{-y_i\<\bw,\bx_i\>}}{\frac{1}{n}\sum\limits_{i=1}^ne^{-y_i\<\bw,\bx_i\>}}=1.
\end{align*}

For the lower bound of $\norm{\nabla\cL(\bw)/\cL(\bw)}$, it holds that
\begin{align*}
    \norm{\frac{\nabla\cL(\bw)}{\cL(\bw)}}\geq\<-\frac{\nabla\cL(\bw)}{\cL(\bw)},\bw^\star\>\geq \gamma^\star.
\end{align*}

For the upper bound of $\norm{\nabla\cL(\bw)/\cL(\bw)}$, it holds that
\begin{align*}
    &\norm{\frac{\nabla\cL(\bw)}{\cL(\bw)}}=\norm{-\frac{\frac{1}{n}\sum\limits_{i=1}^ne^{-y_i\<\bw,\bx_i\>}y_i\bx_i}{\frac{1}{n}\sum\limits_{i=1}^ne^{-y_i\<\bw,\bx_i\>}}}\leq\frac{\frac{1}{n}\sum\limits_{i=1}^ne^{-y_i\<\bw,\bx_i\>}\norm{y_i\bx_i}}{\frac{1}{n}\sum\limits_{i=1}^ne^{-y_i\<\bw,\bx_i\>}}\leq1.
\end{align*}

\end{proof}

\begin{lemma}[(Two-sided) Margin error and Directional error]\label{lemma: tight margin error and Directional error}
    Under Assumption~\ref{ass: linearly separable} and~\ref{ass: non-degenerate data}, if $\bw$ satisfies $\norm{\bw^\star-\frac{\bw}{\norm{\bw}}}<(\gamma_{\rm sub}^\star-\gamma^\star)/2$ (where $\gamma_{\rm sub}^\star=\min\limits_{i\notin\cI}\<\bw^\star,\bz_i\>$), then it holds that 
    \begin{align*}
        \frac{\gamma^\star}{2}\norm{\bw^\star-\frac{\bw}{\norm{\bw}}}^2\leq\gamma^\star-\gamma(\bw)\leq\norm{\bw^\star-\frac{\bw}{\norm{\bw}}}.
    \end{align*}
\end{lemma}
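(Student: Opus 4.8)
The plan is to establish the two-sided bound on the margin error separately. The upper bound $\gamma^\star-\gamma(\bw)\le\norm{\bw^\star-\bw/\norm{\bw}}$ is already Lemma~\ref{lemma: Margin error and Directional error}, so nothing new is needed there; I will simply cite it. The content is in the lower bound $\tfrac{\gamma^\star}{2}\norm{\bw^\star-\bw/\norm{\bw}}^2\le\gamma^\star-\gamma(\bw)$, which must exploit Assumption~\ref{ass: non-degenerate data} (the dual representation $\bw^\star=\sum_{i\in\cI}\alpha_i\bx_iy_i$ with $\alpha_i>0$, hence $\sum_{i\in\cI}\alpha_i=1/\gamma^\star$ after normalization, or equivalently $\gamma^\star\bw^\star=\sum_i\alpha_i\bx_iy_i$ with $\sum\alpha_i=1$) together with the closeness hypothesis $\norm{\bw^\star-\hat\bw}<(\gamma_{\rm sub}^\star-\gamma^\star)/2$.

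First I would write $\hat\bw=\bw/\norm{\bw}$ and decompose $\hat\bw=\bw^\star+\bdelta$ where $\norm{\bdelta}<(\gamma_{\rm sub}^\star-\gamma^\star)/2$. The key observation is that the closeness hypothesis forces the minimizing index in $\gamma(\bw)=\min_i y_i\langle\hat\bw,\bx_i\rangle$ to be a support vector: for $i\notin\cI$ we have $y_i\langle\hat\bw,\bx_i\rangle\ge\gamma_{\rm sub}^\star-\norm{\bdelta}>\gamma_{\rm sub}^\star-(\gamma_{\rm sub}^\star-\gamma^\star)/2=(\gamma_{\rm sub}^\star+\gamma^\star)/2>\gamma^\star$, while for $i\in\cI$, $y_i\langle\hat\bw,\bx_i\rangle=\gamma^\star+y_i\langle\bdelta,\bx_i\rangle$, and I want to show the minimum over $i\in\cI$ is small. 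Using $\sum_{i\in\cI}\alpha_i y_i\bx_i=\gamma^\star\bw^\star$ with $\sum_{i\in\cI}\alpha_i=1$, I can form the convex combination $\sum_{i\in\cI}\alpha_i y_i\langle\bdelta,\bx_i\rangle=\langle\bdelta,\gamma^\star\bw^\star\rangle=\gamma^\star\langle\bdelta,\bw^\star\rangle$. Since $\hat\bw$ and $\bw^\star$ are unit vectors, $\langle\bdelta,\bw^\star\rangle=\langle\hat\bw-\bw^\star,\bw^\star\rangle=\langle\hat\bw,\bw^\star\rangle-1=-\tfrac12\norm{\hat\bw-\bw^\star}^2=-\tfrac12\norm{\bdelta}^2$. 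Therefore the $\alpha$-weighted average of $y_i\langle\bdelta,\bx_i\rangle$ over $i\in\cI$ equals $-\tfrac{\gamma^\star}{2}\norm{\bdelta}^2\le 0$, which means there exists some $i^\star\in\cI$ with $y_{i^\star}\langle\bdelta,\bx_{i^\star}\rangle\le -\tfrac{\gamma^\star}{2}\norm{\bdelta}^2$ (the minimum of a collection is at most any weighted average with nonnegative weights summing to one). Hence $\gamma(\bw)\le y_{i^\star}\langle\hat\bw,\bx_{i^\star}\rangle=\gamma^\star+y_{i^\star}\langle\bdelta,\bx_{i^\star}\rangle\le\gamma^\star-\tfrac{\gamma^\star}{2}\norm{\bdelta}^2$, i.e. $\gamma^\star-\gamma(\bw)\ge\tfrac{\gamma^\star}{2}\norm{\hat\bw-\bw^\star}^2$, as desired.

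I would present this as: (1) reduce to bounding $\min_{i\in\cI}y_i\langle\hat\bw,\bx_i\rangle$ using the closeness hypothesis to rule out non-support-vectors; (2) use the dual representation and normalization $\sum\alpha_i=1$ to express the $\alpha$-weighted average of $y_i\langle\bdelta,\bx_i\rangle$ as $\gamma^\star\langle\bdelta,\bw^\star\rangle$; (3) compute $\langle\bdelta,\bw^\star\rangle=-\tfrac12\norm{\bdelta}^2$ from the unit-norm identity; (4) conclude by the averaging (pigeonhole) argument. The main obstacle — though it is more of a bookkeeping point than a genuine difficulty — is getting the normalization of the dual variables right: Assumption~\ref{ass: non-degenerate data} as stated gives $\bw^\star=\sum_{i\in\cI}\alpha_i\bx_iy_i$, and taking the inner product with $\bw^\star$ yields $1=\sum_i\alpha_i\gamma^\star$, so $\sum_i\alpha_i=1/\gamma^\star$; rescaling to $\tilde\alpha_i=\gamma^\star\alpha_i$ gives $\sum\tilde\alpha_i=1$ and $\sum\tilde\alpha_iy_i\bx_i=\gamma^\star\bw^\star$, which is exactly the combination I use. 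One should also double-check that the hypothesis $\norm{\bdelta}<(\gamma_{\rm sub}^\star-\gamma^\star)/2$ is genuinely needed only to guarantee the active index lies in $\cI$, and that $\gamma_{\rm sub}^\star>\gamma^\star$ is automatic since non-support-vectors have strictly larger margin. Everything else is a short computation.
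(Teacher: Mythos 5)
Your proposal is correct and follows essentially the same route as the paper's proof: reduce to the support vectors, use the dual representation $\bw^\star=\sum_{i\in\cI}\alpha_i\bx_iy_i$ to compare the minimum (equivalently, the paper's maximum of the differences) against the $\alpha$-weighted average, and invoke the identity $\<\bw^\star-\hat\bw,\bw^\star\>=\tfrac12\norm{\bw^\star-\hat\bw}^2$. The only differences are bookkeeping (you rescale the dual variables to sum to one, the paper keeps $\sum_i\alpha_i=1/\gamma^\star$ and divides at the end).
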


\begin{proof}[Proof of Lemma~\ref{lemma: tight margin error and Directional error}]\ \\
This lemma is an improved version of Lemma~\ref{lemma: Margin error and Directional error}. The second ``$\leq$'' is ensured by Lemma~\ref{lemma: Margin error and Directional error}, and we only need to prove the first ``$\leq$''. For simplicity, we still denote $\bz_i:=\bz_i,i\in[n]$.

\underline{Step I. $\gamma(\bw)=\min_{i\in\cI}\<\frac{\bw}{\norm{\bw}},\bz_i\>$.}

For any $i\in[n]$, we have
\begin{align*}
    \left|\<\bw^\star,\bz_i\>-\<\frac{\bw}{\norm{\bw}},\bz_i\>\right|\leq\norm{\bw^\star-\frac{\bw}{\norm{\bw}}}<\frac{\gamma^\star_{\rm sub}-\gamma^\star}{2},
\end{align*}
which implies $\<\bw^\star,\bz_i\>-\frac{\gamma^\star_{\rm sub}-\gamma^\star}{2}<\<\frac{\bw}{\norm{\bw}},\bz_i\><\<\bw^\star,\bz_i\>+\frac{\gamma^\star_{\rm sub}-\gamma^\star}{2}$. Furthermore,
\begin{align*}
    &\<\frac{\bw}{\norm{\bw}},\bz_i\><\gamma^\star+\frac{\gamma^\star_{\rm sub}-\gamma^\star}{2}=\frac{\gamma^\star_{\rm sub}+\gamma^\star}{2},\quad i\in\cI;
    \\&
    \<\frac{\bw}{\norm{\bw}},\bz_i\>>\gamma_{\rm sub}^\star-\frac{\gamma^\star_{\rm sub}-\gamma^\star}{2}=\frac{\gamma^\star_{\rm sub}+\gamma^\star}{2},\quad i\notin\cI.
\end{align*}
Therefore, it holds that
\begin{align*}
    \gamma(\bw)=\min_{i\in[n]}\<\frac{\bw}{\norm{\bw}},\bz_i\>=\min_{i\in\cI}\<\frac{\bw}{\norm{\bw}},\bz_i\>.
\end{align*}

\underline{Step II. The lower bound for $\gamma^\star-\gamma(\bw)$.}

From the result in Step I,
\begin{align*}
    &\gamma^\star-\gamma(\bw)=\gamma^\star-\min_{i\in\cI}\<\frac{\bw}{\norm{\bw}},\bz_i\>
    =\max_{i\in\cI}\left(\gamma^\star-\<\frac{\bw}{\norm{\bw}},\bz_i\>\right)
    \\=&\max_{i\in\cI}\left(\<\bw^\star,\bz_i\>-\<\frac{\bw}{\norm{\bw}},\bz_i\>\right)
    =\max_{i\in\cI}\<\bw^\star-\frac{\bw}{\norm{\bw}},\bz_i\>.
\end{align*}
Recalling Assumption~\ref{ass: non-degenerate data}, $\bw^\star=\sum_{i\in\cI}\alpha_i\bz_i$, where $\alpha_i>0$ and $1/\gamma^\star=\sum_{i\in\cI}\alpha_i$. Thus, for every $\alpha_i>0$, we have $\alpha_i(\gamma^\star-\gamma(\bw))=\alpha_i\max_{i\in\cI}\<\bw^\star-\frac{\bw}{\norm{\bw}},\bz_i\>$, which ensures:
\begin{align*}
    &\left(\sum_{i\in\cI}\alpha_i\right)(\gamma^\star-\gamma(\bw))=\left(\sum_{i\in\cI}\alpha_i\right)\max_{i\in\cI}\<\bw^\star-\frac{\bw}{\norm{\bw}},\bz_i\>
    \\\geq&
    \sum_{i\in\cI}\alpha_i\<\bw^\star-\frac{\bw}{\norm{\bw}},\bz_i\>=\<\bw^\star-\frac{\bw}{\norm{\bw}},\sum_{i\in\cI}\alpha_i\bz_i\>
    \\=&
    \<\bw^\star-\frac{\bw}{\norm{\bw}},\bw^\star\>=1-\<\frac{\bw}{\norm{\bw}},\bw^\star\>=\frac{1}{2}\left(2-2\<\frac{\bw}{\norm{\bw}},\bw^\star\>\right)
    \\=&\frac{1}{2}\norm{\bw^\star-\frac{\bw}{\norm{\bw}}}^2.
\end{align*}
Hence, we obtain:
\begin{align*}
    \gamma^\star-\gamma(\bw)\geq\frac{1}{2\sum_{i\in\cI}\alpha_i}\norm{\bw^\star-\frac{\bw}{\norm{\bw}}}^2=\frac{\gamma^\star}{2}\norm{\bw^\star-\frac{\bw}{\norm{\bw}}}^2.
\end{align*}

\end{proof}

\begin{lemma}
\label{lemma: perp gradient error estimate}
For any $\epsilon>0$, there exists $\delta\in(0,1)$ and $R>0$, such that for any $\bw$ satisfying $\norm{\cP_\perp(\bw)}<\delta$ and $\<\bw,\bw^\star\>>R$, it holds that
    \begin{align*}
    &(i).\ \left|\sum_{i=1}^n e^{-\<\bw,\bz_i\>}-\sum_{i\in\cI} e^{-\<\bw,\bz_i\>}\right|<\epsilon\sum_{i\in\cI} e^{-\<\bw,\bz_i\>};
    \\
    &(ii).\ \norm{\sum_{i=1}^n e^{-\<\bw,\bz_i\>}\cP_\perp(\bz_i)-\sum_{i\in\cI} e^{-\<\bw,\bz_i\>}\cP_\perp(\bz_i)}<\epsilon\sum_{i\in\cI} e^{-\<\bw,\bz_i\>};
    \\
    &(iii).\norm{\frac{\sum_{i=1}^n e^{-\<\bw,\bz_i\>}\cP_\perp(\bz_i)}{\sum_{j=1}^n e^{-\<\bw,\bz_j\>}}-\frac{\sum_{i\in\cI} e^{-\<\bw,\bz_i\>}\cP_\perp(\bz_i)}{\sum_{j\in\cI} e^{-\<\bw,\bz_j\>}}}<\epsilon;
    \\
    &(iv).\ \norm{\frac{\sum_{i\in\cI} e^{-\<\bw,\bz_i\>}\cP_\perp(\bz_i)}{\sum_{j\in\cI} e^{-\<\bw,\bz_j\>}}-\frac{1}{|\cI|}\sum_{i\in\cI}\cP_\perp(\bz_i)}<\epsilon.
\end{align*}
\end{lemma}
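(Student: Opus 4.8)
\textbf{Proof proposal for Lemma~\ref{lemma: perp gradient error estimate}.}

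The plan is to exploit the separation between support vectors ($i \in \cI$, where $\langle \bw^\star, \bz_i \rangle = \gamma^\star$) and non-support vectors ($i \notin \cI$, where $\langle \bw^\star, \bz_i \rangle = \gamma_i > \gamma^\star$). Write $\langle \bw, \bz_i \rangle = \langle \cP(\bw), \bz_i \rangle + \langle \cP_\perp(\bw), \bz_i \rangle = \langle \bw, \bw^\star \rangle \langle \bw^\star, \bz_i \rangle + \langle \cP_\perp(\bw), \bz_i \rangle$, and abbreviate $h := \langle \bw, \bw^\star \rangle$. Then $e^{-\langle \bw, \bz_i \rangle} = e^{-h \langle \bw^\star, \bz_i \rangle} e^{-\langle \cP_\perp(\bw), \bz_i \rangle}$, and since $\norm{\cP_\perp(\bw)} < \delta$ and $\norm{\bz_i} \le 1$, the factor $e^{-\langle \cP_\perp(\bw), \bz_i \rangle} \in [e^{-\delta}, e^{\delta}]$ uniformly in $i$. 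Let $\gamma_{\rm sub}^\star := \min_{i \notin \cI} \langle \bw^\star, \bz_i \rangle > \gamma^\star$. First I would establish the key ratio bound: for each $i \notin \cI$ and each $j \in \cI$,
\begin{align*}
\frac{e^{-\langle \bw, \bz_i \rangle}}{e^{-\langle \bw, \bz_j \rangle}} \le e^{2\delta} e^{-h(\gamma_{\rm sub}^\star - \gamma^\star)},
\end{align*}
which tends to $0$ as $h \to \infty$ (taking, say, $\delta < 1$ fixed first). Summing over $i \notin \cI$ and using $\sum_{j \in \cI} e^{-\langle \bw, \bz_j \rangle} \le |\cI| \max_{j \in \cI} e^{-\langle \bw, \bz_j \rangle}$, I get $\sum_{i \notin \cI} e^{-\langle \bw, \bz_i \rangle} \le (n - |\cI|) e^{2\delta} e^{-h(\gamma_{\rm sub}^\star - \gamma^\star)} \max_{j \in \cI} e^{-\langle \bw, \bz_j \rangle} \le \frac{(n-|\cI|) e^{2\delta}}{|\cI|} e^{-h(\gamma_{\rm sub}^\star - \gamma^\star)} \sum_{j \in \cI} e^{-\langle \bw, \bz_j \rangle}$. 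This immediately yields (i): choose $R$ large enough (depending on $\epsilon$, the fixed $\delta$, and the data constants) that the prefactor is $< \epsilon$. Claim (ii) follows by the same estimate with the extra factor $\norm{\cP_\perp(\bz_i)} \le 1$ inside the sum, so $\norm{\sum_{i \notin \cI} e^{-\langle \bw, \bz_i \rangle} \cP_\perp(\bz_i)} \le \sum_{i \notin \cI} e^{-\langle \bw, \bz_i \rangle} < \epsilon \sum_{i \in \cI} e^{-\langle \bw, \bz_i \rangle}$.

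For (iii) I would combine (i) and (ii) via the standard quotient-perturbation argument: writing $A = \sum_{i=1}^n e^{-\langle \bw, \bz_i \rangle} \cP_\perp(\bz_i)$, $A' = \sum_{i \in \cI} e^{-\langle \bw, \bz_i \rangle} \cP_\perp(\bz_i)$, $B = \sum_{i=1}^n e^{-\langle \bw, \bz_i \rangle}$, $B' = \sum_{i \in \cI} e^{-\langle \bw, \bz_i \rangle}$, we have $\norm{A/B - A'/B'} \le \norm{A - A'}/B + \norm{A'}\, |B - B'|/(B B')$. From (i) with a small enough threshold, $B \ge (1 - \epsilon') B' \ge \frac{1}{2} B'$, and $\norm{A'} \le B'$ (since $\norm{\cP_\perp(\bz_i)} \le 1$); substituting (i)–(ii) with $\epsilon'$ replaced by $\epsilon/4$ gives $\norm{A/B - A'/B'} \le \frac{\epsilon/4 \cdot B'}{B'/2} + \frac{B' \cdot (\epsilon/4) B'}{(B'/2) B'} \le \epsilon$. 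For (iv) I would again use the structure $e^{-\langle \bw, \bz_i \rangle} = e^{-h\gamma^\star} e^{-\langle \cP_\perp(\bw), \bz_i \rangle}$ for $i \in \cI$: the common factor $e^{-h\gamma^\star}$ cancels in the ratio, leaving $\frac{\sum_{i \in \cI} e^{-\langle \cP_\perp(\bw), \bz_i \rangle} \cP_\perp(\bz_i)}{\sum_{j \in \cI} e^{-\langle \cP_\perp(\bw), \bz_j \rangle}}$, and as $\cP_\perp(\bw) \to \bzero$ every exponent $e^{-\langle \cP_\perp(\bw), \bz_i \rangle} \to 1$, so this quotient converges to $\frac{1}{|\cI|} \sum_{i \in \cI} \cP_\perp(\bz_i)$; a quantitative version (using $|e^{-\langle \cP_\perp(\bw), \bz_i \rangle} - 1| \le e^\delta \delta$ plus the same quotient-perturbation bound) makes the gap $< \epsilon$ for $\delta$ small.

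The main obstacle is bookkeeping the order of quantifiers: (iv) forces $\delta$ small (independent of $h$), while (i)–(iii) force $R$ large for the chosen $\delta$. So the right logical order is to first fix $\delta$ small enough for (iv) and for the $e^{\pm\delta}$ factors to be, say, within $[1/2, 2]$; then, with that $\delta$ held fixed, pick $R$ large enough that the $e^{-h(\gamma_{\rm sub}^\star - \gamma^\star)}$ terms drive the prefactors in (i)–(ii) below the needed tolerances. If one wants a single uniform statement one just takes the final $\delta := \min$ of the two requirements and the final $R := \max$. Everything else is routine: the data constants $n$, $|\cI|$, $\gamma^\star$, $\gamma_{\rm sub}^\star$ are all fixed, and all the sums have at most $n$ terms with summands bounded in norm by $1$ (after the common exponential factor is pulled out), so no uniformity-over-$\bw$ issue arises beyond the one just described.
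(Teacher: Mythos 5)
Your proposal is correct and follows essentially the same route as the paper: split each exponent as $\<\bw,\bz_i\>=h\<\bw^\star,\bz_i\>+\<\cP_\perp(\bw),\bz_i\>$, use the gap $\gamma_{\rm sub}^\star-\gamma^\star>0$ to kill the non-support terms for large $h$ (yielding (i)–(ii)), a quotient-perturbation bound for (iii), and cancellation of the common factor $e^{-h\gamma^\star}$ plus continuity at $\cP_\perp(\bw)=\bzero$ for (iv), with the same final choice of $\delta$ as a minimum and $R$ as a maximum. The only blemish is the intermediate line invoking $\max_{j\in\cI}e^{-\<\bw,\bz_j\>}$: the clean justification of your displayed bound is to average the pairwise ratio inequality over $j\in\cI$ rather than pass through the max, but the stated conclusion is correct as written.
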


\begin{proof}[Proof of Lemma~\ref{lemma: perp gradient error estimate}]
\ \\
For simplicity, in this proof, we still denote $\bz_i:=\bx_i y_i$ ($i\in[n]$) and $\gamma^\star_{\rm sub}:=\min\limits_{i\notin\cI}\<\frac{\bw}{\norm{\bw}},\bz_i\>$. And we are given an $\epsilon>0$. Without loss of generality, we can assume $\delta<1.$



\underline{Proof of (i).}

Notice the following two estimates:
\begin{align*}
    &\sum_{i\notin\cI}e^{-\<\bw,\bz_i\>}=\sum_{i\notin\cI}e^{-\<\bw,\bw^\star\>\<\bw^\star,\bz_i\>}e^{-\<\cP_\perp(\bw),\bz_i\>}
    \\\leq&
    \sum_{i\notin\cI}e^{-\<\bw,\bw^\star\>\gamma_{\rm sub}^\star}e^{-\<\cP_\perp(\bw),\bz_i\>}
    \leq
    \sum_{i\notin\cI}e^{-\<\bw,\bw^\star\>\gamma_{\rm sub}^\star}e^{\norm{\cP_\perp(\bw)}},
\end{align*}
\begin{align*}
    &\sum_{i\in\cI}e^{-\<\bw,\bz_i\>}=\sum_{i\in\cI}e^{-\<\bw,\bw^\star\>\<\bw^\star,\bz_i\>}e^{-\<\cP_\perp(\bw),\bz_i\>}
    \\=&
    \sum_{i\in\cI}e^{-\<\bw,\bw^\star\>\gamma^\star}e^{-\<\cP_\perp(\bw),\bz_i\>}
    \geq\sum_{i\in\cI}e^{-\<\bw,\bw^\star\>\gamma^\star}e^{-\norm{\cP_\perp(\bw)}}.
\end{align*}
Then for any $\delta>0$, $R_>0$, and $\bw$ satisfying $\norm{\cP_\perp(\bw)}<\delta$ and $\<\bw,\bw^\star\>>R$, we have:
\begin{align*}
    &\frac{\sum_{i\notin\cI}e^{-\<\bw,\bz_i\>}}{\sum_{i\in\cI}e^{-\<\bw,\bz_i\>}}\leq\frac{\sum_{i\notin\cI}e^{-\<\bw,\bw^\star\>\gamma_{\rm sub}^\star}e^{\norm{\cP_\perp(\bw)}}}{\sum_{i\in\cI}e^{-\<\bw,\bw^\star\>\gamma^\star}e^{-\norm{\cP_\perp(\bw)}}}
    \\\leq&
    \frac{n-|\cI|}{|\cI|}\exp\left(-\<\bw,\bw^\star\>(\gamma_{\rm sub}^\star-\gamma^\star)+2\norm{\cP_\perp(\bw)}\right)
    \\\leq&
    \frac{n-|\cI|}{|\cI|}\exp\left(-\left(1-\frac{\delta^2}{2}\right)\norm{\bw}(\gamma_{\rm sub}^\star-\gamma^\star)+\sqrt{2}\delta\norm{\bw}\right)
    \\=&
    \frac{n-|\cI|}{|\cI|}\exp\left(-\norm{\bw}\left(\left(1-\frac{\delta^2}{2}\right)(\gamma^\star_{\rm sub}-\gamma^\star)+\sqrt{2}\delta\right)\right)
    \\\leq&
    \frac{n-|\cI|}{|\cI|}\exp\left(-\<\bw,\bw^\star\>\left(\left(1-\frac{\delta^2}{2}\right)(\gamma^\star_{\rm sub}-\gamma^\star)+\sqrt{2}\delta\right)\right).
\end{align*}
Due to $\gamma^\star_{\rm sub}-\gamma^\star>0$, there exist constants $\delta_1>0$ and $R_1>0$ such that: for any $\bw$ satisfying $\norm{\cP_\perp(\bw)}<\delta_1$ and $\<\bw,\bw^\star\>>R_1$, it holds $\frac{\sum_{i\notin\cI}e^{-\<\bw,\bz_i\>}}{\sum_{i\in\cI}e^{-\<\bw,\bz_i\>}}<\epsilon$, which means (i) holds.

\underline{Proof of (ii).}

Based on the proof of (i), there exists $\delta_1>0$ and $R_1>0$ such that: for any $\bw$ satisfying $\norm{\cP_\perp(\bw)}<\delta_1$ and $\<\bw,\bw^\star\>>R_1$, it holds that $\frac{\sum_{i\notin\cI}e^{-\<\bw,\bz_i\>}}{\sum_{i\in\cI}e^{-\<\bw,\bz_i\>}}<\epsilon$, which means
\begin{align*}
    &\norm{\sum_{i\in[n]}e^{-\<\bw,\bz_i\>}\cP_\perp(z_i)-\sum_{i\in\cI}e^{-\<\bw,\bz_i\>}\cP_\perp(z_i)}=\norm{\sum_{i\notin\cI}e^{-\<\bw,\bz_i\>}\cP_\perp(z_i)}
    \\\leq&
    \sum_{i\notin\cI}e^{-\<\bw,\bz_i\>}<\epsilon\sum_{i\in\cI}e^{-\<\bw,\bz_i\>}.
\end{align*}

\underline{Proof of (iii).} 

From the results of (i)(ii), for $\epsilon/2$, there exists $\delta_2>0$ and $R_2>0$ such that: for any $\bw$ satisfying $\norm{\cP_\perp(\bw)}<\delta_2$ and $\<\bw,\bw^\star\>>R_2$,
\begin{align*}
    &\left|\sum_{i=1}^n e^{-\<\bw,\bz_i\>}-\sum_{i\in\cI} e^{-\<\bw,\bz_i\>}\right|<\frac{\epsilon}{2}\sum_{i\in\cI} e^{-\<\bw,\bz_i\>};
    \\
    &\norm{\sum_{i=1}^n e^{-\<\bw,\bz_i\>}\cP_\perp(\bz_i)-\sum_{i\in\cI} e^{-\<\bw,\bz_i\>}\cP_\perp(\bz_i)}<\frac{\epsilon}{2}\sum_{i\in\cI} e^{-\<\bw,\bz_i\>};
\end{align*}
Therefore, we have 
\begin{align*}
    &\norm{\frac{\sum_{i\in[n]} e^{-\<\bw,\bz_i\>}\cP_\perp(\bz_i)}{\sum_{j\in[n]} e^{-\<\bw,\bz_j\>}}-\frac{\sum_{i\in\cI} e^{-\<\bw,\bz_i\>}\cP_\perp(\bz_i)}{\sum_{j\in\cI} e^{-\<\bw,\bz_j\>}}}
    \\\leq&
    \norm{\frac{\sum_{i\in[n]} e^{-\<\bw,\bz_i\>}\cP_\perp(\bz_i)}{\sum_{j\in[n]} e^{-\<\bw,\bz_j\>}}-\frac{\sum_{i\in\cI} e^{-\<\bw,\bz_i\>}\cP_\perp(\bz_i)}{\sum_{j\in[n]} e^{-\<\bw,\bz_j\>}}}
    \\&\quad+\norm{\frac{\sum_{i\in\cI} e^{-\<\bw,\bz_i\>}\cP_\perp(\bz_i)}{\sum_{j\in[n]} e^{-\<\bw,\bz_j\>}}-\frac{\sum_{i\in\cI} e^{-\<\bw,\bz_i\>}\cP_\perp(\bz_i)}{\sum_{j\in\cI} e^{-\<\bw,\bz_j\>}}}
    \\\leq&
    \frac{\epsilon}{2}\frac{\sum_{i\in\cI}e^{-\<\bw,\bz_i\>}}{\sum_{j\in[n]} e^{-\<\bw,\bz_j\>}}+\norm{\sum_{i\in\cI} e^{-\<\bw,\bz_i\>}\cP_\perp(\bz_i)}\left|\frac{\sum_{j\in\cI} e^{-\<\bw,\bz_j\>}-\sum_{j\in[n]}e^{-\<\bw,\bz_j\>}}{\left(\sum_{j\in\cI} e^{-\<\bw,\bz_j\>}\right)\left(\sum_{j\in[n]} e^{-\<\bw,\bz_j\>}\right)}\right|
    \\\leq&
    \frac{\epsilon}{2}\frac{\sum_{i\in\cI}e^{-\<\bw,\bz_i\>}}{\sum_{j\in[n]} e^{-\<\bw,\bz_j\>}}+\frac{\epsilon}{2}\frac{\norm{\sum_{i\in\cI} e^{-\<\bw,\bz_i\>}\cP_\perp(\bz_i)}}{\sum_{j\in[n]} e^{-\<\bw,\bz_j\>}}
    \\\leq&
    \frac{\epsilon}{2}+\frac{\epsilon}{2}=\epsilon/2.
\end{align*}

\underline{Proof of (iv).}

There exists $\delta_3>0$ such that: for any $\bw$ satisfying $\norm{\cP_\perp(\bw)}<\delta_3$, 
\begin{align*}
    \left|e^{-\<\cP_\perp(\bw),\bz_i\>}-1\right|
    \leq 2\left|\<\cP_\perp(\bw),\bz_i\>\right|<2\norm{\cP_\perp(\bw)}<\epsilon/4.
\end{align*}
Then we have 
\begin{align*}
    \left|\sum_{j\in\cI}e^{-\<\cP_\perp(\bw),\bz_i\>}-|\cI|\right|
    \leq
    \sum_{j\in\cI}\left|e^{-\<\cP_\perp(\bw),\bz_i\>}-1\right|\leq\epsilon|\cI|/4.
\end{align*}
Thus, we can derive:
\begin{align*}
    &\norm{\frac{\sum_{i\in\cI} e^{-\<\bw,\bz_i\>}\cP_\perp(\bz_i)}{\sum_{j\in\cI} e^{-\<\bw,\bz_j\>}}-\frac{1}{|\cI|}\sum_{i\in\cI}\cP_\perp(\bz_i)}
    \\=&
    \norm{\frac{\sum_{i\in\cI} e^{-\<\bw,\bw^\star\>\<\bw^\star,\bz_i\>}e^{-\<\cP_\perp(\bw),\bz_i\>}\cP_\perp(\bz_i)}{\sum_{j\in\cI} e^{-\<\bw,\bw^\star\>\<\bw^\star,\bz_j\>}e^{-\<\cP_\perp(\bw),\bz_j\>}}-\frac{1}{|\cI|}\sum_{i\in\cI}\cP_\perp(\bz_i)}
    \\=&
    \norm{\frac{e^{-\<\bw,\bw^\star\>\gamma^\star}\sum_{i\in\cI}e^{-\<\cP_\perp(\bw),\bz_i\>}\cP_\perp(\bz_i)}{e^{-\<\bw,\bw^\star\>\gamma^\star}\sum_{j\in\cI}e^{-\<\cP_\perp(\bw),\bz_j\>}}-\frac{1}{|\cI|}\sum_{i\in\cI}\cP_\perp(\bz_i)}
    \\=&
    \norm{\frac{\sum_{i\in\cI}e^{-\<\cP_\perp(\bw),\bz_i\>}\cP_\perp(\bz_i)}{\sum_{j\in\cI}e^{-\<\cP_\perp(\bw),\bz_j\>}}-\frac{1}{|\cI|}\sum_{i\in\cI}\cP_\perp(\bz_i)}
    \\=&
    \norm{\sum_{i\in\cI}\left(\frac{e^{-\<\cP_\perp(\bw),\bz_i\>}}{\sum_{j\in\cI}e^{-\<\cP_\perp(\bw),\bz_j\>}}-\frac{1}{|\cI|}\right)\cP_\perp(\bz_i)}
    \\\leq&
    \sum_{j\in\cI}\left|\frac{e^{-\<\cP_\perp(\bw),\bz_i\>}}{\sum_{j\in\cI}e^{-\<\cP_\perp(\bw),\bz_j\>}}-\frac{1}{|\cI|}\right|
    \\\leq&
    \sum_{j\in\cI}\left|\frac{e^{-\<\cP_\perp(\bw),\bz_i\>}}{\sum_{j\in\cI}e^{-\<\cP_\perp(\bw),\bz_j\>}}-\frac{1}{\sum_{j\in\cI}e^{-\<\cP_\perp(\bw),\bz_j\>}}\right|+\sum_{j\in\cI}\left|\frac{1}{\sum_{j\in\cI}e^{-\<\cP_\perp(\bw),\bz_j\>}}-\frac{1}{|\cI|}\right|
    \\\leq&
    \sum_{j\in\cI}\frac{\epsilon}{4\sum_{j\in\cI}e^{-\<\cP_\perp(\bw),\bz_j\>}}+\sum_{j\in\cI}\frac{\epsilon|\cI|}{4\sum_{j\in\cI}e^{-\<\cP_\perp(\bw),\bz_j\>}|\cI|}
    \\\leq&
    \sum_{j\in\cI}\frac{\epsilon}{4(1-\epsilon)|\cI|}+\sum_{j\in\cI}\frac{\epsilon}{4(1-\epsilon)|\cI|}=\frac{\epsilon}{2(1-\epsilon)}<\epsilon.
\end{align*}

\underline{The final results.}

We choose $\delta=\min\{\delta_2,\delta_3\}$ and $R=R_2$. From our proofs above, (i)$\sim$(iv) all hold for any $\bw$ satisfying $\norm{\cP_\perp(\bw)}<\delta$ and $\<\bw,\bw^\star\>>R$.

\end{proof}

\begin{lemma}[\citep{ji2020gradient}]\label{lemma: GD directional convergence}
Under Assumption~\ref{ass: linearly separable}, let $\bw(t)$ be trained by GD~\eqref{equ: GD} with $\eta\leq1/2$ starting from $\bw(0)=\bzero$, then GD converges to the max-margin direction: 
\begin{align*}
    \lim_{t\to+\infty}\frac{\bw(t)}{\norm{\bw(t)}}\to\bw^\star.
\end{align*}
\end{lemma}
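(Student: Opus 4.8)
The plan is to run the classical ``gradient descent follows the regularization path'' argument of \citet{soudry2018implicit,ji2020gradient}, specialized to the exponential loss $\ell(z)=e^{-z}$ and assembled from ingredients already available in this paper. \textbf{Step 1 (the loss vanishes and the norm diverges).} For $\eta\le 1/2$ the iterates stay in $\{\cL\le\cL(\bzero)\}=\{\cL\le1\}$, on which $\cL$ is $\beta$-smooth with $\beta\le n$; the standard energy-dissipation estimate for the exponential loss (see \citet{soudry2018implicit}) then gives $\cL(\bw(t+1))\le\cL(\bw(t))-c\norm{\nabla\cL(\bw(t))}^2$ for a constant $c>0$, so $\cL(\bw(t))$ is non-increasing and $\sum_t\norm{\nabla\cL(\bw(t))}^2<\infty$, hence $\nabla\cL(\bw(t))\to\bzero$. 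By Lemma~\ref{lemma: uniform project lower bound}, $\norm{\nabla\cL(\bw)}\ge\gamma^\star\cL(\bw)$, so $\cL(\bw(t))\to0$; since under Assumption~\ref{ass: linearly separable} the sublevel sets of $\cL$ are bounded transverse to every separating direction, $\cL(\bw(t))\to0$ forces $\norm{\bw(t)}\to\infty$.

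\textbf{Step 2 (tracking the regularization path).} Convexity of $\cL$ and the GD update give, for any comparison vector $\bu$, the telescoping inequality $\sum_{s=0}^{t-1}\bigl(\cL(\bw(s+1))-\cL(\bu)\bigr)\le\tfrac{1}{2\eta}\norm{\bw(0)-\bu}^2=\tfrac{1}{2\eta}\norm{\bu}^2$. Taking $\bu=g\bw^\star$, using $\cL(g\bw^\star)\le e^{-g\gamma^\star}$ and monotonicity of $\cL(\bw(\cdot))$, yields $\cL(\bw(t))\le e^{-g\gamma^\star}+\tfrac{g^2}{2\eta t}$ for every $g>0$. On the other hand $\norm{\nabla\cL(\bw)}\le\cL(\bw)$ (again Lemma~\ref{lemma: uniform project lower bound}) gives $\norm{\bw(t)}\le\eta\sum_{s<t}\cL(\bw(s))$ and $\langle\bw(t),\bw^\star\rangle\ge\eta\gamma^\star\sum_{s<t}\cL(\bw(s))$. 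Choosing $g$ of order $\log t$ in the first bound, and for the sharp constant invoking the norm-growth estimate $\norm{\bw(t)}=\Theta(\log t)$ of Theorem~\ref{thm: NGD upper bound ji}, gives $-\log\cL(\bw(t))=\log t-o(\log t)$ and $\norm{\bw(t)}=\tfrac{\log t}{\gamma^\star}\bigl(1+o(1)\bigr)$.

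\textbf{Step 3 (margin convergence implies directional convergence).} Since $\cL(\bw)\ge\tfrac1n e^{-\norm{\bw}\gamma(\bw)}$, Step~2 gives $\gamma(\bw(t))\ge\tfrac{-\log(n\cL(\bw(t)))}{\norm{\bw(t)}}\to\gamma^\star$, while $\gamma(\bw(t))\le\gamma^\star$ by definition of the max margin; hence $\gamma(\bw(t))\to\gamma^\star$. Under Assumption~\ref{ass: linearly separable} the hard-margin problem $\min\{\norm{\bw}:\ y_i\langle\bw,\bx_i\rangle\ge1\ \forall i\}$ has a strongly convex objective over a convex feasible set, so $\bw^\star$ is the \emph{unique} maximizer of the continuous, $0$-homogeneous function $\gamma$ on $\bbS^{d-1}$. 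Therefore any subsequential limit $\bw_\infty\in\bbS^{d-1}$ of $\hat\bw(t)$ satisfies $\gamma(\bw_\infty)=\gamma^\star$, so $\bw_\infty=\bw^\star$; by compactness of $\bbS^{d-1}$ this gives $\hat\bw(t)\to\bw^\star$, which is the claim.

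\textbf{Main obstacle.} The delicate point is Step~2: the crude triangle-inequality bound only yields $\norm{\bw(t)}=O(\log^2 t)$, which is too weak to push $\gamma(\hat\bw(t))$ all the way up to $\gamma^\star$, and it does not by itself control the transverse component $\cP_{\perp}(\bw(t))$. Pinning down $\norm{\bw(t)}=\Theta(\log t)$ -- equivalently the residual decomposition $\bw(t)=(\log t)\bw^\star+\brho(t)$ with $\norm{\brho(t)}=o(\log t)$ -- is the technical core of \citet{soudry2018implicit,ji2020gradient}; I would invoke that estimate (it is subsumed in Theorem~\ref{thm: NGD upper bound ji}) rather than re-derive it, after which the remaining steps are routine.
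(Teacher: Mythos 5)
The paper never proves this lemma: it is imported verbatim from \citet{ji2020gradient} (ultimately \citet{soudry2018implicit}), so there is no in-paper argument to compare against, and the question is whether your blind argument would stand on its own. Your Steps 1 and 3 are sound: the descent/square-summability argument combined with Lemma~\ref{lemma: uniform project lower bound} does give $\cL(\bw(t))\to 0$ and hence $\norm{\bw(t)}\to\infty$, and the compactness-plus-uniqueness argument in Step 3 correctly reduces directional convergence to showing $\gamma(\bw(t))\to\gamma^\star$.

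The gap is exactly where you flag it, and your proposed patch does not close it. To run Step 3 you need the upper bound $\norm{\bw(t)}\le(1/\gamma^\star+o(1))\log t$ with that \emph{precise} constant: if $\norm{\bw(t)}\sim C\log t$ with some $C>1/\gamma^\star$, your chain of inequalities only yields $\liminf_t\gamma(\bw(t))\ge 1/C$, which can sit strictly below $\gamma^\star$. So $\norm{\bw(t)}=\Theta(\log t)$ with an unspecified constant is not enough, and the decomposition $\bw(t)=(\log t)\bw^\star+\brho(t)$ with $\norm{\brho(t)}=o(\log t)$, which you call ``equivalent'' to it, is in fact strictly stronger --- it already implies $\hat{\bw}(t)\to\bw^\star$ by itself, so invoking it is assuming the conclusion. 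Moreover, the results you would lean on (Theorem~\ref{thm: NGD upper bound ji} and Theorem~\ref{thm: refined error result, soudry}) require Assumption~\ref{ass: non-degenerate data} in addition to linear separability, whereas the lemma is stated under Assumption~\ref{ass: linearly separable} alone; and the first clause of Theorem~\ref{thm: NGD upper bound ji} already asserts $\norm{\hat{\bw}(t)-\bw^\star}=\cO(1/\log t)$, so if citing it were legitimate here your three steps would be redundant anyway. The honest options are either to cite \citet{soudry2018implicit,ji2020gradient} as a black box, exactly as the paper does, or to actually establish the sharp norm upper bound under separability alone (via the regularization-path comparison of \citet{ji2020gradient} or the residual analysis of \citet{soudry2018implicit}); that estimate is the nontrivial content of the result, not a routine step.
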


\begin{lemma}[Theorem 4.3,  \citep{ji2021characterizing}]\label{thm: NGD upper bound ji}
Under Assumption~\ref{ass: linearly separable} and~\ref{ass: non-degenerate data}, 

(I) (GD). let $\bw(t)$ be trained by GD~\eqref{equ: GD} with $\eta\leq1$ starting from $\bw(0)=\bzero$. Then $\norm{\frac{\bw(t)}{\norm{\bw(t)}}-\bw^\star}=\cO(1/\log t)$ and $\norm{\bw(t)}=\Theta(\log t)$.

(II) (NGD) let $\bw(t)$ be trained by NGD~\eqref{equ: NGD} with $\eta\leq1$ starting from $\bw(0)=\bzero$. Then $\norm{\frac{\bw(t)}{\norm{\bw(t)}}-\bw^\star}=\cO(1/t)$ and $\norm{\bw(t)}=\Theta(t)$.
\end{lemma}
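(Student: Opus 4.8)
The final statement, Lemma~\ref{thm: NGD upper bound ji}, restates the primal--dual convergence bounds of Ji--Telgarsky for GD and NGD, so the plan is to recover that analysis for the exponential loss $\ell(z)=e^{-z}$ in \eqref{pro: logistic regression}. Write $\bz_i:=y_i\bx_i$, so $-\nabla\cL(\bw)=\tfrac1n\sum_i\bz_i e^{-\langle\bw,\bz_i\rangle}$ and $\nabla^2\cL(\bw)\preceq\cL(\bw)\,\mathbf{I}$ (since $\|\bx_i\|\le1$); because $\cL(\bw(0))=1$ and the loss is nonincreasing for $\eta\le1$, $\cL$ is $\cO(1)$-smooth along the trajectory. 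The two quantities I would track are $p(t):=\langle\bw(t),\bw^\star\rangle$ and the orthogonal residual $\bv(t):=\cP_{\perp}(\bw(t))$. Everything reduces to (i) two-sided control of $\|\bw(t)\|$ and (ii) boundedness of $\|\bv(t)\|$, through the elementary identity $\|\hat{\bw}(t)-\bw^\star\|=\Theta\!\big(\|\bv(t)\|/p(t)\big)$, which holds once $\|\bv(t)\|/p(t)\to0$.

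For the norm rates, telescoping the updates gives the upper bounds $\|\bw(t)\|\le\|\bw(0)\|+\eta\sum_{s<t}\|\nabla\cL(\bw(s))\|\le\|\bw(0)\|+\eta\sum_{s<t}\cL(\bw(s))$ for GD and $\|\bw(t)\|\le\|\bw(0)\|+\eta t$ for NGD (using $\|\nabla\cL(\bw)/\cL(\bw)\|\le1$ from Lemma~\ref{lemma: uniform project lower bound}). For the lower bounds, Lemma~\ref{lemma: uniform project lower bound} also yields $\langle-\nabla\cL(\bw),\bw^\star\rangle\ge\gamma^\star\cL(\bw)$, hence $p(t)\ge\eta\gamma^\star\sum_{s<t}\cL(\bw(s))$ for GD and $p(t)\ge\eta\gamma^\star t$ for NGD. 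The GD case then needs the loss-sum estimate $\sum_{s<t}\cL(\bw(s))=\Theta(\log t)$: the upper half follows from the standard smooth--convex estimate $\cL(\bw(t))-\cL(\bu)\le\|\bw(0)-\bu\|^2/(2\eta t)$ applied with $\bu=c_t\bw^\star$, $c_t\asymp\log t$, so that $\cL(c_t\bw^\star)\le e^{-c_t\gamma^\star}$ and $\cL(\bw(t))=\tilde{\cO}(1/t)$; the lower half from $\cL(\bw(s))\ge\tfrac1n e^{-\|\bw(s)\|}$ together with the $\|\bw(s)\|=\cO(\log s)$ just obtained. This pins $\|\bw(t)\|=\Theta(\log t)$ for GD and $\|\bw(t)\|=\Theta(t)$ for NGD; for NGD, inserting $\|\nabla\cL(\bw)\|^2/\cL(\bw)\ge(\gamma^\star)^2\cL(\bw)$ into the descent estimate additionally gives geometric loss decay $\cL(\bw(t+1))\le(1-c\eta)\cL(\bw(t))$.

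For the directional rate it remains to show $\|\bv(t)\|=\cO(1)$. The orthogonal iterate satisfies $\bv(t+1)=\bv(t)-\eta\,\cP_{\perp}\!\big(\nabla\cL(\bw(t))/\cL(\bw(t))\big)$ for NGD (with $\nabla\cL$ replacing $\nabla\cL/\cL$ for GD). Restricting numerator and denominator to the support set $\cI$ and factoring out $e^{-p(t)\gamma^\star}$ exhibits this as an inexact gradient step (normalized, for NGD; with a vanishing step size $\asymp e^{-p(t)\gamma^\star}$, for GD) on the restricted loss $\cL_{\perp}(\bv)=\tfrac1{|\cI|}\sum_{i\in\cI}e^{-\langle\bv,\cP_{\perp}(\bz_i)\rangle}$, the perturbation being governed by the non-support contributions, which are $\lesssim e^{-(\gamma_{\rm sub}^\star-\gamma^\star)p(t)}\to0$ --- precisely Lemma~\ref{lemma: perp gradient error estimate}. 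Under Assumption~\ref{ass: non-degenerate data} the support vectors span $\mathrm{span}\{\bx_i:i\in[n]\}$, so $\cL_{\perp}$ is strongly convex in a neighborhood of its unique minimizer $\bv^\star$; a Lyapunov argument on $\|\bv(t)-\bv^\star\|$ (strong convexity for contraction, decay of the perturbation) then shows $\bv(t)$ never leaves a fixed ball about $\bv^\star$, so $\|\bv(t)\|=\cO(1)$. Feeding this and the norm rates into the identity of the first paragraph yields $\|\hat{\bw}(t)-\bw^\star\|=\cO(1/\log t)$ for GD and $\cO(1/t)$ for NGD; convergence to $\bw^\star$ in the limit is already supplied by Lemma~\ref{lemma: GD directional convergence} (and its NGD analogue).

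The main obstacle is exactly this last step: the uniform-in-$t$ bound $\|\bv(t)-\bv^\star\|=\cO(1)$ for the inexact-gradient orthogonal dynamics. For NGD the effective step size $\eta/\cL(\bw(t))$ diverges, so one cannot treat the perturbation as small relative to a fixed step; one must instead use that the orthogonal dynamics is genuinely contracting --- $\cL_{\perp}$ is strongly convex on a ball about $\bv^\star$ of radius larger than $\|\bv^\star\|$ --- and that the perturbation shrinks geometrically because $p(t)\gtrsim t$. Bookkeeping the Lyapunov decrease near the boundary of that ball, where the drift could push $\bv(t)$ outward, is the delicate point; this is the content of the Ji--Telgarsky orthogonal-dynamics lemma (and, for GD, of the residual analysis of Soudry et al.), which one would invoke or re-derive at this stage.
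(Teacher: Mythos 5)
This lemma is not proved in the paper at all: it is imported verbatim as Theorem~4.3 of \citet{ji2021characterizing} and used as a black box (alongside Theorem~\ref{thm: orthogonal dynamics, ji} and Theorem~\ref{thm: refined error result, soudry}), so there is no in-paper proof to compare you against. Your sketch is a faithful reconstruction of the route taken in that cited literature: decompose $\bw(t)$ into $p(t)=\<\bw(t),\bw^\star\>$ and $\cP_{\perp}(\bw(t))$, use $\|\hat{\bw}(t)-\bw^\star\|=\Theta\big(\|\cP_{\perp}(\bw(t))\|/p(t)\big)$, control $p(t)$ and $\|\bw(t)\|$ through $\<-\nabla\cL/\cL,\bw^\star\>\in[\gamma^\star,1]$ (Lemma~\ref{lemma: uniform project lower bound}), and control $\cP_{\perp}(\bw(t))$ as an inexact gradient flow on the strongly convex restricted loss $\cL_{\perp}$. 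As a summary of the known proof, this is on target; in particular the NGD norm bound $\|\bw(t)\|=\Theta(t)$ is complete as you state it.

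If the proposal were meant to stand on its own, two gaps remain. First, the decisive ingredient --- the uniform bound $\|\cP_{\perp}(\bw(t))-\bv^\star\|=O(1)$ for the perturbed orthogonal dynamics --- is precisely the content of Theorem~\ref{thm: orthogonal dynamics, ji} (and, for GD, of the residual analysis in Theorem~\ref{thm: refined error result, soudry}); you sketch the Lyapunov/contraction idea and then explicitly defer to those results, so the hardest step is invoked rather than proved, which is acceptable here only because the paper itself treats the whole lemma as a citation. Second, your quantitative chain for the GD norm is lossier than claimed: the comparator bound with $\bu=c_t\bw^\star$, $c_t\asymp(\log t)/\gamma^\star$, gives $\cL(\bw(t))=O(\log^2 t/t)$, whose partial sums are $O(\log^3 t)$, so the argument as written yields only $\|\bw(t)\|=O(\mathrm{polylog}\,t)$, not the stated $\Theta(\log t)$; the lower bound $\Omega(\log t)$ via $p(t)\ge\eta\gamma^\star\sum_{s<t}\cL(\bw(s))$ and $\cL(\bw)\ge e^{-\|\bw\|}/n$ is fine and already suffices for the $O(1/\log t)$ directional rate, but the matching upper bound on $\|\bw(t)\|$ requires $\cL(\bw(t))=\Theta(1/t)$, i.e.\ the refined decomposition $\bw(t)=\bw^\star\log t+O(1)$, not the generic smooth-convex rate. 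Relatedly, the parenthetical claim of per-step geometric loss decay for NGD with factor $1-c\eta$ does not follow from the descent inequality you quote for every $\eta\le1$ (the second-order term need not be dominated unless $\eta\lesssim\gamma^{\star2}$); it is inessential to the lemma, but as stated it is unjustified.
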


\begin{theorem}[Theorem 4.4,  \citep{ji2021characterizing}]\label{thm: orthogonal dynamics, ji}
Under the same conditions in Theorem~\ref{thm: NGD upper bound ji}, let $\bw(t)$ be trained by NGD with $\eta\leq1$ starting from $\bw=\bzero$. 
Then 

(i) $\cL_{\perp}(\cdot)$ has a unique minimizer $\bv^\star$ over ${\rm span}\{\cP_{\perp}(\bx_i):i\in\cI\}$; 

(ii) $\cL_{\perp}(\cdot)$ is strongly convex in any bounded set; 

(iii) there exists an absolute constant $C>0$ such that $\norm{\cP_{\perp}(\bw(t))-\bv^\star}\leq C,\ \forall t$.
    
\end{theorem}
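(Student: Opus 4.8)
The plan is to treat the three items separately: (i) and (ii) are static convex‑analytic facts about $\cL_{\perp}$, while (iii) is the dynamical claim and carries all the difficulty. Throughout write $\bz_i=y_i\bx_i$ and $\bu_i:=\cP_{\perp}(\bz_i)$, so that $\cL_{\perp}(\bv)=\frac{1}{|\cI|}\sum_{i\in\cI}\exp(-\langle\bv,\bu_i\rangle)$ on $V:={\rm span}\{\bu_i:i\in\cI\}={\rm span}\{\cP_{\perp}(\bx_i):i\in\cI\}$, and note $\norm{\bu_i}\le\norm{\bz_i}\le 1$.

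For (i) and (ii) I would first extract the two consequences of Assumption~\ref{ass: non-degenerate data}. From $\bw^\star=\sum_{i\in\cI}\alpha_i\bz_i$ with $\alpha_i>0$ and $\langle\bw^\star,\bz_i\rangle=\gamma^\star$ $(i\in\cI)$, the inner product with $\bw^\star$ gives $\sum_{i\in\cI}\alpha_i=1/\gamma^\star$, and applying $\cP_{\perp}$ gives $\sum_{i\in\cI}(\gamma^\star\alpha_i)\bu_i=\bzero$, a convex combination of the $\bu_i$ with strictly positive weights. Hence for any nonzero $\bv\in V$ one cannot have $\langle\bv,\bu_i\rangle\ge 0$ for all $i\in\cI$ (that would force $0=\sum_{i\in\cI}\gamma^\star\alpha_i\langle\bv,\bu_i\rangle$ with every term nonnegative, hence $\bv\perp{\rm span}\{\bu_i\}=V$); the continuous positive function $\bv\mapsto\max_{i\in\cI}(-\langle\bv,\bu_i\rangle)$ on the unit sphere of $V$ is therefore bounded below by some $c_0>0$, and by homogeneity $\cL_{\perp}(\bv)\ge\frac{1}{|\cI|}e^{c_0\norm{\bv}}$, i.e.\ $\cL_{\perp}$ is coercive on $V$. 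Also $\nabla^2\cL_{\perp}(\bv)=\frac{1}{|\cI|}\sum_{i\in\cI}e^{-\langle\bv,\bu_i\rangle}\bu_i\bu_i^{\top}$ is positive definite on $V$ since the $\bu_i$ span $V$, so $\cL_{\perp}$ is strictly convex on $V$; coerciveness, strict convexity and continuity yield the unique minimizer $\bv^\star$, which is (i). For (ii), on $\{\bv\in V:\norm{\bv}\le M\}$ we have $e^{-\langle\bv,\bu_i\rangle}\ge e^{-M}$, hence $\nabla^2\cL_{\perp}(\bv)\succeq\frac{e^{-M}}{|\cI|}\sum_{i\in\cI}\bu_i\bu_i^{\top}\succeq\frac{e^{-M}\lambda_{\min}}{|\cI|}I_V$, with $\lambda_{\min}>0$ the least eigenvalue of $\sum_{i\in\cI}\bu_i\bu_i^{\top}$ restricted to $V$.

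For (iii), the elementary inputs are Lemma~\ref{lemma: uniform project lower bound}: $r(t):=\langle\bw(t),\bw^\star\rangle$ satisfies $r(t+1)-r(t)=\eta\langle-\nabla\cL(\bw(t))/\cL(\bw(t)),\bw^\star\rangle\ge\eta\gamma^\star$, so $r(t)\to\infty$; and $\norm{\bv(t+1)-\bv(t)}\le\eta\norm{\nabla\cL(\bw(t))/\cL(\bw(t))}\le\eta$, so in any bounded time window the orthogonal iterate $\bv(t)=\cP_{\perp}(\bw(t))$ moves only a bounded amount. The substantive observation is that, for $r(t)$ large and $\bv(t)$ inside a fixed bounded region, the orthogonal update $\bv(t+1)=\bv(t)-\eta\,\cP_{\perp}(\nabla\cL(\bw(t))/\cL(\bw(t)))$ is an \emph{inexact normalized gradient step on $\cL_{\perp}$}: writing $\cP_{\perp}(\nabla\cL(\bw)/\cL(\bw))=-\sum_{i\in[n]}p_i(\bw)\bu_i$ with softmax weights $p_i(\bw)\propto e^{-\langle\bw,\bz_i\rangle}$, the total non‑support‑vector weight, relative to the support‑vector weight $\sum_{i\in\cI}e^{-\langle\bw,\bz_i\rangle}=e^{-r\gamma^\star}|\cI|\cL_{\perp}(\bv)\ge e^{-r\gamma^\star}|\cI|\cL_{\perp}(\bv^\star)$, is at most $\mathcal{O}(e^{-r(\gamma_{\rm sub}^\star-\gamma^\star)+\norm{\bv}})$, which vanishes as $r\to\infty$ for fixed $\norm{\bv}$; hence $\cP_{\perp}(\nabla\cL(\bw)/\cL(\bw))=\nabla\cL_{\perp}(\bv)/\cL_{\perp}(\bv)+\beps(\bw)$ with $\norm{\beps(\bw)}$ as small as desired. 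Granting this, I would run the Lyapunov potential $\norm{\bv(t)-\bv^\star}^2$: convexity of $\cL_{\perp}$ gives $\langle\nabla\cL_{\perp}(\bv)/\cL_{\perp}(\bv),\bv-\bv^\star\rangle\ge 1-\cL_{\perp}(\bv^\star)/\cL_{\perp}(\bv)$, so outside the sublevel set $S=\{\bv:\cL_{\perp}(\bv)\le 3\cL_{\perp}(\bv^\star)\}$ (bounded, of diameter $D_0$, by coercivity) the inward drift dominates the $\mathcal{O}(\eta^2)$ overshoot plus the error term, whence $\norm{\bv(t+1)-\bv^\star}\le\norm{\bv(t)-\bv^\star}$, while inside $S$ the iterate can leave by at most $\eta$. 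Together with $\norm{\bv(t)}\le\eta t$ for the finitely many steps before $r(t)$ crosses the relevant threshold, this pins $\norm{\bv(t)-\bv^\star}\le C$ for all $t$.

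The main obstacle is the circularity buried in ``$\bv(t)$ in a fixed bounded region'': the error bound $\norm{\beps(\bw)}\le\epsilon_0$ and the strong‑convexity modulus used in the Lyapunov step both require an a priori cap $\norm{\bv(t)}\le\rho$, yet the only cheap cap, $\norm{\bv(t)}\le\eta t$, grows. Breaking it needs a genuine bootstrap: one first shows $\cL(\bw(t))$ is nonincreasing and in fact decays geometrically — using the self‑bounded smoothness $\nabla^2\cL(\bw)\preceq\cL(\bw)I$ (from $\norm{\bz_i}\le1$) together with $\eta\le1$ — which forces $|\cI|\cL_{\perp}(\bv(t))\le n\,e^{\gamma^\star r(t)}\cL(\bw(t))$ to stay controlled, hence $\norm{\bv(t)}$ to grow strictly slower than $r(t)$, so that by the time $r(t)$ is large enough the iterate already sits inside the region where the gradient approximation is valid. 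Making these two growth rates match quantitatively to close the loop is precisely the technical heart of Theorem~4.4 in \citep{ji2021characterizing}, whose argument I would follow for the remaining details.
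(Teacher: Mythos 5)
This statement is quoted verbatim from \citet{ji2021characterizing} (their Theorem~4.4) and the paper supplies no proof of it at all --- it is used as a black box in the proof of Theorem~\ref{thm: GD NGD main result}. So there is no in-paper argument to compare against; what can be judged is whether your reconstruction stands on its own. Your treatment of (i) and (ii) is correct and complete: the identity $\sum_{i\in\cI}\gamma^\star\alpha_i\,\cP_\perp(\bz_i)=\bzero$ with strictly positive weights gives coercivity of $\cL_\perp$ on $V={\rm span}\{\cP_\perp(\bx_i):i\in\cI\}$, the spanning condition in Assumption~\ref{ass: non-degenerate data} makes $\nabla^2\cL_\perp$ positive definite on $V$, and the uniform lower bound $e^{-\langle\bv,\bu_i\rangle}\ge e^{-M}$ on a ball gives the strong-convexity modulus. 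These are exactly the static facts the paper needs and your derivation of them is sound.

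Part (iii) is where your proposal stops short, and you say so yourself. The decomposition of the orthogonal update as an inexact normalized gradient step on $\cL_\perp$, the convexity inequality $\langle\nabla\cL_\perp(\bv)/\cL_\perp(\bv),\bv-\bv^\star\rangle\ge 1-\cL_\perp(\bv^\star)/\cL_\perp(\bv)$, and the Lyapunov step outside a sublevel set are all correct \emph{conditional on} $\norm{\cP_\perp(\bw(t))}$ already lying in a fixed bounded region where $\norm{\beps(\bw(t))}$ is small --- and that is precisely the conclusion being sought. Your proposed bootstrap (showing $e^{\gamma^\star\langle\bw(t),\bw^\star\rangle}\cL(\bw(t))$ stays bounded, which would indeed force $\langle\cP_\perp(\bw(t)),\bu_i\rangle$ to be bounded below for $i\in\cI$ and hence, via $\sum_i\alpha_i\bu_i=\bzero$ and the spanning condition, bound $\norm{\cP_\perp(\bw(t))}$) is a reasonable route, but establishing the step-wise inequality $\cL(\bw(t+1))/\cL(\bw(t))\le e^{-\gamma^\star(\langle\bw(t+1)-\bw(t),\bw^\star\rangle)}$ is nontrivial for NGD's aggressive step and is essentially the primal--dual / mirror-descent content of \citet{ji2021characterizing}. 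Deferring that to the reference is defensible here --- the paper itself does exactly that --- but as a standalone argument your (iii) is a program, not a proof.
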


\begin{theorem}[Theorem 4,~\citep{soudry2018implicit}]\label{thm: refined error result, soudry}
Under Assumption~\ref{ass: linearly separable} and~\ref{ass: non-degenerate data}, let $\bw(t)$ be trained by GD~\eqref{equ: GD} with $\eta\leq1$ starting from $\bw(0)=\bzero$. 
If we denote $\brho(t)=\bw(t)-\bw^\star\log t$, then 
\begin{align*}
    \lim_{t\to+\infty}\brho(t)=\tilde{\bw},
\end{align*}
where $\tilde{\bw}$ is the solution to the equations: $\eta\exp\left(-\<\tilde{\bw},\bx_i y_i\>\right)=\alpha_i,i\in\cI.$
    
\end{theorem}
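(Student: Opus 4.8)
The plan is to follow the refined asymptotic analysis of \citet{soudry2018implicit}, adapted to the present notation, and carry it out in three stages: coarse (directional) convergence, boundedness of the residual, and convergence of the residual. Throughout write $\bz_i:=\bx_iy_i$. First I would invoke what is already available: by Lemma~\ref{lemma: GD directional convergence} and Theorem~\ref{thm: NGD upper bound ji}, GD with $\eta\le 1$ from $\bw(0)=\bzero$ satisfies $\bw(t)/\norm{\bw(t)}\to\bw^\star$, $\norm{\bw(t)}=\Theta(\log t)$, and $\bw(t)\in{\rm span}\{\bz_i:i\in[n]\}={\rm span}\{\bz_i:i\in\cI\}$ (using the second part of Assumption~\ref{ass: non-degenerate data}). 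An energy estimate exploiting the exponential tail of $\ell$ then upgrades this to $\cL(\bw(t))=\Theta(1/t)$ and $\<\bw(t),\bw^\star\>=\tfrac{1}{\gamma^\star}\log t+\cO(1)$, so that $\<\bw(t),\bz_i\>=\log t+\cO(1)$ for $i\in\cI$ while $\<\bw(t),\bz_i\>=(1+\delta_i)\log t+\cO(1)$ for $i\notin\cI$ with $\delta_i:=\<\bw^\star,\bz_i\>/\gamma^\star-1>0$; hence the normalized gradient asymptotically concentrates on the support vectors.

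Next I would set $\brho(t):=\bw(t)-\tfrac{\log t}{\gamma^\star}\bw^\star$ (this is the residual in the statement, up to the normalization of $\bw^\star$) and show $\norm{\brho(t)}=\cO(1)$. Its component along $\bw^\star$ is bounded by the previous step, and its perpendicular component $\cP_{\perp}(\brho(t))=\cP_{\perp}(\bw(t))$ is bounded because, after projecting out $\bw^\star$, the GD dynamics becomes a perturbed gradient descent on $\cL_{\perp}$, which is strongly convex on bounded sets and has a unique minimizer — this is the content of Theorem~\ref{thm: orthogonal dynamics, ji}, which I would re-run for GD in place of NGD. Consequently $\brho(t)$ remains in a fixed bounded subset of the finite-dimensional space ${\rm span}\{\bz_i:i\in\cI\}$, on which all smoothness and strong-convexity constants used below are uniform.

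For the convergence itself I would expand a single GD step. Using $\<\bw(t),\bz_i\>=\log t+\<\brho(t),\bz_i\>$ for $i\in\cI$, the polynomially faster decay of the non-support terms, and $\log(t+1)-\log t=\tfrac1t+\cO(t^{-2})$, the step rewrites as
\begin{equation*}
\brho(t+1)-\brho(t)=-\frac1t\,\nabla G(\brho(t))+\be(t),\qquad \textstyle\sum_t\norm{\be(t)}<\infty,
\end{equation*}
where $G(\brho):=\tfrac{\eta}{n}\sum_{i\in\cI}e^{-\<\brho,\bz_i\>}+\tfrac{1}{\gamma^\star}\<\brho,\bw^\star\>$ and $\be(t)=\cO(t^{-1-\delta})$ with $\delta:=\min_{i\notin\cI}\delta_i>0$. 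The Hessian of $G$ equals $\tfrac{\eta}{n}\sum_{i\in\cI}e^{-\<\brho,\bz_i\>}\bz_i\bz_i^\top$, positive definite on ${\rm span}\{\bz_i:i\in\cI\}$ by Assumption~\ref{ass: non-degenerate data}, and $G$ is coercive there since $\bw^\star=\sum_{i\in\cI}\alpha_i\bz_i$ with all $\alpha_i>0$; hence $G$ has a unique minimizer $\tilde\bw$, and $\nabla G(\tilde\bw)=\bzero$ together with the dual representation of $\bw^\star$ forces $\tfrac{\eta}{n}e^{-\<\tilde\bw,\bz_i\>}=\alpha_i/\gamma^\star$, i.e.\ $\tilde\bw$ solves the asserted system (modulo the normalization of the $\alpha_i$). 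Reading the display as gradient descent on the $\mu$-strongly convex $G$ with step size $1/t$ plus a summable perturbation, I would run a discrete Gr\"onwall argument on $V(t):=G(\brho(t))-G(\tilde\bw)\ge 0$: the descent lemma and the Polyak--{\L}ojasiewicz inequality $\norm{\nabla G(\brho(t))}^2\ge 2\mu V(t)$ give $V(t+1)\le(1-\tfrac{c}{t})V(t)+\cO(t^{-1-\delta})$, and since $\prod_{s\le t}(1-c/s)\to 0$ and the perturbations are summable we get $V(t)\to 0$, hence $\brho(t)\to\tilde\bw$, which is the claimed limit.

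I expect the main obstacle to be the boundedness step together with the error accounting in the last step: one must keep $\brho(t)$ inside a fixed bounded region despite the $\cO(1/t)$-size steps — so that the constants of $G$ are uniform along the trajectory — and one must check that \emph{every} deviation from the idealized $1/t$-gradient flow (the non-support-vector contributions, the discretization of $\log t$, and the replacement of $\<\bw(t),\bz_i\>$ by $\log t+\<\brho(t),\bz_i\>$) is summable in $t$ rather than merely $o(1)$, since a non-summable $o(1/t)$ drift could still prevent convergence.
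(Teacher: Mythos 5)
This statement is not proved in the paper at all: it is quoted as Theorem~4 of \citet{soudry2018implicit} and placed among the external ``useful lemmas'' of Appendix~C, so there is no in-paper argument to compare yours against. Judged as a reconstruction of the cited proof, your plan is essentially the standard one and the core mechanism is right: after the coarse estimates, the residual dynamics is perturbed gradient descent with effective step size $1/t$ on the strongly convex, coercive potential $G$, the perturbations (non-support-vector terms and the discretization of $\log t$) are summable, and the limit is the unique minimizer of $G$. Two points deserve attention. First, extracting the componentwise equations $\eta e^{-\langle\tilde\bw,\bz_i\rangle}=\alpha_i$ from $\nabla G(\tilde\bw)=\bzero$ requires the support vectors $\{\bz_i\}_{i\in\cI}$ to be linearly independent; otherwise the dual coefficients $\alpha_i$ are not unique and matching coefficients is not ``forced,'' so the system need not be solvable for an arbitrary choice of $\alpha_i$. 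This linear-independence condition is an extra non-degeneracy hypothesis in Soudry et al.'s Theorem~4 that is not contained in Assumption~\ref{ass: non-degenerate data}, and your argument uses it silently. Second, the boundedness of $\brho(t)$ --- which you correctly single out as the main obstacle, since the smoothness and strong-convexity constants of $G$ degenerate on unbounded sets --- is only gestured at by proposing to ``re-run'' Theorem~\ref{thm: orthogonal dynamics, ji} for GD; that result is stated for NGD, and adapting it (or reproducing Soudry et al.'s separate boundedness lemma) is a genuine piece of work rather than a remark, so as written this is the one real gap. You are also right that the paper's statement carries a normalization slip: since $\bw^\star$ is here a unit vector, the growth term should be $(\log t/\gamma^\star)\,\bw^\star$, matching the unnormalized max-margin vector of \citet{soudry2018implicit}, and the constants in the fixed-point equations shift accordingly.
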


\vspace{1.cm}
\section{Useful Inequalities}

\begin{lemma}\label{lemma: basic inequ: sqrt(1+x)}
(i) For any $x\geq0$, $\sqrt{1+x}\leq1+\frac{x}{2}$;
(ii) For any $0\leq x\leq 1/3$, $\sqrt{1+x}\geq1+\frac{x}{3}$.
    
\end{lemma}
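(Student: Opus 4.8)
\textbf{Proof proposal for Lemma~\ref{lemma: basic inequ: sqrt(1+x)}.}
The plan is to prove both inequalities by the elementary device of squaring, which is valid because in each case both sides are nonnegative. For part (i), I would first observe that $1+\frac{x}{2}\geq 1>0$ for all $x\geq 0$, so $\sqrt{1+x}\leq 1+\frac{x}{2}$ is equivalent to $1+x\leq\bigl(1+\frac{x}{2}\bigr)^2$. Expanding the right-hand side gives $1+x+\frac{x^2}{4}$, so the claim reduces to $0\leq\frac{x^2}{4}$, which is obvious.

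For part (ii), I would note that for $0\leq x\leq 1/3$ we have $1+\frac{x}{3}>0$, so $\sqrt{1+x}\geq 1+\frac{x}{3}$ is equivalent to $1+x\geq\bigl(1+\frac{x}{3}\bigr)^2=1+\frac{2x}{3}+\frac{x^2}{9}$. This rearranges to $\frac{x}{3}\geq\frac{x^2}{9}$, i.e. $x(3-x)\geq 0$ after multiplying by $9$; since $0\leq x\leq 1/3<3$, both factors are nonnegative and the inequality holds.

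There is essentially no obstacle here: the only point requiring a word of care is the justification that squaring preserves the inequality, which is why I would explicitly record the sign of $1+\frac{x}{2}$ and of $1+\frac{x}{3}$ on the relevant ranges before squaring. (An alternative, slightly slicker route for both parts is concavity of $t\mapsto\sqrt{t}$ together with the tangent-line / secant-line bounds, but the direct algebraic argument above is shorter and entirely self-contained.)
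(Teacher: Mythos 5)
Your proof is correct and complete: both squaring steps are properly justified by noting the positivity of the right-hand sides, and the resulting algebraic inequalities are verified (in fact your part~(ii) argument shows the bound holds for all $0\leq x\leq 3$, so the hypothesis $x\leq 1/3$ is more than enough). The paper states this lemma without proof, treating it as elementary, so your argument supplies exactly the standard verification that was omitted.
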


\begin{lemma}\label{lemma: increase 3 data NGD}
    For a fixed $\gamma\in(0,1)$, consider the function $h(x)=x+2(1-\gamma^2)\left(\frac{2}{1+e^x}-1\right),x\in\bbR$. Then $h'(x)>0$ holds for any $x\in\bbR$.
\end{lemma}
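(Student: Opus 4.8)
\textbf{Proof proposal for Lemma~\ref{lemma: increase 3 data NGD}.}
The plan is to compute $h'(x)$ explicitly and then bound the only negative term uniformly in $x$. First I would differentiate: since $\frac{d}{dx}\frac{1}{1+e^x} = -\frac{e^x}{(1+e^x)^2}$, we get
\begin{equation*}
h'(x) = 1 - \frac{4(1-\gamma^2)e^x}{(1+e^x)^2}.
\end{equation*}
So the whole question reduces to showing that the subtracted term never reaches $1$.

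The key step is the elementary inequality $\frac{e^x}{(1+e^x)^2}\le\frac14$ for all $x\in\bbR$, which follows from AM--GM (equivalently, $(1+e^x)^2 - 4e^x = (1-e^x)^2\ge 0$), with equality only at $x=0$. Substituting this bound,
\begin{equation*}
h'(x) \ge 1 - 4(1-\gamma^2)\cdot\frac14 = 1-(1-\gamma^2) = \gamma^2 > 0,
\end{equation*}
using $\gamma\in(0,1)$ so that $1-\gamma^2\in(0,1)$ and $\gamma^2>0$. This gives the claim, in fact with the uniform lower bound $h'(x)\ge\gamma^2$, which is slightly stronger than stated and may be convenient if the surrounding argument needs quantitative control.

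There is essentially no obstacle here: the only thing to be careful about is keeping the sign bookkeeping straight when differentiating the logistic-type term, and invoking $(1-e^x)^2\ge0$ rather than calculus to get the $\tfrac14$ bound (which also cleanly identifies the equality case $x=0$, matching the fact that $h'$ is minimized there). If a fully self-contained derivation of $\frac{e^x}{(1+e^x)^2}\le\frac14$ is preferred, one can instead substitute $u=e^x>0$ and minimize $(1+u)^2/u = u + 2 + 1/u$, whose minimum over $u>0$ is $4$ at $u=1$; either route is a one-line computation.
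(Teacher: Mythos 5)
Your proposal is correct and matches the paper's proof essentially verbatim: both compute $h'(x)=1-\frac{4(1-\gamma^2)e^x}{(1+e^x)^2}$ and apply the AM--GM bound $(1+e^x)^2\ge 4e^x$ to conclude $h'(x)\ge\gamma^2>0$. No issues.
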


\begin{proof}[Proof of Lemma~\ref{lemma: increase 3 data NGD}]
\begin{align*}
    h'(x)=1-\frac{4(1-\gamma^2)e^x}{(1+e^x)^2}\geq1-\frac{4(1-\gamma^2)e^x}{(2e^{x/2})^2}=\gamma^2>0,\ \forall x\in\bbR.
\end{align*}
\end{proof}



\begin{lemma}[\cite{bubeck2015convex}]
\label{lemma: convex optimization}
Let the function $f(\cdot):\cS\to\bbR$ be both $L$-smooth and $\mu$-strongly convex on some convex set $\cS\subset\bbR^d$ (in terms of $\ell_2$ norm).
If the minimizer $\bw^\star\in{\rm int}(\cS)$ with $f^\star=f(\bw^\star)$, then for any $\bw,\bw_1,\bw_2\in\cS$, it holds that:
\begin{align*}
    &f(\bw_1)\leq f(\bw_2)+\<\nabla f(\bw_2),\bw_1-\bw_2\>+\frac{L}{2}\norm{\bw_1-\bw_2}^2;
    \\
     &f(\bw_1)\geq f(\bw_2)+\<\nabla f(\bw_2),\bw_1-\bw_2\>+\frac{\mu}{2}\norm{\bw_1-\bw_2}^2;
    \\&
    f(\bw)-f^\star\geq\frac{\mu}{2}\norm{\bw-\bw^\star};
    \\&
    \norm{\nabla f(\bw)}^2\geq2\mu(f(\bw)-f^\star).
\end{align*}
\end{lemma}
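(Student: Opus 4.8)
The plan is to establish the four displayed inequalities one at a time, since each is a classical consequence of the two regularity hypotheses and nothing deeper is needed. I would fix the working conventions that \emph{$L$-smoothness} of $f$ on $\cS$ means $\nabla f$ is $L$-Lipschitz there, and that \emph{$\mu$-strong convexity} means $g(\bw):=f(\bw)-\frac{\mu}{2}\norm{\bw}^2$ is convex on $\cS$ (these are the standard first-order characterizations). For the quadratic \emph{upper} bound, I would parametrize the segment $\bw(s)=\bw_2+s(\bw_1-\bw_2)$, $s\in[0,1]$, which stays in $\cS$ by convexity, write $f(\bw_1)-f(\bw_2)=\int_0^1\<\nabla f(\bw(s)),\bw_1-\bw_2\>\,ds$, subtract $\<\nabla f(\bw_2),\bw_1-\bw_2\>$, and bound the resulting integrand by $\norm{\nabla f(\bw(s))-\nabla f(\bw_2)}\,\norm{\bw_1-\bw_2}\le Ls\norm{\bw_1-\bw_2}^2$; integrating in $s$ produces the factor $\tfrac12$.

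For the quadratic \emph{lower} bound, convexity of $g$ gives $g(\bw_1)\ge g(\bw_2)+\<\nabla g(\bw_2),\bw_1-\bw_2\>$; substituting $\nabla g(\bw)=\nabla f(\bw)-\mu\bw$ and collecting the quadratic terms rearranges exactly to the stated inequality (if instead one takes this inequality itself as the definition of strong convexity, the step is vacuous). For the quadratic growth bound $f(\bw)-f^\star\ge\frac{\mu}{2}\norm{\bw-\bw^\star}$, I would specialize the lower bound to $\bw_1=\bw$, $\bw_2=\bw^\star$ and invoke first-order optimality $\nabla f(\bw^\star)=\bzero$ (valid since $\bw^\star\in\mathrm{int}(\cS)$); this actually yields $f(\bw)-f^\star\ge\frac{\mu}{2}\norm{\bw-\bw^\star}^2$, and I would note that the display is best read with this square — it is only ever applied with $\norm{\bw-\bw^\star}$ controlled, so either form serves the later convergence arguments.

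For the gradient-domination (PL) inequality, I would apply the quadratic lower bound with $\bw_1=\bw^\star$ and $\bw_2=\bw$ arbitrary, getting $f^\star\ge f(\bw)+\<\nabla f(\bw),\bw^\star-\bw\>+\frac{\mu}{2}\norm{\bw^\star-\bw}^2$, then complete the square in the displacement $\bd:=\bw^\star-\bw$ via $\<\nabla f(\bw),\bd\>+\frac{\mu}{2}\norm{\bd}^2\ge-\frac{1}{2\mu}\norm{\nabla f(\bw)}^2$ to obtain $f^\star\ge f(\bw)-\frac{1}{2\mu}\norm{\nabla f(\bw)}^2$, which rearranges to $\norm{\nabla f(\bw)}^2\ge 2\mu(f(\bw)-f^\star)$. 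Since this is a textbook fact (as the citation to \citep{bubeck2015convex} signals), there is no real obstacle; the only points requiring care are (i) choosing which of the equivalent characterizations of smoothness and strong convexity to treat as primitive, as that dictates which items become trivial, and (ii) ensuring every auxiliary point used stays inside the convex set $\cS$ — which is precisely why the PL step is routed through the genuine minimizer $\bw^\star$ rather than through the unconstrained minimizer $\bw-\frac{1}{\mu}\nabla f(\bw)$ of the quadratic model, as the latter could leave $\cS$.
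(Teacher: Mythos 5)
Your proposal is correct. The paper gives no proof of this lemma at all --- it is quoted verbatim from \citet{bubeck2015convex} --- and your argument is exactly the standard textbook derivation: the descent lemma via integrating along the segment, the strong-convexity lower bound via convexity of $f-\tfrac{\mu}{2}\norm{\cdot}^2$, quadratic growth by plugging $\bw_2=\bw^\star$ and using $\nabla f(\bw^\star)=\bzero$, and the PL inequality by completing the square in the displacement. Two of your side remarks are worth keeping: the third display is indeed missing a square (it should read $f(\bw)-f^\star\geq\tfrac{\mu}{2}\norm{\bw-\bw^\star}^2$; the paper later applies it in the unsquared form, but the argument there survives with either version up to constants), and routing the PL step through the feasible point $\bw^\star$ rather than the unconstrained minimizer of the quadratic model is the right precaution on a constrained domain $\cS$.
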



\vspace{1.cm}
\section{Experimental Details}\label{appendix: experiment}


\subsection{Experimental details on two synthetic datasets}
\label{appendix: experimental details synthetic}

\begin{itemize}
    \item \underline{Dataset I}. We set $\gamma^\star=\sin(\pi/100)$ and $n=100$. Then we generate the dataset by setting $\bx_1=(\gamma^\star,\sqrt{1-{\gamma^\star}^2})$, $\bx_2=(-\gamma^\star,\sqrt{1-{\gamma^\star}^2})$, and generate $\bx_i\sim{\rm Unif}\left(\bbS^1\cap\{\bx:|x_1|\geq\gamma^\star\}\right)$ randomly for $i\geq3$. As for the label, we set $y_i=\sgn(x_1)$.
    \item \underline{Dataset II}. We set $\gamma^\star=\sin(\pi/100)$ and $n=100$. Then we generate the dataset by setting $\bx_1=(\gamma^\star,\sqrt{1-{\gamma^\star}^2})$, $\bx_2=(-\gamma^\star,\sqrt{1-{\gamma^\star}^2})$, and generate $\bx_i\sim{\rm Unif}\left(\bbB(0,1)\cap\{\bx:|x_1|\geq\gamma^\star\}\right)$ randomly for $i\geq3$. As for the label, we also set $y_i=\sgn(x_1)$.
    \item \underline{PRGD}. We follow the guidelines provided in Theorem~\ref{thm: PRGD main thm}. 
    For the Warm-up phase, we use GD as the Warm-up Phase for 1000 iterations, and then turn it to PRGD. 
    For the second Phase, we employ PRGD(exp) with hyperparameters $T_{k+1}-T_k\equiv5$, $R_k=R_0\times 1.2^k$. 
    To illustrate the role of the progressive radius, we also examine PRGD(poly) configured with $T_{k+1}-T_k\equiv5$, $R_k=R_0\times k^{1.2}$, where the progressive radius increases polynomially. 
\end{itemize}

The numerical results and comparison of different algorithms are shown in Figure~\ref{fig: synthetic data (detailed)} and Table~\ref{tab: comparison time}.

\begin{figure}[!hb]
    \centering
    \subfloat[Dataset I]{
    \includegraphics[width=0.24\textwidth]{figures/out_data.pdf}
    \includegraphics[width=0.22\textwidth]{figures/out_rate_1.pdf}
    \includegraphics[width=0.22\textwidth]{figures/out_rate_2.pdf}
    }
    %
    \\
    \subfloat[Dataset II]{
     \includegraphics[width=0.24\textwidth]{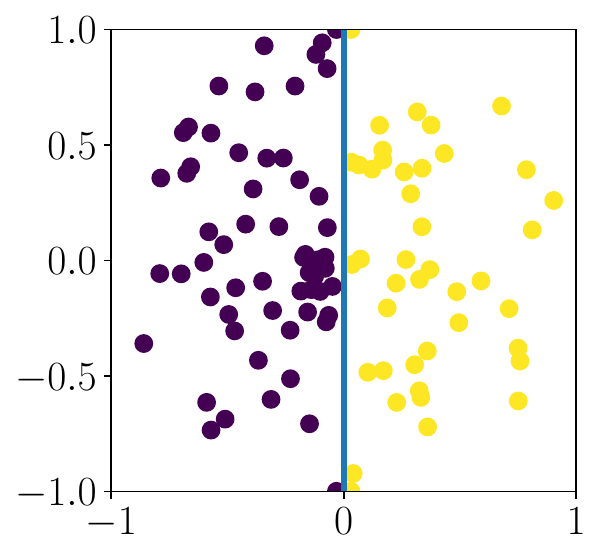}
    \includegraphics[width=0.22\textwidth]{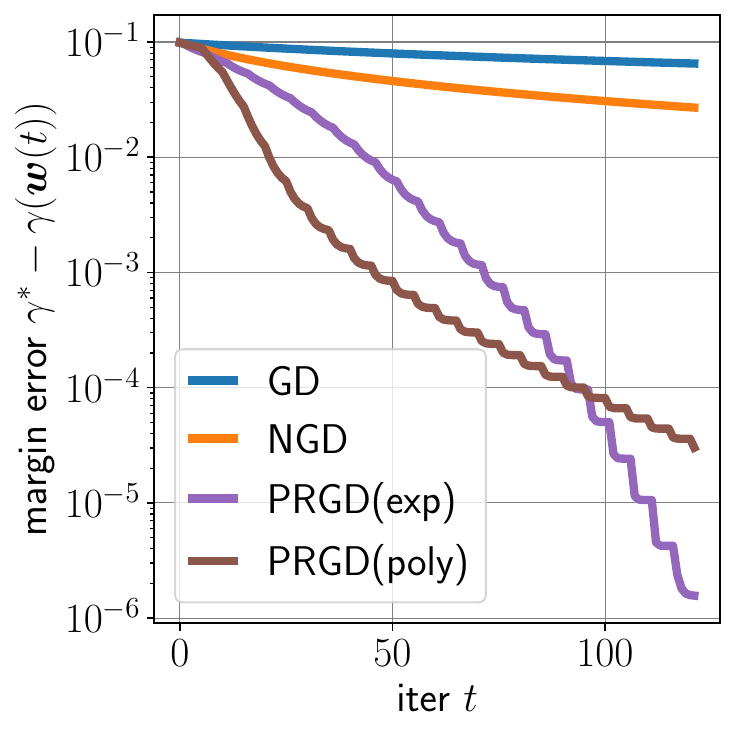}
    \includegraphics[width=0.22\textwidth]{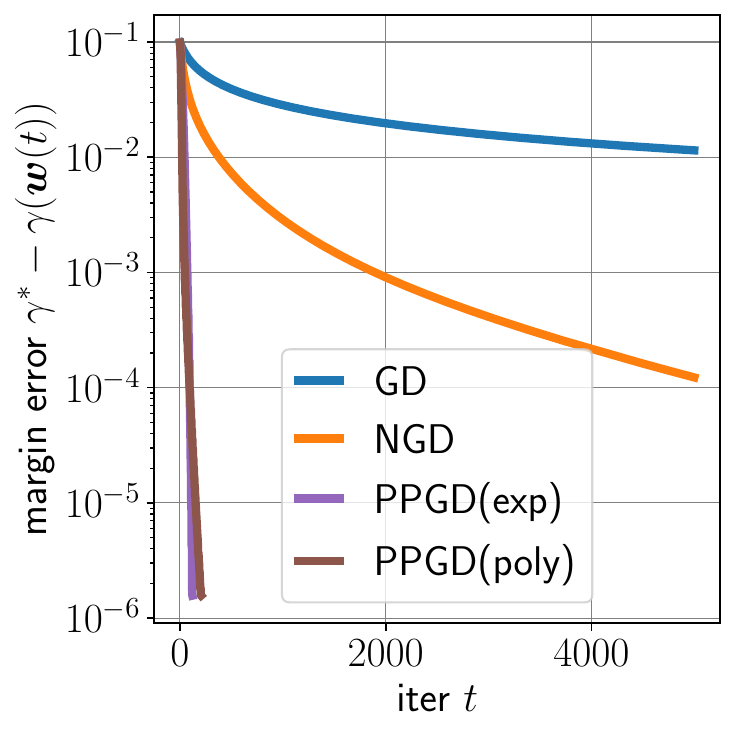}
    }
    \caption{\small (The detailed version of Figure~\ref{fig: synthetic data}) Comparison of margin Maximization rates of different algorithms on two synthetic datasets. (left) The visualization of two 2d synthetic dataset. The yellow points represent the data with label 1, while the purple points corresponds to the data with label 1; 
    (middle)(right) The comparison of margin maximization rates of different algorithms on the corresponding dataset at small and large time scales, respective}
    \label{fig: synthetic data (detailed)}
\end{figure}

\begin{table}[!ht]
     \caption{The number of iterations needed to achieve the same margin error on the sythetic datasets.}
    \centering
    \begin{tabular}{c|c|c|c|c}
    \hline\hline
    & GD & NGD & {\bf PRGD(exp)} & PRGD(poly) \\ \hline
    margin error \texttt{1e-6}, Dataset I  & $+\infty$ & 12,508 & {\bf 106} & 142 \\ \hline
    margin error \texttt{1e-4}, Dataset II & $+\infty$ & 5,027 & {\bf 94} & 95 \\ 
    \hline\hline
    \end{tabular}
    \label{tab: comparison time}
\end{table}

\newpage
\subsection{Experiments Details for VGG on CIFAR-10}
\label{appendix: experimental details VGG}


Following~\citet{lyu2019gradient}, we examine our algorithm for the homogenized VGG-16.
We explored the performance of the VGG-based neural network for image classification tasks on CIFAR-10. 

Our experimental setup involved a modified VGG architecture implemented in PyTorch. 
The homogeneity requires that the bis term exists at most in the first layer~\citep{lyu2019gradient}.
Specifically, the network's architecture comprised multiple convolutional layers without the bias terms, followed by ReLU activations and max pooling. 
The classifier section consisted of three fully connected layers with ReLU activations and dropout, excluding bias in linear transformations.

The network was trained using a batch size of 64, with the option to enable CUDA for GPU acceleration. 
Weight initialization was conducted using Kaiming normalization for convolutional layers and a uniform distribution for linear layers. The model was trained and evaluated using a custom DataLoader for both the training and test datasets.
We used a base learning rate of \(1 \times 10^{-3}\), a momentum of 0.9, and a weight decay of \(5 \times 10^{-4}\). 
For the loss-based learning rate used in NGD and PRGD, we use the strategy in~\citet{lyu2019gradient}.
Additionally, for $\norm{\btheta}$ in the PRGD regime, we use the $\ell_2$ norm of the parameters of all layers. And we configured PRGD with \(T_{k} = 3000 \times \left(2 + \frac{k}{3000}\right)^{3}\), \(R_{k} = \min\left(R_0 \times \left(2 + \frac{k}{3000}\right)^{0.2}, 1000\right)\).


\end{document}